\newcommand{\BlackBox}{\rule{1.5ex}{1.5ex}}  
\newenvironment{proof}{\par\noindent{\bf Proof\ }}{\hfill\BlackBox\\[2mm]}
\newtheorem{theorem}{Theorem}
\newtheorem{lemma}[theorem]{Lemma} 
\newtheorem{proposition}[theorem]{Proposition} 
\newtheorem{corollary}[theorem]{Corollary}
\newtheorem{definition}[theorem]{Definition}
\newcommand{\ra}[1]{\renewcommand{\arraystretch}{#1}}
\newcommand{\R}{\mathbb{R}}
\newcommand{\reals}{\mathbb{R}}
\newcommand{\G}{\mathrm{G}}
\newcommand{\E}{\mathbb{E}}
\newcommand{\Sb}{\mathbb{S}}
\newcommand{\diam}{\mathrm{diam}}
\newcommand{\OO}{\mathcal{O}}
\newcommand{\mb}{\mathbf}
\newcommand{\latop}[2]{\genfrac{}{}{0pt}{}{#1}{#2}}
\renewcommand{\dim}[1]{\mathrm{dim}(#1)}
\DeclareMathOperator{\Span}{Sp}
\DeclareMathOperator{\dist}{dist}
\DeclareMathOperator{\argmin}{\mathrm{argmin}}
\renewcommand{\E}{\operatorname{\mathbb{E}}}
\renewcommand{\P}{\operatorname{\mathbb{P}}}
\begin{document}

\title{Riemannian Multi-Manifold Modeling}

\author{Xu~Wang$^1$, Konstantinos~Slavakis$^2$, and Gilad~Lerman$^1$\\\\
       $^1$Dept.\ of Mathematics, University of Minnesota, Minneapolis, MN
       55455, USA\\ 
       \{wang1591,lerman\}@umn.edu\\ 
       $^2$Dept.\ of Electrical \& Computer Eng.\ and Digital Technology Center,\\
       University of Minnesota, Minneapolis, MN 55455, USA\\
       kslavaki@umn.edu
       }

\date{} 

\maketitle

\begin{abstract}
This paper advocates a novel framework for segmenting a dataset in a Riemannian manifold $M$ into clusters lying around low-dimensional submanifolds of $M$. Important examples of $M$, for which the proposed clustering algorithm is computationally efficient, are the sphere, the set of positive definite matrices, and the Grassmannian.
The clustering problem with these examples of $M$ is already useful for numerous application domains such as action identification in video sequences, dynamic texture clustering, brain fiber segmentation in medical imaging, and clustering of deformed images. The proposed clustering algorithm constructs a data-affinity matrix by thoroughly exploiting
the intrinsic geometry and then applies spectral clustering. The intrinsic local geometry is encoded by
local sparse coding
and more importantly by directional information of local tangent spaces and geodesics.
Theoretical guarantees are established for a simplified variant of the algorithm even when the clusters intersect. To avoid complication, these guarantees assume that the underlying submanifolds are geodesic.
Extensive validation on synthetic and real data demonstrates the resiliency of the proposed method against deviations from the theoretical model
as well as its superior performance over state-of-the-art techniques.
\end{abstract}


\section{Introduction}
Many modern data sets are of moderate or high dimension, but manifest intrinsically low-dimensional structures. A natural quantitative framework for studying such common data sets is multi-manifold modeling (MMM)
or its special case of hybrid-linear modeling (HLM). In this MMM framework a given dataset is modeled as a union of submanifolds (whereas HLM considers union of subspaces).
When proposing a valid algorithm for MMM,
one assumes an underlying dataset that can be modeled as mixture of submanifolds and tries to prove under some conditions that the proposed algorithm
can cluster the dataset according to the submanifolds.
This framework has been extensively studied and applied for datasets embedded in the Euclidean space or the sphere~\citep{higher-order11, LocalPCA, centingul_vidal09,
  ElhamifarV_nips11, Kushnir06multiscale, icml2013_ho13, Lui12,   wang2011spectral}.

Nevertheless, there is an overwhelming number of application domains, where information is extracted from datasets that lie on Riemannian
manifolds, such as the Grassmannian, the sphere, the orthogonal group, or the manifold
of symmetric positive (semi)definite [P(S)D] matrices. For example,
auto-regressive moving average (ARMA) models are utilized to extract low-rank
linear subspaces (points on the Grassmannian) for identifying spatio-temporal
dynamics in video sequences~\citep{Turaga+11}. Similarly, convolving patches of
images by Gabor filters yields covariance matrices (points on the PD manifold)
that can capture effectively texture patterns in images~\citep{Tou09}.
Nevertheless, current MMM strategies are not sufficiently accurate for handling data in more general Riemannian spaces.

The purpose of this paper is to develop theory and algorithms
for the MMM problem in more general Riemannian spaces that are relevant to important applications.



\paragraph{\bf Related Work.}

Recent advances in parsimonious data representations and their important
implications in dimensionality reduction techniques have effected the
development of non-standard spectral-clustering schemes that result in
state-of-the-art results in modern applications~\citep{LocalPCA,
  spectral_applied, Elhamifar09sparsesubspace, GOH_VIDAL08, zhu08multi,
  dict_learning_grassman13, lrr_long, LBF_journal12}. Such schemes rely on the
assumption that data exhibit low-dimensional structures, such as unions of
low-dimensional linear subspaces or submanifolds embedded in Euclidean spaces.

Several algorithms for clustering on manifolds are generalizations of well-known schemes
developed originally for Euclidean spaces. For example,
\citet{And06grassmannclustering} extended the classical $K$-means algorithm from
Euclidean spaces to Grassmannians, and illustrated an application to
nonnegative matrix factorization. \citet{1541234} capitalized on the Riemannian
distance of $\text{SO}(3)$ to design an efficient mean-shift (MS) algorithm for
multiple 3D rigid motion estimation. \citet{1640882}, as well as
\citet{centingul_vidal09}, extended further the MS algorithm to general analytic
manifolds including Grassmannians, Stiefel manifolds, and matrix Lie
groups. \citet{5771473} showed promising results by using the geodesic distance
of product manifolds in clustering of human expressions, gestures, and actions
in videos. \citet{rathi07segmenting} solved the image segmentation problem,
after recasting it as a matrix clustering problem, via probability distributions
on symmetric PD matrices. \citet{GOH_VIDAL08} extended spectral clustering and
nonlinear dimensionality reduction techniques to Riemannian manifolds. These
previous works are quite successful when the convex hulls of individual clusters
are well-separated, but they often fail when clusters intersect or are closely
located.

HLM and MMM accommodate low-dimensional data structures by unions of subspaces or
submanifolds, respectively, but are restricted to manifolds embedded in
either a Euclidean space or the sphere.   Many strategies have been suggested for solving the
HLM problem, known also as subspace clustering. These strategies include methods
inspired by energy minimization~\citep{Bradley00kplanes, Ho03, Ma07Compression,
  gdm14, Tseng00nearest, MKF_workshop09, LBF_cvpr10}, algebraic
methods~\citep{Boult91factorization-basedsegmentation, Costeira98, Kanatani01,
  Kanatani02, Ma07, Ozay10, Vidal05}, statistical
methods~\citep{Tipping99mixtures, Yang06Robust}, and spectral-type methods with
various types of affinities representing subspace-related
information~\citep{spectral_applied, ssc_elhamifar13, lrr_long, Yan06LSA,
  LBF_journal12}. Recent tutorial papers on HLM are
\citet{SubspaceClustering_Vidal} and \citet{Aldroubi_review_13}. Some
theoretical guarantees for particular HLM algorithms appear
in~\citep{spectral_theory, lp_recovery_part2_11, soltan_candes12,
  solton_elhami_candes14}.  There are fewer strategies for the MMM problem, which is also known as manifold clustering. They include higher-order spectral clustering~\citep{higher-order11},
spectral methods based on local PCA~\citep{LocalPCA, zhu08multi,
  Gong2012, Kushnir06multiscale, wang2011spectral}, sparse-coding-based spectral
clustering in a Euclidean space~\citep{ElhamifarV_nips11} and its modification to the sphere by~\citet{6619442} (the sparse coding encodes local subspace
approximation), energy minimization strategies~\citep{energy07}, methods based on manifold learning algorithms~\citep{polito2001grouping, Souvenir05}, and methods based on clustering dimension or local density~\citep{Barbará00usingthe,
  Gionis:2005:DIC:1081870.1081880, Haro06}. Notwithstanding, only higher-order
spectral clustering and spectral local PCA are theoretically
guaranteed~\citep{higher-order11, LocalPCA}.

In a different context, \citet{rahman05} suggested
multiscale strategies for signals taking values in Riemannian manifolds, in
particular, the sphere, the orthogonal group, the Grassmannian, and the PD
manifold. Even though \citet{rahman05} addresses a completely different problem,
its basic principle is similar in spirit to ours and can be described as
follows. Local analysis is performed in the tangent spaces, where the
exponential and logarithm maps are used to transform data between local
manifold neighborhoods and local tangent space neighborhoods. Information from
all local neighborhoods is then integrated to infer global properties.

\paragraph{\bf Contributions.} Despite the popularity of manifold learning, the
associated literature lacks generic schemes for clustering low-dimensional data
embedded in non-Euclidean spaces. Furthermore, even in the Euclidean setting only few algorithms
for MMM or HLM are theoretically guaranteed. To this end, this paper aims at filling this gap and
provides an MMM approach in non-Euclidean setting with some theoretical guarantees
even when the clusters intersect.
In order to avoid nontrivial theoretical obstacles, the theory assumes that the underlying submanifolds
are geodesic and refer to it as \textit{multi-geodesic modeling} (MGM).
Clearly, this modeling paradigm is a direct generalization of HLM from Euclidean spaces
to Riemannian manifolds.
A more practical and robust variant of
the theoretical algorithm is also developed, and its superior performance over
state-of-the-art clustering techniques is exhibited by extensive validation on
synthetic and real datasets. We remark that in practice we require that the logarithm map of $M$
can be computed efficiently and we show that this assumption does not restrict the wide applicability of this work.

We believe that it is possible to extend the theoretical foundations of this work to deal
with general submanifolds by using local geodesic submanifolds (in analogy
to~\citet{LocalPCA}). However, this will significantly increase the complexity of our
proof, which is already not simple.
Nevertheless, the proposed method directly applies to the more general setting (without theoretical guarantees) since geodesics are only used in local neighborhoods and not globally.
Furthermore, our numerical experiments show that the proposed method works well in real practical scenarios that deviate from the theoretical model.

On a more technical level, the paper is distinguished from previous works in
multi-manifold modeling in its careful incorporation of ``directional
information,'' e.g., local tangent spaces and geodesics. This is done for two
purposes: (i) To distinguish submanifolds at intersections; (ii) to filter out
neighboring points that belong to clusters different than the cluster of the
query point. In such a way, the proposed algorithm allows for neighborhoods to
include points from different clusters, while previous multi-manifold algorithms
(e.g., \citet{ElhamifarV_nips11}) need careful choice of neighborhood radii to
avoid points belonging to other clusters.

\section{Theoretical Preliminaries}\label{sec:problem}

We formulate the theoretical problem of MGM and review preliminary background of Riemannian geometry, which is necessary to follow this work.

\subsection{Multi-Geodesic Modeling (MGM)}\label{sec:generativeMGM} MGM assumes
that each point in a given dataset $X= \{x_i\}_{i=1}^N$ lies in the tubular
neighborhood of some unknown geodesic submanifold $S_k$, $1\leq k\leq K$, of a
Riemannian manifold, $M$.\footnote{The tubular neighborhood with radius $\tau>0$ of $S_k$ in $M$ (with metric tensor $g$ and induced distance $\dist_g$) is $S_k^{\tau}=\{ x\in M : \dist_g(x,s)<\tau \text{ for some } s \in S_k \}$.} The goal is to cluster the dataset $X$ into $K$
groups $X_1, \ldots, X_K\subset M$ such that points in $X_k$ are associated with
the submanifold $S_k$. Note that if $M$ is a Euclidean space, geodesic
submanifolds are subspaces and MGM boils down to HLM, or equivalently, subspace
clustering~\citep{Elhamifar09sparsesubspace, SubspaceClustering_Vidal,
  LBF_journal12}.

For theoretical purposes, we assume the following data model, which we refer to
as uniform MGM: The data points are i.i.d.~sampled w.r.t.~the uniform
distribution on a fixed tubular neighborhood of $\cup_{k=1}^K  S_k$. We denote
the radius of the tubular neighborhood by $\tau$ and refer to it as the noise
level.
Figure~\ref{fig:multi-geodesic} illustrates data generated from uniform MGM with
two underlying submanifolds ($K=2)$.

\begin{figure*}[htb!]
\centering
\includegraphics[width=.45\textwidth]{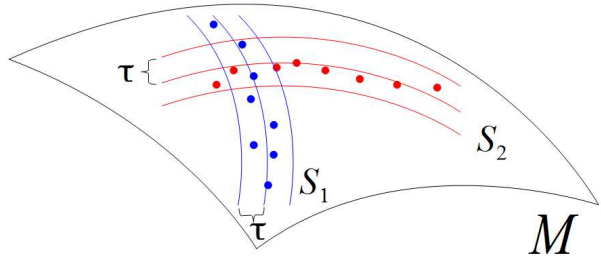}
\caption{Illustration of data generated from a uniform MGM when $K=2$.}
\label{fig:multi-geodesic}
\end{figure*}



The MGM problem only serves our theoretical justification.
The numerical experiments show that the proposed algorithm works well under a more general MMM setting.
Such a setting may include more general submanifolds (not necessarily geodesic), non-uniform sampling and different kinds and levels of noise.

\subsection{Basics of Riemannian Geometry}\label{sec:notation}

This section reviews basic concepts from Riemannian geometry; for extended and
accessible review of the topic we recommend the textbook
by~\citet{docarmo92}. Let $(M,g)$ be a $D$-dimensional Riemannian manifold with
a metric tensor $g$. A geodesic between $x, y\in M$ is a curve in $M$ whose
length is locally minimized among all curves connecting $x$ and $y$. Let
$\dist_g(x,y)$ be the Riemannian distance between $x$ and $y$ on $M$. If
$T_{x}M$ denotes the tangent space of $M$ at $x$, then $T_{x}S$ stands for the
tangent subspace of a $d$-dimensional geodesic submanifold $S$ at $x$. As shown
in Figure~\ref{fig:TangentOfS}, $T_{x}S$ is a linear subspace of $T_{x}M$. The
exponential map $\exp_x$ maps a tangent vector $\mb{v}\in T_x M$ to a point
$\exp_x (\mb{v})\in M$, which provides local coordinates around $x$. By
definition, the geodesic submanifold $S$ is the image of $T_x S$ under $\exp_x$
(cf., Definition~\ref{def:geodesic}). The functional inverse of $\exp_x$ is the
logarithm map $\log_x$ from $M$ to $T_x M$, which maps $x$ to the origin
$\mb{O}$ of $T_{x}M$. Let $\mb{x}_j^{(i)}$ denote the image of a data point $x_j$ in $T_{x_i}M$ by the
logarithm map at $x_i$; that is, $\mb{x}_j^{(i)} =\log_{x_i}(x_j)$.

\begin{figure*}[htb!]
\centering
\subfloat[\footnotesize Tangent space]
{\label{fig:TangentOfS} \includegraphics[width=.45\textwidth]{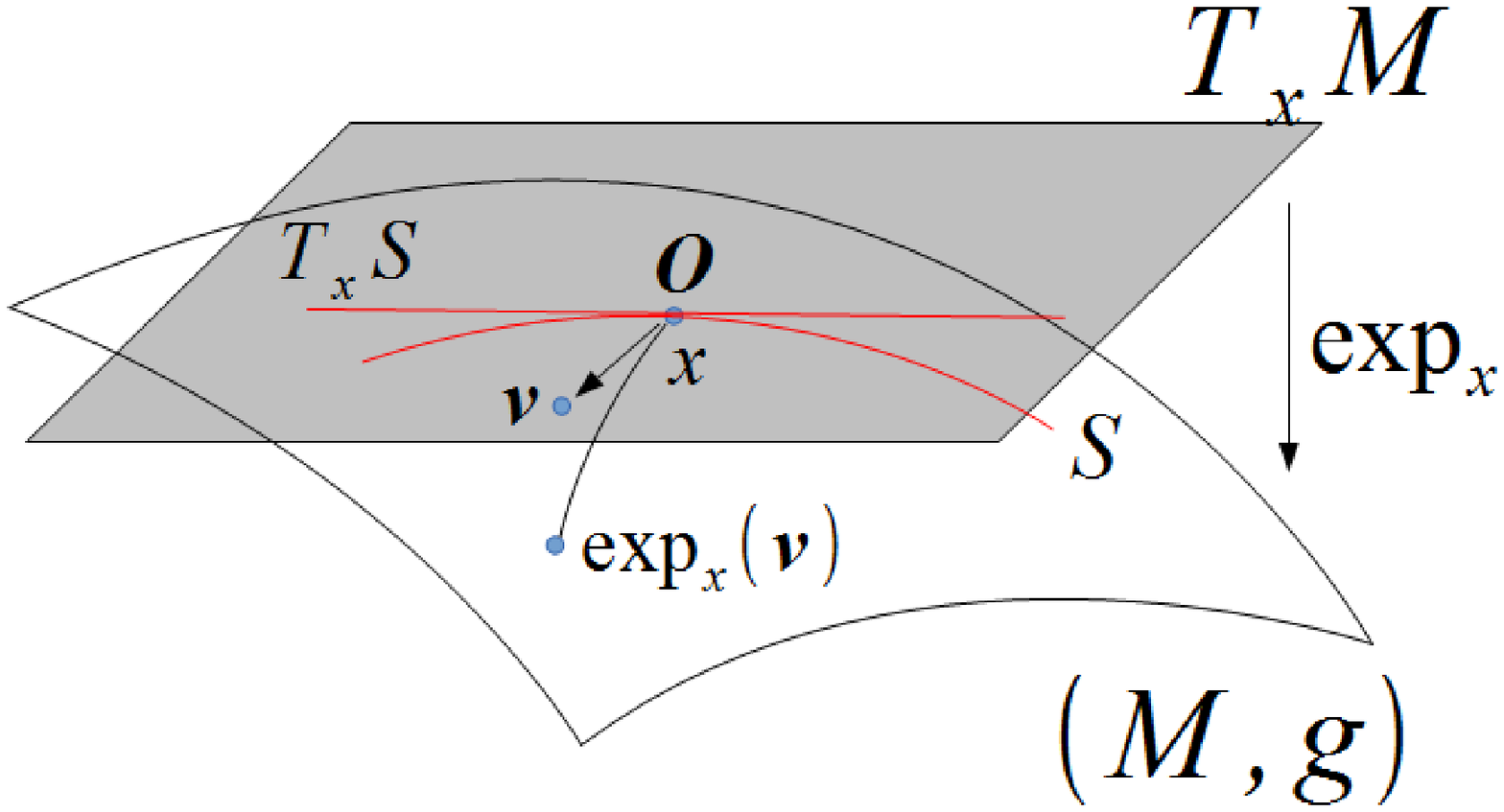}}
\subfloat[\footnotesize Logarithm map]
{\label{fig:logarithm} \includegraphics[width=.45\textwidth]{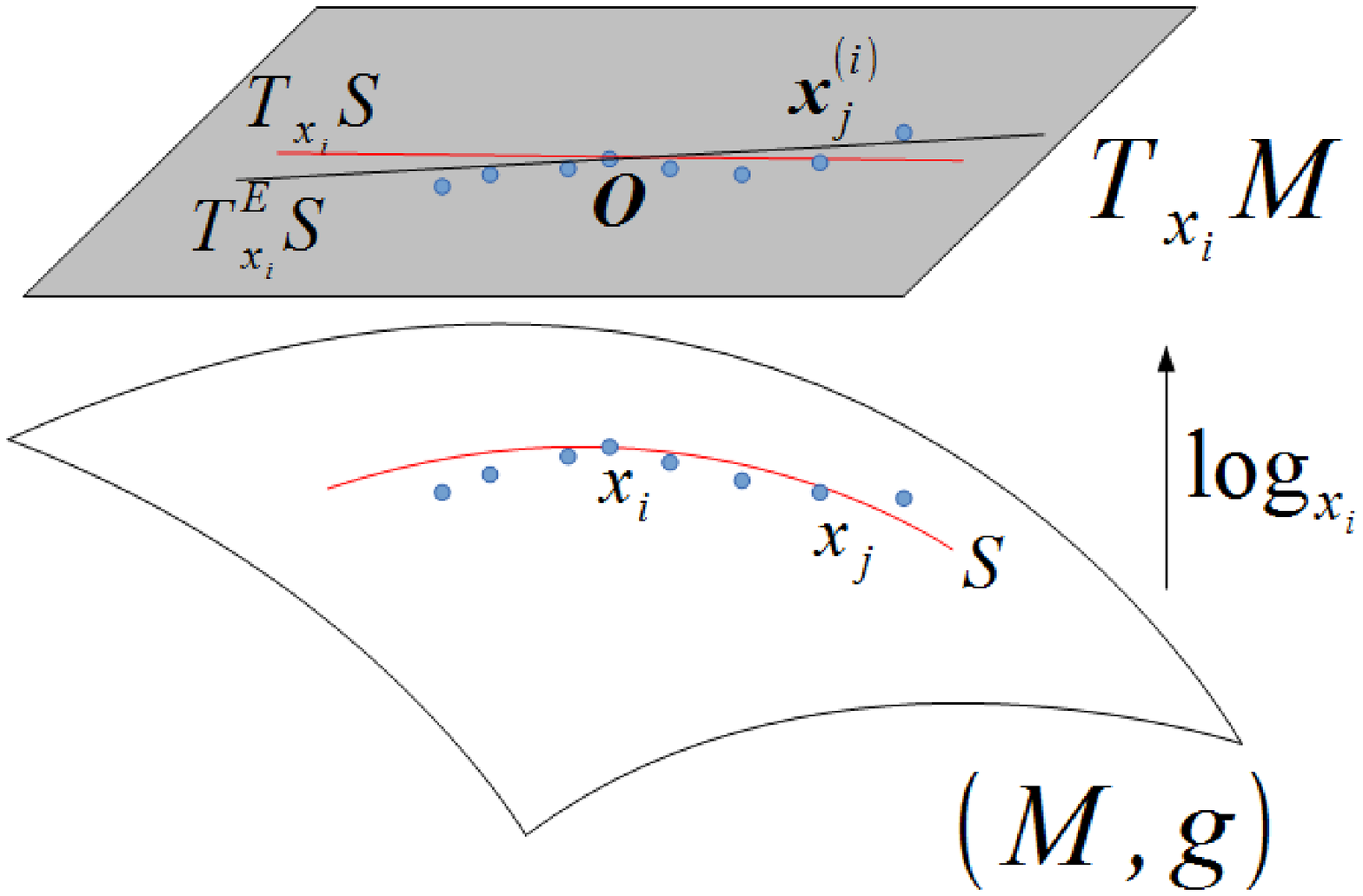}}
\caption{Demonstration of the exponential and logarithm maps as well as the
  tangent and estimated subspaces. (a) The tangent space and the exponential map
  of a manifold $(M,g)$ at a point $x \in M$. Note that the tangent subspace $T_x
  S$ is a pre-image of $S$ under the exponential map. (b) The logarithm map $\log_{x_i}$
  w.r.t.~$x_i \in S$ and the images by $\log_{x_i}$ of data
  points in a local neighborhood of $x_i$, in particular, $\mb{x}_j^{(i)}$ , the image of
  $x_j$.  Note the difference between $T_{x_i}S$, which is the image of $S$
  under $\log_{x_i}$, and the subspace $T_{x_i}^E S$ estimated by the images of
  the data points in the local neighborhood.  }
\end{figure*}


\section{Solutions for the MGM (or MMM) Problem in $M$}\label{sec:algorithm}

We suggest solutions for the MMM problem in $M$ with theoretical guarantees supporting one of these solutions when restricting the problem to MGM.
Section~\ref{sec:directional} defines two key quantities for quantifying directional information: Estimated local tangent subspaces and geodesic angles.
Section~\ref{sec:two_solutions} presents the two solutions and discusses their properties.

\subsection{Directional Information}\label{sec:directional}

\paragraph{\bf The Estimated Local Tangent Subspace $T_{x_i}^E S$.}
Figure~\ref{fig:logarithm} demonstrates the main quantity defined here
($T^E_{x_i} S$) as well as related concepts and definitions.
It assumes a dataset $X=\{x_j\}_{j=1}^N \subset M$
generated by uniform MGM with a single geodesic submanifold $S$. The dataset is thus
contained in a tubular neighborhood of a $d$-dimensional geodesic submanifold
$S$. Since $S$ is geodesic, for any $1\leq i \leq N$ the set
$\{\mb{x}_j^{(i)}\}_{j=1}^N$ of images by the logarithm map is contained in a  tubular neighborhood of the
$d$-dimensional subspace $T_{x_i} S$ (possibly with a different radius than
$\tau$).

Since the true tangent subspace $T_{x_i} S$ is unknown, an estimation of it,
$T_{x_i}^E S$, is needed. Let $B(x_i,r)\subset M$ be the neighborhood of $x_i$
with a fixed radius $r>0$. Let
\begin{equation}
\label{eq:def_J}
J(x,r) := \{ j: x_j\in B(x,r)\cap X\}.
\end{equation}
Moreover, let $\mb{C}_{x_j}$ denote the local sample covariance matrix of
the dataset $\displaystyle \{\mb{x}_j^{(i)}\}_{j \in J(x_i, r)}$ on $T_{x_i}M$, and
$\|\mb{C}_{x_j}\|$ the spectral norm of $\mb{C}_{x_j}$, i.e., its maximum
eigenvalue. Since $\{\mb{x}_j^{(i)}\}_{j=1}^N$ is in a tubular neighborhood of a
$d$-dimensional subspace, estimates of the intrinsic dimension $d$ of the local
tangent subspace, which is also the dimension of $S$, can be formed by
bottom eigenvalues of $\mb{C}_{x_j}$ (cf., \citet{LocalPCA}). We
adopt this strategy of dimension estimation and define the estimated local
tangent subspace, $T_{x_i}^E S$, as the span in $T_{x_i} M$ of the top eigenvectors
of $\mb{C}_{x_j}$. In theory,
the number of top eigenvectors is the number of eigenvalues of $\mb{C}_{x_i}$ that exceed $\eta \|\mb{C}_{x_j}\|$ for some
fixed $0<\eta<1$ (see Theorem~\ref{theorem:all} and its proof for the choice of $\eta$).
In practice, the number of top eigenvectors is the number of top eigenvalues $\mb{C}_{x_i}$ until the largest gap occurs.

\paragraph{\bf Empirical Geodesic Angles.} Let $l(x_i,x_j)$ be the shortest
geodesic (global length minimizer) connecting $x_i$ and $x_j$ in $(M,g)$. Let
$\mb{v}_{ij}\in T_{x_i}M$ be the tangent vector of $l(x_i,x_j)$ at $x_i$. In
other words, $\mb{v}_{ij}$ shows the direction at $x_i$ of the shortest path
from $x_i$ to $x_j$. Given a dataset $X=\{x_j\}_{j=1}^N$, the empirical geodesic
angle $\theta_{ij}$ is the elevation angle (cf., (9) of~\citet{LW-semimetric}) between the vector $\mb{v}_{ij}$ and
the subspace $T_{x_i}^E S$ in the Euclidean space $T_{x_i} M$.

\subsection{Proposed Solutions}
\label{sec:two_solutions}
In Section~\ref{sec:TGCT}, we propose a theoretical solution for data sampled according to uniform MGM. We start with its basic motivation, then describe the proposed algorithm and
at last formulate its theoretical guarantees. In Section~\ref{sec:GCT}, we propose a practical algorithm. At last, Section~\ref{sec:comp_complex} discusses the numerical complexity of both algorithms.

\subsubsection{Algorithm~\ref{alg:theory}: Theoretical Geodesic Clustering with Tangent information (TGCT)}
\label{sec:TGCT}
The proposed solution for the MGM-clustering task applies spectral
clustering with carefully chosen weights. Specifically, a similarity graph is constructed whose
vertices are data points and whose edges represent the similarity between data
points. The challenge is to construct a graph such that two points are locally
connected only when they come from the same cluster. This way spectral clustering will recover exactly the
underlying clusters.

For the sake of illustration, let us assume only two underlying geodesic
submanifolds $S_1$ and $S_2$. We also assume that the data was sampled from $S_1 \cup S_2$ according to uniform MGM. Given a point $x_0\in S_1$ one wishes to connect
to it the points from the same submanifold within a local neighborhood
$B(x_0,r)$ for some $r>0$. Clearly, it is not realistic to assume that all
points in $B(x_0,r)$ are from the same submanifold of $x_0$ (due to nearness and
intersection of clusters as demonstrated in Figures~\ref{fig:angle_threshold}
and~\ref{fig:intersection}).

We first assume no intersection at $x_0$ as demonstrated in
Figure~\ref{fig:angle_threshold}.  In order to be able to identify the points in
$B(x_0,r)$ from the same submanifold of $x_0$, we use local tangent information
at $x_0$.  If $x \in B(x_0,r)$ belongs to $S_2$, then the geodesic $l(x_0,x)$
has a large angle with the tangent space $T_{x_0}S_1$ at $x_0$. On the other
hand, if such $x$ belongs to $S_1$, then the geodesic has an angle close to
zero.  Therefore, thresholding the empirical geodesic angles may become
beneficial for eliminating neighboring points belonging to a different
submanifold (cf., Figure~\ref{fig:angle_threshold}).

\begin{figure*}[htb!]
\centering
\subfloat[\footnotesize Angle filtering]
{\label{fig:angle_threshold} \includegraphics[width=.45\textwidth]{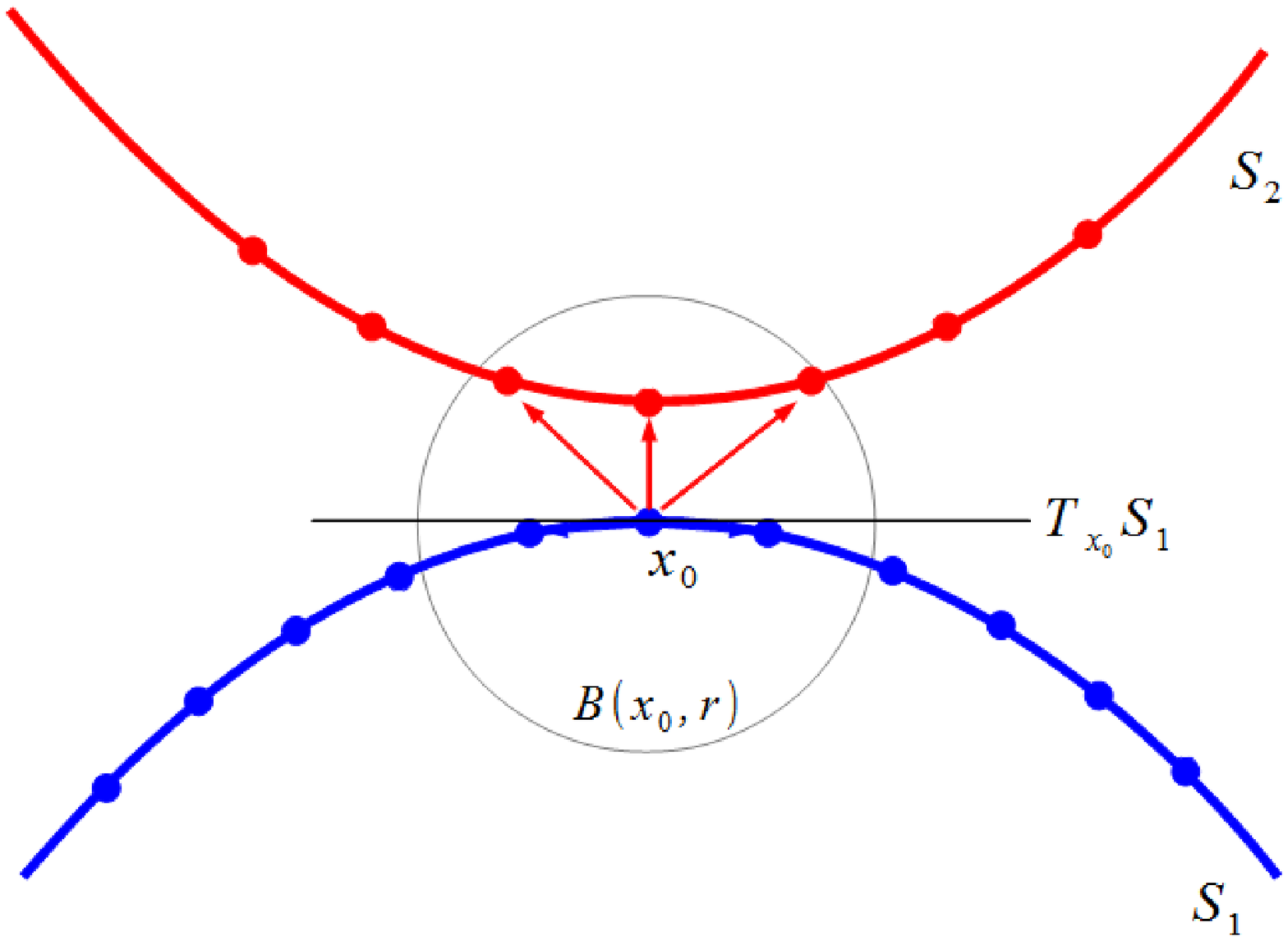}}
\subfloat[\footnotesize Intersection]
{\label{fig:intersection} \includegraphics[width=.45\textwidth]{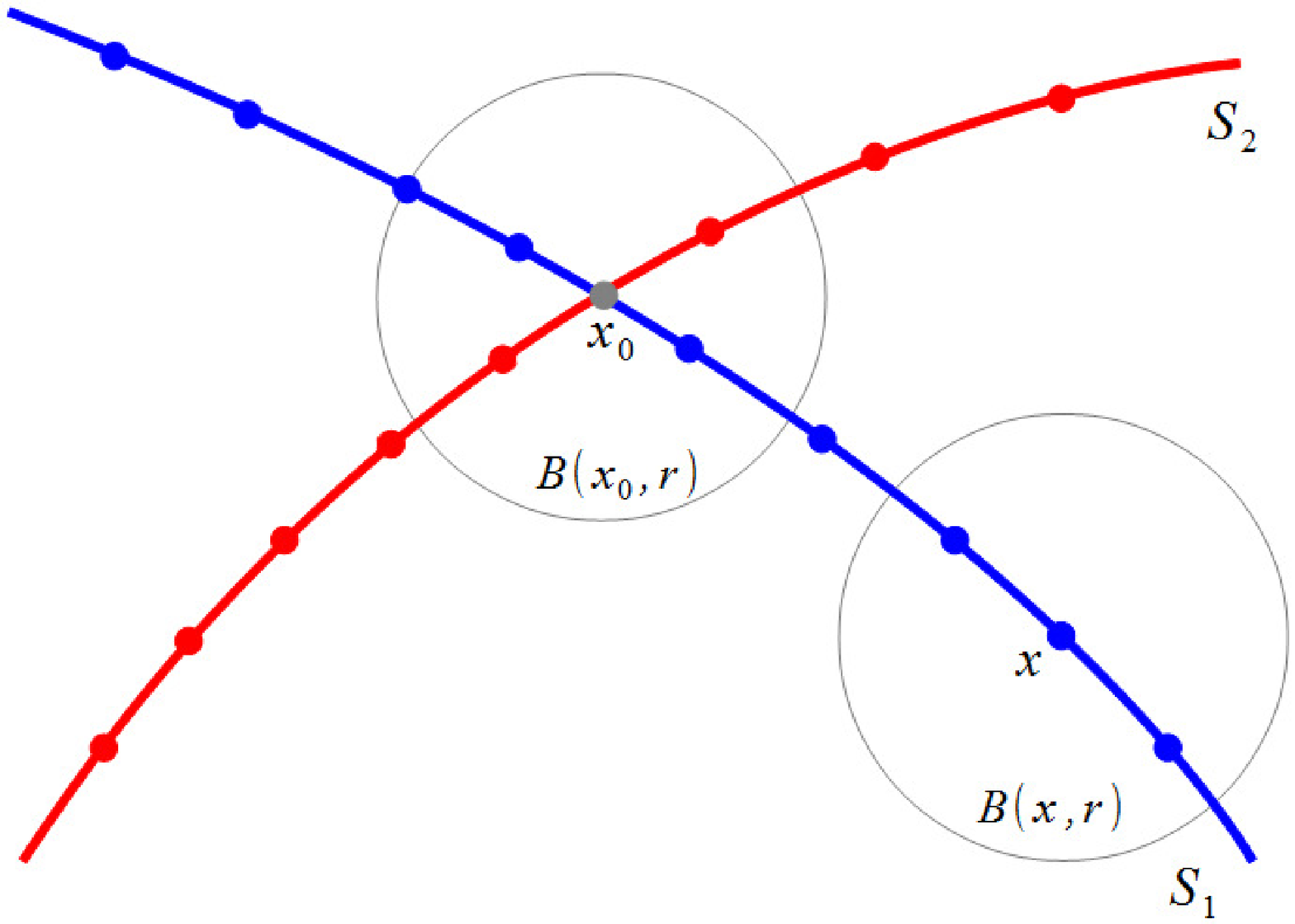}}
\caption{In Figures~\ref{fig:angle_threshold} and~\ref{fig:intersection}, blue
  points lie on the submanifold $S_1$ and red points lie on the submanifold
  $S_2$.  In Figure~\ref{fig:angle_threshold}, a local neighborhood, which is a
  disk of radius $r$, around the blue point $x_0$ is observed and the goal is to
  exclude the red points in $B(x_0,r)$. This can be done by thresholding the
  angles between geodesics and the tangent subspace $T_{x_0}S_1$.  Indeed, the
  angles w.r.t.~blue points are close to zero and the angles w.r.t.~red points
  are sufficiently large.  In Figure~\ref{fig:intersection}, a point $x_0$ is
  in $S_1 \cup S_2$ and an arbitrary point $x$ sufficiently far from it. The
  goal is to assure that $x$ is not connected to $x_0$.
  This can be done by comparing local estimated
  dimensions. The estimated dimension in $B(x_0,r)$ is $\dim {S_1}+\dim {S_2}$,
  while the estimated dimension in $B(x,r)$ is $\dim{S_1}$. Due to the dimension
  difference, the intersection is disconnected from the two submanifolds.}
\label{fig:ingredients}
\end{figure*}

If $x_0$ is at or near the intersection, it is hard to estimate correctly the
tangent spaces of each submanifold and the geodesic angles may not be reliable.
Instead, one may compare the dimensions of estimated local tangent
subspaces. The estimated dimensions of local neighborhoods of data points, which
are close to intersections, are larger than the estimated dimensions of local
neighborhoods of data points further away from intersections (cf.,
Figure~\ref{fig:intersection}). The algorithm thus connects $x_0$ to other
neighboring points only when their ``local dimensions'' (linear-algebraic
dimension of the estimated local tangent) are the same. In this way, the
intersection will not be connected with the other clusters.

The dimension difference criterion, together with the angle filtering procedure,
guarantee that there is no false connection between different clusters (the
rigorous argument is established in the proof of Theorem~\ref{theorem:all}).  We
use these two simple ideas and the common spectral-clustering procedure to form
the Theoretical Geodesic Clustering with Tangent information (TGCT) in
Algorithm~\ref{alg:theory}.

\begin{algorithm}
\caption{Theoretical Geodesic Clustering with Tangent information (TGCT)}\label{alg:theory}
\begin{algorithmic}
\REQUIRE Number of clusters: $K\geq 2$, a dataset $X$ of $N$ points, a
neighborhood radius $r$, a projection threshold $\eta$ for estimating tangent
subspaces, a distance threshold $\sigma_d$ and an angle threshold $\sigma_a$.
\ENSURE Index set $\{\text{Id}_i\}_{i=1}^N$ such that $\text{Id}_i  \in
\{1,\ldots,K\}$ is the cluster label assigned to $x_i$ \\
\textbf{Steps}:\\

$\bullet$ Compute the following geometric quantities around each point:\\
\FOR {$i=1,\ldots,N$}
\STATE $\circ$  For $j \in J(x_i,r)$ (c.f., \eqref{eq:def_J}), compute $\mb{x}_j^{(i)}=\log_{x_i}(x_j)$\\
$\circ$  Compute the sample covariance matrix $\mb{C}_{x_i}$ of
$\{\mb{x}_j^{(i)}\}_{j \in  J(x_i,r)}$\\
$\circ$  Compute the eigenvectors of $\mb{C}_{x_i}$ whose eigenvalues exceed
$\eta\cdot \|\mb{C}_{x_i}\|$ (their span is $T_{x_i}^E S$)\\
$\circ$  For all $j=1,\ldots,N$, compute the empirical geodesic angles
$\theta_{ij}$ (see Section~\ref{sec:directional})\\
 \ENDFOR

$\bullet$ Form the following $N \times N$ affinity matrix
$\mb{W}$: $$\mb{W}_{ij}=\mathbf{1}_{\dist_g(x_i,x_j)<\sigma_d}\mathbf{1}_{\dim
  {T_{x_i}^E S}=\dim {T_{x_j}^E S}}\mathbf{1}_{(\theta_{ij}+\theta_{ji})<\sigma_a}$$\\
$\bullet$ Apply spectral clustering to the affinity matrix $\mb{W}$ to determine
the output $\{\text{Id}_i\}_{i=1}^N$\\

\end{algorithmic}
\end{algorithm}

The following theorem asserts that TGCT achieves correct clustering with high
probability. Its proof is in Section~\ref{sec:theory}. Its statement relies on
the constants $\{C_i\}_{i=0}^6$ and $C'_0$, which are clarified in the proof and
depend only on the underlying geometry of the generative model. For simplicity,
the theorem assumes that there are only two geodesic submanifolds and that they
are of the same dimension. However, it can be extended to $K$ geodesic
submanifolds of different dimensions.

\begin{theorem}\label{theorem:all}
Consider two smooth compact $d$-dimensional geodesic submanifolds, $S_1$ and
$S_2$, of a Riemannian manifold and let $X$ be a dataset generated according to
uniform MGM w.r.t.~$S_1 \cup S_2$ with noise level $\tau$. 
If the positive parameters of the TGCT algorithm, $r$, $\sigma_d$, $\sigma_a$
and $\eta$, satisfy the inequalities
\begin{align}
\label{eq:constant_all_requirements}
& \eta < C_2^{-\frac{d+2}{2}}, \sigma_d<{C_4}^{-\frac{1}{2}}, \ r > \tau / C_5, \ r < \min(\eta, \sigma_d, \sigma_a)/C_1 \ \text{ and }\\
&\sigma_a < \min(\sin^{-1}(r\sqrt{1-C_2\eta^{\frac{2}{d+2}}}/(2\sigma_d))-C_3 \eta^{\frac{d}{d+2}}-C_3 r, \pi/6),
\nonumber
\end{align}
then with probability at least $1-C_0 N\exp[-Nr^{d+2}/C'_0]$, the TGCT algorithm
can cluster correctly a sufficiently large subset of $X$, whose relative
fraction (over $X$) has expectation at least $1- C_6 (r+\tau)^{d-\dim{S_1 \cap S_2}}$.
\end{theorem}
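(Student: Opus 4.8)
The plan is to show that, on a single high-probability ``good-sampling'' event, the affinity matrix $\mb{W}$ is the adjacency matrix of a graph that, after deleting a small set of ``ambiguous'' points near $S_1\cap S_2$, has no edges between the two clusters and is connected within each cluster; spectral clustering with $K=2$ then labels every surviving point correctly, and the deleted set accounts for the uncontrolled fraction. The probabilistic factor $1-C_0 N\exp[-Nr^{d+2}/C'_0]$ I would obtain by establishing the good event pointwise for a fixed $x_i$ and then union-bounding over the $N$ points (this produces the prefactor $C_0 N$). Concretely, for each $x_i$ a Chernoff/matrix-Bernstein estimate guarantees simultaneously that $|J(x_i,r)|$ is comparable to its expectation $\asymp Nr^{d}$ and that the empirical covariance $\mb{C}_{x_i}$ is close to its population counterpart; balancing the tangential signal strength $\asymp r^2$ against the sample size $\asymp Nr^{d}$ is what yields the exponent $Nr^{d+2}$.

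Next I would analyze the directional quantities away from the intersection. Since each $S_k$ is geodesic and the data lie in its $\tau$-tube, the log-images $\{\mb{x}_j^{(i)}\}_{j\in J(x_i,r)}$ lie in a tube of comparable radius around the $d$-dimensional subspace $T_{x_i}S_k$, up to curvature distortion of order $r^2$ (these corrections are the source of the $C_3 r$ summands in \eqref{eq:constant_all_requirements}). Because a uniform law on a $d$-ball of radius $r$ has covariance $\asymp (r^2/(d+2))\,I$, the population covariance has a clean gap: exactly $d$ eigenvalues of order $r^2$ (tangent directions) and the rest of order at most $\tau^2$ (transverse/noise). The ordering $\tau\ll r$ from $r>\tau/C_5$, together with the threshold $\eta<C_2^{-\frac{d+2}{2}}$, forces exactly $d$ empirical eigenvalues above $\eta\|\mb{C}_{x_i}\|$, so $\dim{T_{x_i}^E S}=d$ for a point away from the intersection; the factor $(d+2)$ in the exponent of $\eta$ is precisely this uniform-on-ball normalization, and a Davis--Kahan bound converts the gap into a subspace error $\angle(T_{x_i}^E S,T_{x_i}S_k)$ that is polynomially small in $\eta$, producing the powers $\eta^{2/(d+2)}$ and $\eta^{d/(d+2)}$. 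For a point whose $r$-neighborhood meets both submanifolds, i.e.\ within $\asymp r$ of $S_1\cap S_2$, the log-images span $T_{x_i}S_1+T_{x_i}S_2$, of dimension $2d-\dim{S_1\cap S_2}>d$, so its estimated dimension strictly exceeds $d$. Hence the indicator $\mathbf{1}_{\dim{T_{x_i}^E S}=\dim{T_{x_j}^E S}}$ severs every near-intersection point from the bulk of either cluster, which is the mechanism of Figure~\ref{fig:intersection}.

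I would then establish the angle separation among the remaining dimension-$d$ points. If $x_i,x_j$ lie on the same $S_k$, the minimizing geodesic between them is, to curvature order, tangent to $S_k$, so its elevation angle $\theta_{ij}$ against $T_{x_i}^E S$ is bounded by the subspace error plus curvature, i.e.\ $\theta_{ij}\lesssim C_3\eta^{d/(d+2)}+C_3 r$. If instead $x_i\in S_1$ and $x_j\in S_2$ both have estimated dimension $d$ (hence both sit away from the intersection) yet $\dist_g(x_i,x_j)<\sigma_d$, they are transversally separated by at least $\asymp r$, forcing the geodesic direction to make an angle at least $\sin^{-1}\!\big(r\sqrt{1-C_2\eta^{\frac{2}{d+2}}}/(2\sigma_d)\big)$ with the estimated tangent. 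The hypothesis on $\sigma_a$ says exactly that this rejection angle, after subtracting the $O(\eta^{d/(d+2)}+r)$ estimation error, still exceeds the acceptance level, so a valid $\sigma_a<\pi/6$ exists (the cap $\pi/6$ keeping us in the linear regime). Thus $\mathbf{1}_{(\theta_{ij}+\theta_{ji})<\sigma_a}$ retains all same-cluster pairs and kills all cross-cluster pairs; combined with the distance and dimension indicators, $\mb{W}$ carries no cross-cluster edge among the good points, while connectivity within each cluster follows from compactness of $S_k$ and the sampling density at radius $\sigma_d$ (with $r<\min(\eta,\sigma_d,\sigma_a)/C_1$ and $\sigma_d<{C_4}^{-\frac{1}{2}}$ ensuring enough same-cluster neighbors while keeping every linearization valid). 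The good points therefore form exactly two connected components, which spectral clustering separates correctly.

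Finally, for the fraction bound I would estimate the measure of the discarded set. The only points not guaranteed correct are those within distance $\asymp r+\tau$ of $S_1\cap S_2$; since $S_1\cap S_2$ sits as a $\dim{S_1\cap S_2}$-dimensional set inside a $d$-dimensional submanifold, the uniform measure of its $(r+\tau)$-neighborhood is $\lesssim (r+\tau)^{d-\dim{S_1\cap S_2}}$, and taking expectation over the i.i.d.\ sample gives the claimed correct fraction $1-C_6(r+\tau)^{d-\dim{S_1\cap S_2}}$. The main obstacle, I expect, is not any single estimate but the simultaneous calibration of all constants: one must verify that the system \eqref{eq:constant_all_requirements} is feasible, i.e.\ that the eigenvalue threshold (dimension estimation), the Davis--Kahan subspace error, the log-map curvature distortion, and the transversality of $S_1,S_2$ can be balanced so the $\sigma_a$-window is nonempty while $r$ stays large enough ($r>\tau/C_5$) for the covariance gap yet small enough for every linearization. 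Coupling this deterministic bookkeeping to the random geometry of the log-images, so that all perturbation bounds hold uniformly over the $N$ points on the good event, is the delicate part of the argument.
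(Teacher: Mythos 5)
Your plan reproduces the paper's architecture almost exactly: the high-probability event (the paper's $\Omega_1\cap\Omega_2$, imported from \citet{LocalPCA} via Theorem~\ref{theorem:omega}), the dimension indicator severing near-intersection points (Proposition~\ref{lemma:Intersection}), the angle dichotomy with rejection level $\sin^{-1}(\delta/2\sigma_d)$ (Lemma~\ref{lemma:DiffS} and Corollary~\ref{cor:disconnectedness}), within-cluster connectivity (Proposition~\ref{lemma:connect}), and the tube-measure bound for the discarded set via Lemma~\ref{lemma:intercoord} and Gray's tube formula; even your calibration $\delta=r\sqrt{1-C_2\eta^{2/(d+2)}}$ matches the paper's~\eqref{eq:def_delta}.

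There is, however, one genuine gap, and it sits precisely at the step the paper identifies as its main technical contribution beyond the Euclidean setting of \citet{LocalPCA}. You justify the severing step by asserting that for a point near the intersection ``the log-images span $T_{x_i}S_1+T_{x_i}S_2$, of dimension $2d-\dim{S_1\cap S_2}>d$, so its estimated dimension strictly exceeds $d$.'' This does not hold up as stated, for two reasons. First, the estimated dimension is not a rank: it counts eigenvalues of $\mb{C}_{x_0}$ exceeding $\eta\|\mb{C}_{x_0}\|$, so you need the quantitative bound $\lambda_{d+1}(\mb{C}_{x_0})/\lambda_1(\mb{C}_{x_0})>\eta$, which degenerates as $\dist_g(x_0,S_2)$ approaches $r$ --- this is exactly why the paper can only sever points within $\delta<r$ of the other submanifold, and why $\delta$ then reappears in the admissible $\sigma_a$ window. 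Second, and more fundamentally, for $x_0\in S_1\setminus S_2$ the set $\log_{x_0}\left(S_2\cap B(x_0,r)\right)$ is \emph{not} contained in any linear subspace of $T_{x_0}M$ (geodesic submanifolds map to subspaces only under the logarithm based at their own points), so there is no two-subspace configuration in the tangent space where $\mb{C}_{x_0}$ is actually computed, and the Euclidean two-subspace covariance estimate (Lemma~21 of \citet{LocalPCA}) cannot be invoked directly. The paper's resolution is to change base point to a projection $z\in S_1\cap S_2$ (with $\dist_g(x_0,z)=\OO(r)$ by Lemma~\ref{lemma:intercoord}), where both $S_1$ and $S_2$ do become subspaces, and then to transfer the eigenvalue bound back by proving that the transition map $\Phi_z^{-1}\circ\Phi_{x_0}$ is an orthogonal transformation up to $\OO(r)$ (claim~\eqref{equ:lemma:Intersection2}, established through the near-orthogonality induction of Appendix~\ref{lemma:orthogonal}) together with a domain-and-measure distortion estimate~\eqref{equ:interarea}, since the ball $B(x_0,r)$ becomes an irregular region in $T_zM$. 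Without this cross-tangent-space comparison, or a substitute for it, your severing claim --- and hence its contrapositive, that estimated dimension $d$ implies transversal separation at least $\delta$, on which your angle argument relies --- is unsupported.
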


\subsubsection{Algorithm~\ref{alg:experiment}: Geodesic Clustering with Tangent information (GCT)}
\label{sec:GCT}
A practical version of the TGCT algorithm, which we refer to as Geodesic Clustering
with Tangent information (GCT), is described in Algorithm~\ref{alg:experiment}.
This is the algorithm implemented for the experiments in
Section~\ref{sec:experiment} and its choice of parameters is clarified in Section~\ref{sec:clusterings_param}.
GCT differs from TGCT in three different ways. First,
hard thresholds in TGCT are replaced by soft ones, which are more
flexible. Second, the dimension indicator function is dropped from the affinity
matrix $W$. Indeed, numerical experiments indicate that the algorithm works
properly without the dimension indicator function, whenever there is only a
small portion of points near the intersection. This numerical observation makes
sense since the dimension indicator is only used in theory to avoid connecting
intersection points to points not in intersection.  At last, pairwise distances
are replaced by weights resulting from sparsity-cognizant optimization
tasks. Sparse coding takes advantage of the low-dimensional structure of
submanifolds and produces larger weights for points coming from the same
submanifold~\citep{ElhamifarV_nips11}.

\begin{algorithm}
\caption{Geodesic Clustering with Tangent information (GCT)}\label{alg:experiment}
\begin{algorithmic}
\REQUIRE Number of clusters: $K\geq 2$, a dataset $X$ of $N$ points, a neighborhood radius $r$, a distance threshold $\sigma_d$ (default: $\sigma_d=1$) and an angle threshold $\sigma_a$ (default $\sigma_a=1$)
\ENSURE Index set $\{\text{Id}_i\}_{i=1}^N$ such that $\text{Id}_i  \in \{1,\ldots,K\}$ is the cluster label assigned to $x_i$\\
\textbf{Steps}:\\
\FOR{$i=1,\ldots,N$}
\STATE $\circ$   For $j \in J(x_i,r)$, compute $\mb{x}_j^{(i)}=\log_{x_i}(x_j)$\\
$\circ$   Compute the weights $\{\mb{S}_{ij}\}_{j\in J(x_i,r)}$ that minimize
\begin{equation}\label{equ:l1minimization}
\|\mb{x}_i^{(i)}-\sum_{\latop{j\in J(x_i,r)}{j\not=i}} \mb{S}_{ij}
\mb{x}_j^{(i)}\|_2^2+\sum_{\latop{j\in J(x_i,r)}{j\not=i}}
e^{\|\mb{x}_i^{(i)}-\mb{x}_j^{(i)}\|_2/\sigma_d} |\mb{S}_{ij}|
\end{equation}
among all $\{\mb{S}_{ij}\}_{j\in J(x_i,r)}$ such that $\mb{S}_{ii}=0$ and
$\sum_{\latop{j\in J(x_i,r)}{j\not=i}} \mb{S}_{ij} =1$\\
$\circ$   Complete these weights as follows: $\mb{S}_{ij} =0$ for $j \notin J(x_i,r)$\\
$\circ$   Compute the sample covariance matrix $\mb{C}_{x_i}$ of
$\{\mb{x}_j^{(i)}\}_{j \in  J(x_i,r)}$\\
$\circ$   Find the largest gap between eigenvalues $\lambda_m$ and $\lambda_{m+1}$ of $\mb{C}_{x_i}$ and compute
the top $m$ eigenvectors of $\mb{C}_{x_i}$ (their span is $T_{x_i}^E S$)\\
$\circ$   For all $j=1,\ldots,N$, compute the empirical geodesic angles
$\theta_{ij}$ (see Section~\ref{sec:directional})\\
\ENDFOR

$\bullet$  Form the following $N \times N$ affinity matrix $\mb{W}$:
\begin{equation}\label{equ:sparseweights}
\mb{W}_{ij}=e^{|\mb{S}_{ij}|+|\mb{S}_{ji}|}e^{-(\theta_{ij}+\theta_{ji})/\sigma_a}
\end{equation}
$\bullet$  Apply spectral clustering to the affinity matrix $W$ to determine the
output $\{\text{Id}_i\}_{i=1}^N$\\

\end{algorithmic}
\end{algorithm}

Algorithm~\ref{alg:experiment} solves a sparse coding task in
\eqref{equ:l1minimization}. The penalty used is non-standard since the codes
$|\mb{S}_{ij}|$ are multiplied by $e^{\|\mb{x}_i^{(i)}-\mb{x}_j^{(i)}\|_2/\sigma_d}$
(where in~\citet{6619442}, these latter terms are all 1). These weights were
chosen to increase the effect of nearby points (in addition to their
sparsity). In particular, it avoids sparse representations via far-away points
that are unrelated to the local manifold structure (see further explanation in
Figure~\ref{fig:sparse_weight}). Similarly to~\citet{6619442}, the
clustering weights in~\eqref{equ:sparseweights} exponentiate the sparse-coding weights.

\begin{figure*}[tb!]
\centering
\includegraphics[width=.45\textwidth]{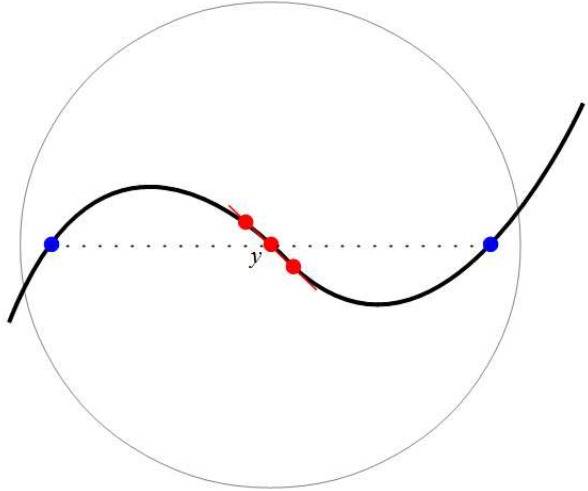}
\caption{Illustration of the need for weighted sparse optimization in~\eqref{equ:l1minimization}. The non-weighted
  sparse optimization may fail to detect the local structure at $y$ in the
  manifold setting. The term $e^{\|\mb{x}_i^{(i)}-\mb{x}_j^{(i)}\|_2/\sigma_d}$
  is used to avoid assigning large weights to the far-away blue points.}
\label{fig:sparse_weight}
\end{figure*}

\subsubsection{Computational Complexity of GCT and TGCT}
\label{sec:comp_complex}
We briefly discuss the computational complexity of GCT and TGCT, while leaving many technical
details to Appendices~\ref{sec:logarithmmaps} and~\ref{sec:complexity.GCT}.
The computational complexity of GCT is
$$\OO(N^2(\text{CR} +\text{CL}+D)+kN\log(N)+ND+Nk^3),$$
where $k$ bounds the number of nearest neighbors in a neighborhood (typically $k=30$ by the choice of parameters), CR is the cost of computing the Riemannian distances between any two points and CL is the cost of computing the logarithm map of a given point w.r.t.~another point. Furthermore, once $CL$ was computed, CR$= \OO(D)$. The complexity of CL depends on the Riemannian manifold $M$. If $M=\mathbb{S}^D$, then CL$=\OO(D)$. If $M$ is the space of symmetric PD matrices and $\dim{M}=D$, then CL=$\OO(D^{1.5})$. If $M$ is the Grassmannian, $\dim{M}=D$ and $d$ is chosen to be of the same order as the dimension of the subspaces in $M$, then CL=$d(d+D/d)^2$. In all applications of Riemannian multi-manifold modeling we are aware of $M$ is known and it is one of these examples. For more general or unknown $M$, estimation of the logarithm map is discussed in~\cite{MemoliS05} (this estimation is rather slow).

It is possible to reduce the total computational cost under some assumptions. In particular, in theory, it is possible to implement TGCT (or more precisely an approximate variant of it) for the sphere or the Grassmannian with computational complexity of order $$\OO(N^{1+\rho} \text{CR}+(k+1)N\log(N)+kN(\text{CL}+D)+Nk^3),$$ where $\rho>0$ is near zero.

\section{Numerical Experiments}\label{sec:experiment}

To assess performance on both synthetic and real datasets, the GCT algorithm is
compared with the following algorithms: Sparse manifold clustering (SMC)~\citep{ElhamifarV_nips11,6619442},
which is adapted here for clustering within a Riemannian manifold and still referred to as SMC, spectral clustering
with Riemannian metric (SCR) of~\citet{GOH_VIDAL08}, and embedded $K$-means
(EKM). The three methods and choices of parameters for all four methods are
reviewed in Appendix~\ref{sec:clusterings_param}.

The ground truth labeling is given in each experiment. To measure the accuracy of
each method, the assigned labels are first permuted to have the maximal match
with the ground truth labels. The clustering rate is computed for that permuted
labels as follows:
\[
\displaystyle \text{clustering rate} = \frac{\# \text{ of points whose group
    labels are the same as ground truth labels}}{\# \text{ of total points}}.
\]

\subsection{Experiments with Synthetic Datasets}\label{subsec:synthetic}

Six datasets were generated. Dataset I and II are from the
  Grassmannian $\G(6,2)$, datasets III and IV are from $3\times 3$ symmetric
  positive-definite (PD) matrices, and datasets V and VI are from the sphere
  $\Sb^2$. Each dataset contains $260$ points generated from two ``parallel'' or
  intersecting submanifolds (130 points on each) and cropped by white Gaussian noise. The exact constructions
are described below.

%

\paragraph{\bf Datasets I and II} The first two datasets are on the Grassmannian
$\G(6,2)$. In dataset I, 130 pairs of subspaces are drawn from the following non-intersecting submanifolds:
\begin{equation*}
  \begin{aligned}
    & \mb{x}_1=\Span\{(\cos(\theta), 0, \sin(\theta), 0, 0,
    0)+\frac{1}{40}\pmb{\epsilon}_{1\times 6}, (0, \cos(\theta), 0, \sin(\theta),
    0, 0)+\frac{1}{40}\pmb{\epsilon}_{1\times 6}\},\\
    & \mb{x}_2=\Span\{(\cos(\phi), 0, \sin(\phi), 0, 0.5,
    0)+\frac{1}{40}\pmb{\epsilon}_{1\times 6}, (0, \cos(\phi), 0, \sin(\phi), 0.5,
    0)+\frac{1}{40}\pmb{\epsilon}_{1\times 6}\},
  \end{aligned}
\end{equation*}
where $\theta, \phi$ are equidistantly drawn from $[-\pi/3, \pi/3]$ and the noise
vector $\pmb{\epsilon}_{1\times 6}$ comprises i.i.d.\ normal random
variables $\mathcal{N}(0,1)$.

In dataset II, 130 pairs of subspaces lie around two intersecting submanifolds
as follows:
\begin{equation*}
\begin{aligned}
& \mb{x}_1=\Span\{(\cos(\theta), 0, \sin(\theta), 0, 0,
0)+\frac{1}{40}\pmb{\epsilon}_{1\times 6}, (0, \cos(\theta), 0, \sin(\theta), 0,
0)+\frac{1}{40}\pmb{\epsilon}_{1\times 6}\},\\
& \mb{x}_2=\Span\{(\cos(\phi), 0, 0, 0, \sin(\phi),
0)+\frac{1}{40}\pmb{\epsilon}_{1\times 6}, (0, \cos(\phi), 0, 0, 0,
\sin(\phi))+\frac{1}{40}\pmb{\epsilon}_{1\times 6}\},
\end{aligned}
\end{equation*}
where $\theta, \phi$ are equidistantly drawn from $[-\pi/3, \pi/3]$ and the
noise vector $\pmb{\epsilon}_{1\times 6}$ comprises, again, i.i.d.\ normal
random variables $\mathcal{N}(0,1)$.


\paragraph{\bf Datasets III and IV}
The next two datasets
are contained in the manifold of $3\times 3$ symmetric PD matrices.

In dataset III, 130 pairs of matrices of two intersecting groups are generated from the model
\begin{equation}
\begin{aligned}
&\mb{A}_1=\left( \begin{array}{ccc}
4 & 4\cos(\theta+\pi/4) & 4\sin(\theta+\pi/4) \\
4\cos(\theta+\pi/4) & 4 & 0 \\
4\sin(\theta+\pi/4) & 0 & 4 \end{array} \right)+ \pmb{\epsilon}_{3\times 3}/40, \\
&\mb{A}_2=\left( \begin{array}{ccc}
4 & 0 & 4\cos(\theta-\pi/4) \\
0 & 4 & 4\sin(\theta-\pi/4) \\
4\cos(\theta-\pi/4) & 4\sin(\theta+\pi/4) & 4 \end{array} \right)+ \pmb{\epsilon}_{3\times 3}/40,
\end{aligned}
\end{equation}
where $\theta$ is
equidistantly drawn from $[0, \pi]$ and $\pmb{\epsilon}_{3\times 3}$ is a symmetric matrix whose entries are
i.i.d.~normal random variables with distribution $\mathcal{N}(0,1)$.

In dataset IV, 130 pairs of matrices of two non-intersecting groups are generated from the model
\begin{equation*}
\mb{A}_1=\left( \begin{array}{ccc}
10\alpha & 0 & 0 \\
0 & 10\alpha & 0 \\
0 & 0 & 10\alpha \end{array} \right)+ \pmb{\epsilon}_{3\times 3}/40, \quad
\mb{A}_2=\left( \begin{array}{ccc}
10\beta & 0 & 0 \\
0 & 10\beta^2 & 0 \\
0 & 0 & 10\beta^3 \end{array} \right)+ \pmb{\epsilon}_{3\times 3}/40,
\end{equation*}
where $\alpha, \beta$ are equidistantly drawn from $[0.5, 1]$ respectively and $\pmb{\epsilon}_{3\times 3}$
 is a symmetric matrix whose entries are
i.i.d.~normal random variables with distribution $\mathcal{N}(0,1)$.

\paragraph{\bf Datasets V and VI} Two datasets are constructed on the unit sphere $\Sb^2$ of
the 3-dimensional Euclidean space. Dataset V comprises of vectors lying around
the following two parallel arcs:
\begin{equation*}
\begin{aligned}
&\mb{x}_1=[\cos(\theta), \sin(\theta), 0]+\pmb{\epsilon}_{1\times 3},\\
&\mb{x}_2=[\sqrt{0.97}\cos(\phi), \sqrt{0.97}\sin(\phi),\sqrt{0.03}] +
\pmb{\epsilon}_{1\times 3},
\end{aligned}
\end{equation*}
where $\theta, \phi$ are equidistantly drawn from $[0, \pi/2]$. To ensure membership in $\Sb^2$, vectors
generated by $\mb{T}_1$ and $\mb{T}_2$ are normalized to unit length. On the
other hand, dataset VI considers the following two intersecting arcs:
\begin{equation*}
\begin{aligned}
&\mb{x}_1=[\cos(\theta+\pi/4), \sin(\theta+\pi/4), 0]+\frac{1}{40}\pmb{\epsilon}_{1\times
  3},\\
&\mb{x}_2=[0, \cos(\phi-\pi/4), \sin(\phi-\pi/4)]+\frac{1}{40}\pmb{\epsilon}_{1\times 3}.
\end{aligned}
\end{equation*}

\subsubsection{Numerical Results}

Each one of the six datasets is generated according to the postulated models
above, and the experiment is repeated $30$ times. Table~\ref{table:synthetic}
shows the average clustering rate for each method. GCT, SMC and SCR are all
based on the spectral clustering scheme. However, when a dataset has
low-dimensional structures, GCT's unique procedure of filtering neighboring
points ensures that it yields superior performance over the other methods.  This
is because both SMC and SCR are sensitive to the local scale $\sigma$, and
require each neighborhood not to contain points from different groups. This
becomes clear by the results on datasets I, IV, and V of non-intersecting
submanifolds. SMC only works well in dataset I, where most of the neighborhoods
$B(x_0,r)$ contain only points from the same cluster, while neighborhoods
$B(x_0,r)$ in datasets IV and V often contain points from different
ones. Embedded $K$-means generally requires that the intrinsic means of
different clusters are located far from each other. Its performance is not as
good as GCT when different groups have low-dimensional structures.

\begin{table*}\centering
\ra{1.3}
\begin{tabular}{@{}rrrrrrr@{}}\toprule
Methods & Set I & Set II & Set III & Set IV & Set V & Set VI\\ \midrule
GCT & \bf{1.00} $\pm$0.00 & \bf{0.98} $\pm$0.01  & \bf{0.98} $\pm$0.00 & \bf{0.95} $\pm$0.01 & \bf{0.98} $\pm$0.01 & \bf{0.96} $\pm$0.01 \\
SMC & 0.97 $\pm$0.04 & 0.66 $\pm$0.08  & 0.88 $\pm$0.03 & 0.80 $\pm$0.02 & 0.55 $\pm$0.06 & 0.69 $\pm$0.05 \\
SCR & 0.51 $\pm$0.00 & 0.66 $\pm$0.07 & 0.84 $\pm$0.00 & 0.80 $\pm$0.00 & 0.50 $\pm$0.00 & 0.53 $\pm$0.07  \\
EKM & 0.50 $\pm$0.00 & 0.50 $\pm$0.00 & 0.67 $\pm$0.00 & 0.50 $\pm$0.00 & 0.50 $\pm$0.00 & 0.67 $\pm$0.06 \\
\bottomrule
\end{tabular}
\caption{Average clustering rates on the six synthetic datasets of
    Section~\ref{subsec:synthetic}.}\label{table:synthetic}
\end{table*}

\subsection{Robustness to Noise and Running Time}\label{subsec:robust_runningtime}

Section~\ref{subsec:synthetic} illustrated GCT's superior performance over the
competing SMC, SCR, and EKM on a variety of manifolds. This section further
investigates GCT's robustness to noise and computational cost pertaining to running
time. In summary, GCT is shown to be far more robust than SMC in the presence of
noise, at the price of a small increase of running time.

\subsubsection{Robustness to Noise}
\label{sec:robust}
The proposed tangent filtering scheme enables GCT to successfully eliminate
neighboring points that originate from different groups. As such, it exhibits
robustness in the presence of noise and/or whenever different groups are close
or even intersecting. On the other hand, SMC appears to be sensitive to noise
due to its sole dependence on sparse weights. Figures~\ref{fig:robust_grass}
and~\ref{fig:robust_sph} demonstrate the performance of GCT, SMC, SCR, and EKM
on the Grassmannian and the sphere for various noise levels (standard deviations of Gaussian noise).

\begin{figure}[htb!]
  \centering
  \includegraphics[width=.70\linewidth]{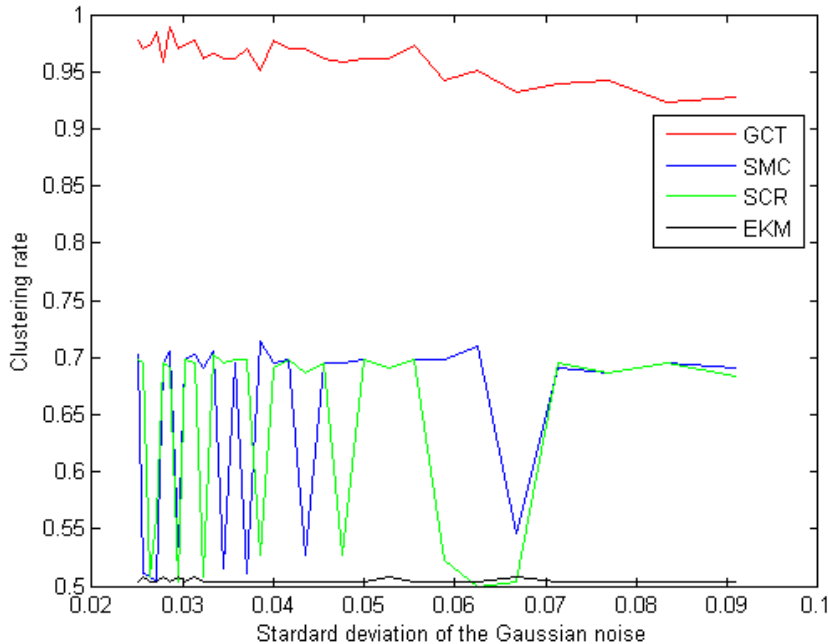}
  \caption{Performance of clustering methods on the Grassmannian for various
    noise levels. Datasets are generated according to the model of dataset II,
    but with an increasing standard deviation of the noise.}
\label{fig:robust_grass}
\end{figure}

The datasets in Figure~\ref{fig:robust_grass} are generated on the Grassmannian
according to the model of dataset II in Section~\ref{subsec:synthetic} but with different noise levels (in Section~\ref{subsec:synthetic}
the noise level was $0.025$). Both SMC
and SCR appear to be volatile over different datasets, with their best
performance never exceeding $0.75$ clustering rate. It is worth noticing that EKM shows poor
clustering accuracy. On the contrary, GCT exhibits remarkable robustness to noise,
achieving clustering rates above $0.9$ even when the standard deviation of the
noise approaches $0.1$.

\begin{figure}[htb!]
 \centering
 \includegraphics[width=.70\linewidth]{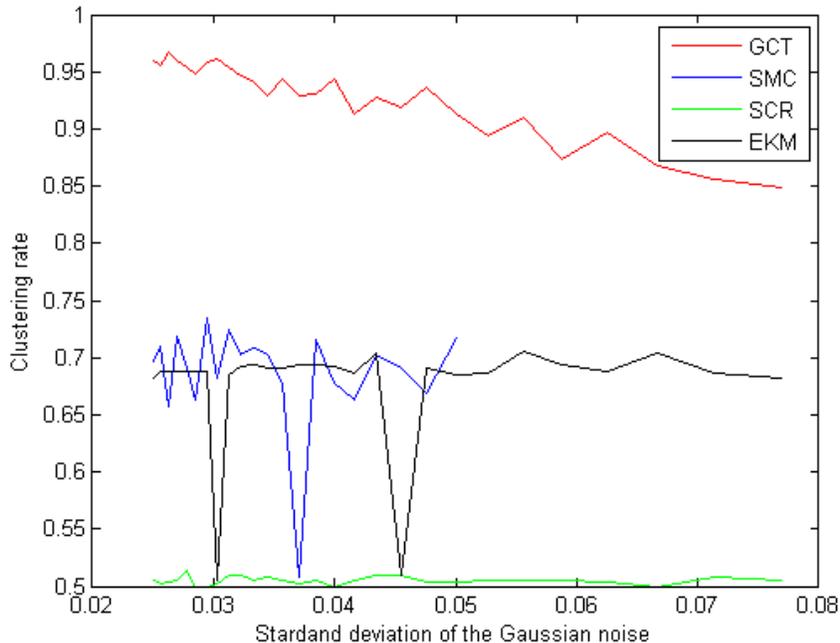}
  \caption{Performance of clustering methods on the sphere for various
    noise levels. Datasets are generated according to the
    model of dataset VI, but with an increasing standard deviation of the noise.
}
\label{fig:robust_sph}
\end{figure}

GCT's robustness to noise is also demonstrated in Figure~\ref{fig:robust_sph}, where
datasets are generated on the unit sphere according to the model of the dataset
VI, but with different noise levels. SMC appears to be volatile also in this
setting; it collapses when the standard deviation of noise exceeds $0.05$, since
its affinity matrix precludes spectral clustering from identifying eigenvalues
with sufficient accuracy (see further explanation on the collapse of SMC at the end of Section~\ref{sec:clusterings_param}).

\subsubsection{Running time}\label{sec:running.time}

This section demonstrates that GCT outperforms SMC at the price of a small
increase in computational complexity. Similarly to any other manifold clustering
algorithm, computations have to be performed per local neighborhood, where local
linear structures are leveraged to increase clustering accuracy. The overall
complexity scales quadratically w.r.t.\ the number of data-points due to the
last step of Algorithm~\ref{alg:experiment}, which amounts to spectral
clustering of the $N\times N$ affinity matrix $\mb{W}$. Both the optimization
task of \eqref{equ:l1minimization} and the computation of a few principal
eigenvectors of the covariance matrix $\mb{C}_{x_i}$ in
Algorithm~\ref{alg:experiment} do not contribute much to the complexity since
operations are performed on a small number of points in the neighborhood
$J(x_i,r)$. The computational complexity of GCT is detailed in
Appendix~\ref{sec:complexity.GCT}. It is also noteworthy that GCT can be fully
parallelized since computations per neighborhood are independent. Nevertheless,
such a route is not followed in this section.

Compared with SMC, GCT has one additional component: identifying tangent
spaces through local covariance matrices---a task that entails local calculation of a
few principal eigenvectors. Nevertheless, it is shown in Appendix~\ref{sec:algebraic_trick}
that for $k$ neighbors it can be calculated with $\OO(D+k^3)$ operations.

The ratios of running times between GCT and SMC for all three types of
manifolds are illustrated in Table~\ref{table:runningtime}. It can be readily
verified that the extra step of identifying tangent spaces in GCT increases
running time by less than $11\%$ of the one for SMC.
\begin{table}\centering
\ra{1.3}
\begin{tabular}{@{}cccc@{}}\toprule
Running-time ratio & $\G(6,2)$ & $PD_{3\times 3}$ & $\Sb^2$ \\ \midrule
GCT/SMC & 1.06 & 1.05 & 1.11 \\
\bottomrule
\end{tabular}
\caption{Ratio of running times of GCT and SMC for instances of the synthetic datasets I, IV and VI}\label{table:runningtime}
\end{table}

Ratios of running times were also investigated for increasing ambient dimensions of the sphere.
More precisely, dataset VI of Section~\ref{subsec:synthetic}, which lies in $S^2$, was embedded via a random orthonormal matrix
into the unit sphere $\Sb^D$, where $D$ ranged from $100$ to $3,000$. Figure~\ref{fig:runtime_relative} shows
the ratios of the running time of GCT over that of SMC as a function of $D$. We
observe that the extra cost of computing the eigendecomposition in GCT is
mostly less than $20\%$ of SMC, and never exceeds $30\%$, even when the ambient
dimension is as large as $3,000$.

\begin{figure}[htb!]
  \centering
  \includegraphics[width=.60\linewidth]{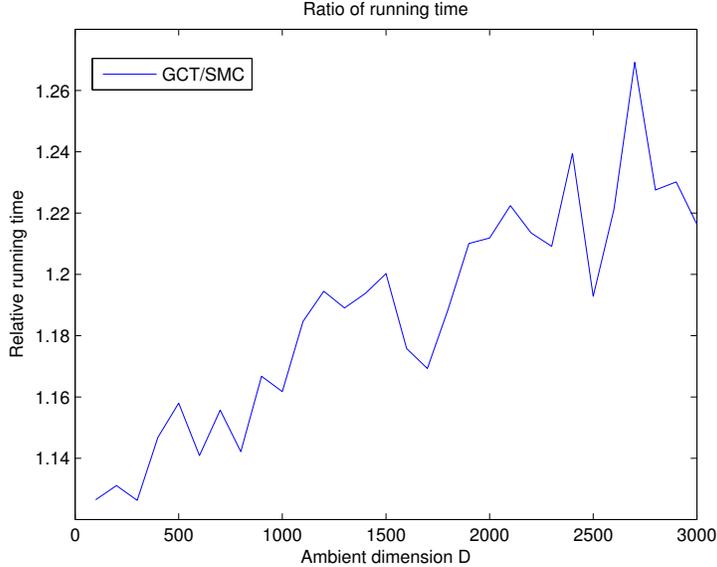}
  \caption{Relative running times of GCT w.r.t.\ SMC as the ambient dimension increases.
   With dimensions $D$ ranging from $100$ to $3,000$, dataset VI of Section~\ref{subsec:synthetic} was embedded via a random orthonormal matrix
into the unit sphere $\Sb^D$.
  Dataset VI of Section~\ref{subsec:synthetic} matrix with two random }
\label{fig:runtime_relative}
\end{figure}

\subsection{Synthetic Brain Fibers Segmentation}

\citet{6619442} cast the problem of segmenting diffusion magnetic resonance
imaging (DMRI) data of different fiber tracts as a clustering problem on
$\Sb^D$. The crux of the methodology lies on the transformation of diffusion
images, associated with different views of the same object, into orientation
distribution functions (ODFs), which are nothing but probability density
functions on $\Sb^2$. The discretized ODF (dODF) is a probability mass function
(pmf) $\mathbf{f}: (\Sb^2)^{D+1}\rightarrow \R_{+}^{D+1}:
(\mb{s}_1,\ldots,\mb{s}_{D+1}) \mapsto
\mathbf{f}(\mb{s}_1,\ldots,\mb{s}_{D+1}):= [f_1(\mb{s}_1), \ldots,
f_{D+1}(\mb{s}_{D+1})]^T$, with $\sum_{i=1}^{D+1} f_i(\mb{s}_i)=1$, that
describes the water diffusion pattern at a corresponding location of the
object's image according to the viewing directions
$\{\mb{s}_i\}_{i=1}^{D+1}$. Given $\{\mb{s}_i\}_{i=1}^{D+1}$ and a fixed
location, the \textit{square-root} (SR)dODF is the vector
$\sqrt{\mathbf{f}}(\mb{s}_1,\ldots,\mb{s}_{D+1}) :=
[\sqrt{f_1(\mb{s}_1)},\ldots,\sqrt{f_{D+1}(\mb{s}_{D+1})}]^T$, which lies on the
sphere $\Sb^D$ since $\textbf{f}$ is a pmf. In this way, pixels of diffusion
images of the same object at a given location are mapped into an element of
$\Sb^D$.  \citet{6619442} assume that each fiber tract is mapped into a
submanifold of $\Sb^D$ and thus try to identify different fiber tracts by
multi-manifold modeling on $\Sb^D$.

As suggested in~\citet{6619442}, to differentiate pixels with similar diffusion
patterns but located far from each other in an image, one has to incorporate
pixel spatial information in the segmentation algorithm. Therefore, for GCT, SMC
and SCR, the similarity entry $\mb{W}_{ij}$ of two pixels
$\mathbf{x}_i,\mathbf{x}_j\in \R^2$ is modified as
\[
\mb{W}^{\text{new}}_{ij}=\mb{W}_{ij}\cdot e^{-\|\mathbf{x}_i-\mathbf{x}_j\|_2^2/\sigma},
\]
where $\mb{W}$ is the similarity matrix before modification (e.g., for GCT, it
is described in Algorithm~2), $\sigma=0.1$ and $\|\mathbf{x}_i-\mathbf{x}_j\|_2$
is the Euclidean distance between two pixels. For EKM, where no spectral
clustering is employed, the dODF is simply augmented with the
spatial coordinates of $\mathbf{x}_i$ and $\mathbf{x}_j$.

\begin{figure}[htb!]
  \centering \subfloat[\footnotesize Randomly sampled 6 points from the colored
  regions of the \text{$[0,1]\times[0,1]$} domain.]
  {\label{fig:points} \includegraphics[width=.45\linewidth]{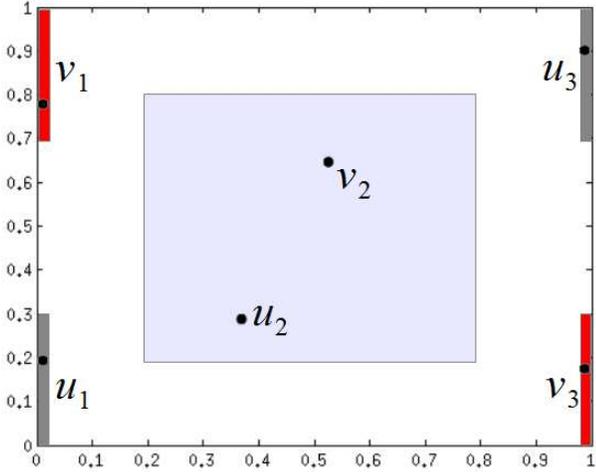}}
  \hfill
  \subfloat[\footnotesize A configuration of two intersecting fibers generated according to points in Figure~\ref{fig:points}]
  {\label{fig:fiber} \includegraphics[width=.45\linewidth]{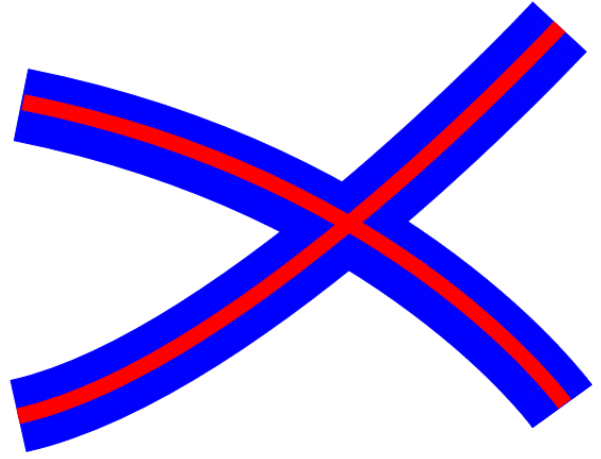}}
  \caption{Demonstration of fiber generation. Two fibers are generated in Figure~\ref{fig:fiber} by fitting two
    cubic splines to $\{\mb{u}_i\}_{i=1}^3$ and $\{\mb{v}_i\}_{i=1}^3$ in
    Figure~\ref{fig:points}, respectively.
}
\label{fig:dataset}
\end{figure}

Following~\citet{6619442}, we consider here
the problem of segmenting or clustering two 2D synthetic fiber tracts in the
$[0, 1]\times[0, 1]$ domain. To generate the fibers, six
points $\mb{u}_1,\mb{u}_2,\mb{u}_3,\mb{v}_1,\mb{v}_2,\mb{v}_3$ are randomly
chosen in the colored region of Figure~\ref{fig:points}. Two cubic splines
passing through $\{\mb{u}_1,\mb{u}_2,\mb{u}_3\}$ and
$\{\mb{v}_1,\mb{v}_2,\mb{v}_3\}$, respectively, are set to be the center of the
fibers (cf., red curves in Figure~\ref{fig:fiber}). Fibers are defined as the
curved bands around the splines with bandwidth $0.12$ (cf., blue region
in Figure~\ref{fig:fiber}).

\begin{table*}\centering
\ra{1.3}
\begin{tabular}{@{}rrrrr@{}}\toprule
Methods & SNR=40 & SNR=30 & SNR=20 & SNR=10\\ \midrule

GCT & \bf{0.80} $\pm$ 0.12 & \bf{0.82} $\pm$ 0.12 & \bf{0.78} $\pm$ 0.14 & \bf{0.80} $\pm$ 0.13\\
SMC & 0.73 $\pm$ 0.14 & 0.73 $\pm$ 0.13 & 0.70 $\pm$ 0.13 & 0.67 $\pm$ 0.13\\
SCR & 0.66 $\pm$ 0.11 & 0.66 $\pm$ 0.11 & 0.68 $\pm$ 0.11 & 0.66 $\pm$ 0.11\\
EKM & 0.59 $\pm$ 0.08 & 0.58 $\pm$ 0.08 & 0.61 $\pm$ 0.08 & 0.59 $\pm$ 0.08\\
\bottomrule
\end{tabular}
\caption{Mean $\pm$ standard deviation of accuracy rates for $100$
  experiments on clustering synthetic brain fibers.}\label{table:fiber}
\end{table*}

\begin{figure*}[htb!]
\centering
\includegraphics[width=0.8\textwidth]{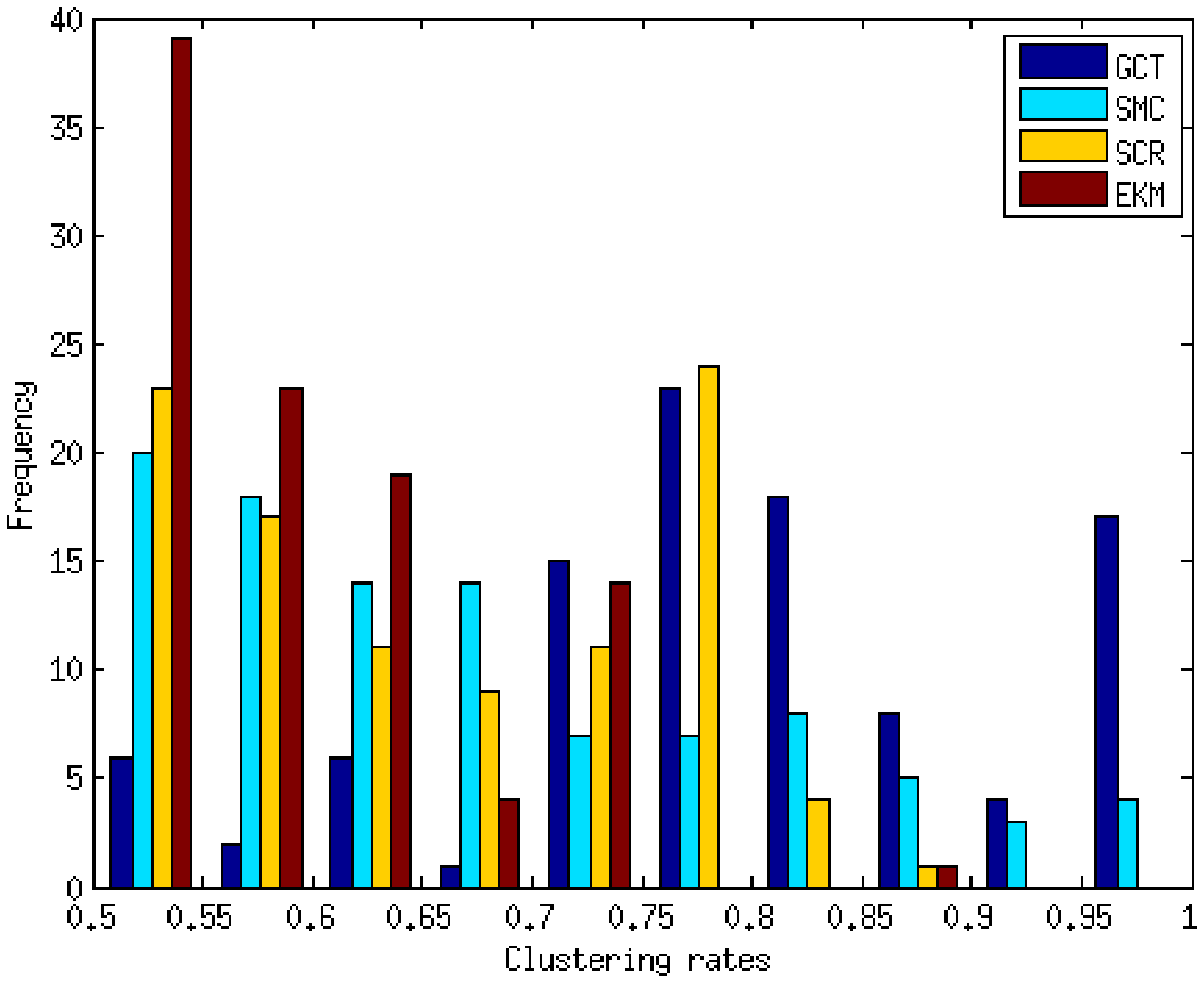}
\caption{Histogram of clustering rates for the noise level $\text{SNR}=10$ over
  a total number of $100$ experiments. Each bar shows the number of experiments
  whose rates fall within one of the ten intervals of length 0.05 in the
  partition. For example, since the tallest bar within the $[0.95,1]$ range is
  the blue one, GCT is the most likely method to achieve almost accurate
  clustering. On the contrary, the brown bar is the tallest one between the
  range of $0.5$ and $0.55$, meaning that a clustering rate within $[0.5,0.55]$
  is the most likely one to be achieved for EKM over $100$ experiments.}
\label{fig:visual_pdf}
\end{figure*}

Given a pair of such fibers, the next step is to map each pixel (e.g., both red
and blue ones in Figure~\ref{fig:fiber}) to a point (SRdODF) in
$\Sb^D$. To this end, the software code provided by \citet{fiber_web} is used to
generate SRdODFs on $\Sb^{100}$, where diffusion images $\{S_n\}_{n=1}^G$ at $G=70$
gradient directions, with baseline image $S_0=100$ and $b=4,000\text{s/mm}^2$,
are considered. The dimensionality of the generated SRdODFs corresponds to $100$
directions. Moreover, Gaussian noise $\mathcal{N}(0,\sigma^2)$ was added in the
ODF-generation mechanism, resulting in a signal-to-noise ratio $\text{SNR} =
S_0/\sigma$ (more details on the construction can be found in
\citet{6619442}). Typical noise levels for real-data brain images are
considered: $\text{SNR}=10, 20, 30$, and $40$ (i.e., $\sigma=10, 5, 10/3,
2.5$).

Once SRdODFs are formed, clustering is carried out on the Riemannian manifold
$\mathbb{S}^D$. This in turn provides a segmentation of pixels according to
different fiber tracts. A total number of $100$ pairs of synthetic brain fibers
are randomly generated, and clustering is performed for each
pair. Table~\ref{table:fiber} reports the mean $\pm$ standard deviation of the
clustering accuracy rates. Results clearly suggest that GCT outperforms the
other three clustering methods. For the case of $\text{SNR}=10$,
Figure~\ref{fig:visual_pdf} plots sample distributions of accuracy rates and shows
that GCT demonstrates the highest probability of achieving almost accurate
clustering among competing schemes.

\subsection{Experiments with Real Data}

In this section, GCT performance is assessed on real
  datasets. Scenarios where data within each cluster have submanifold structures
  are demonstrated.


\subsubsection{Stylized Application: Texture Clustering}\label{exp:texture}
We cluster local covariance matrices obtained from various transformations of images
 of the Brodatz database~\citep{brodatz_web} where the goal is to be able to distinguish
 between the different images independently of the transformation.


The Brodatz database contains $112$
  images of $640 \times 640$ pixels with different textures (e.g., brick wall,
  beach sand, grass) captured under uniform lighting and in frontview
  position. We apply three simple deformations to these images, which mimic real settings:
 different lighting conditions, stretching (obtained by shearing) and different viewpoints (obtained by affine transformation). Figure~\ref{fig:Brodatz}
 shows sample images in the Brodatz database and their deformations.
\begin{figure*}[htb!]
\captionsetup[subfigure]{labelformat=empty}
\centering
\subfloat[]
{\includegraphics[width=.13\linewidth]{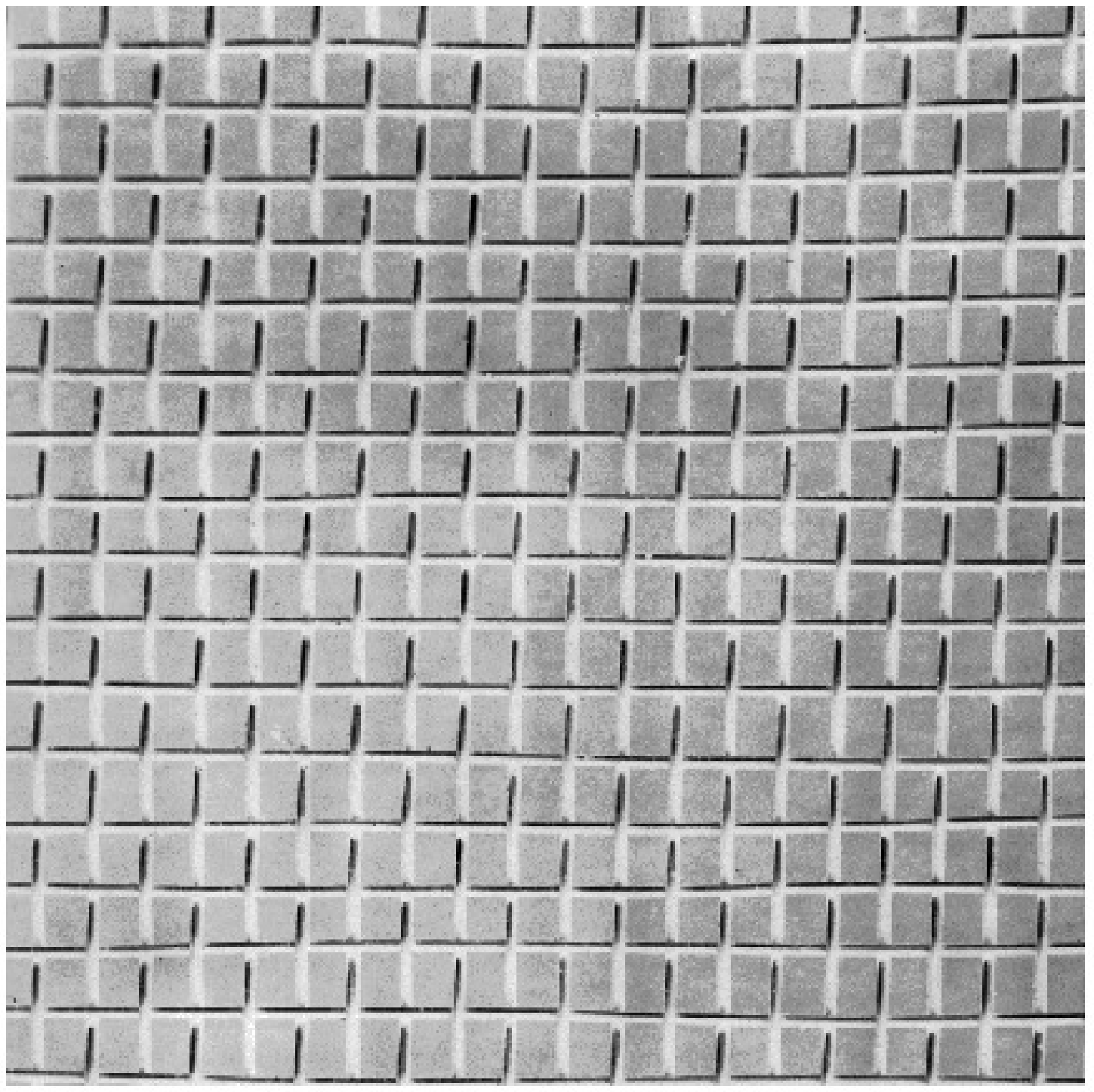}} \hspace{-0.6em}
\subfloat[]
{\includegraphics[width=.13\linewidth]{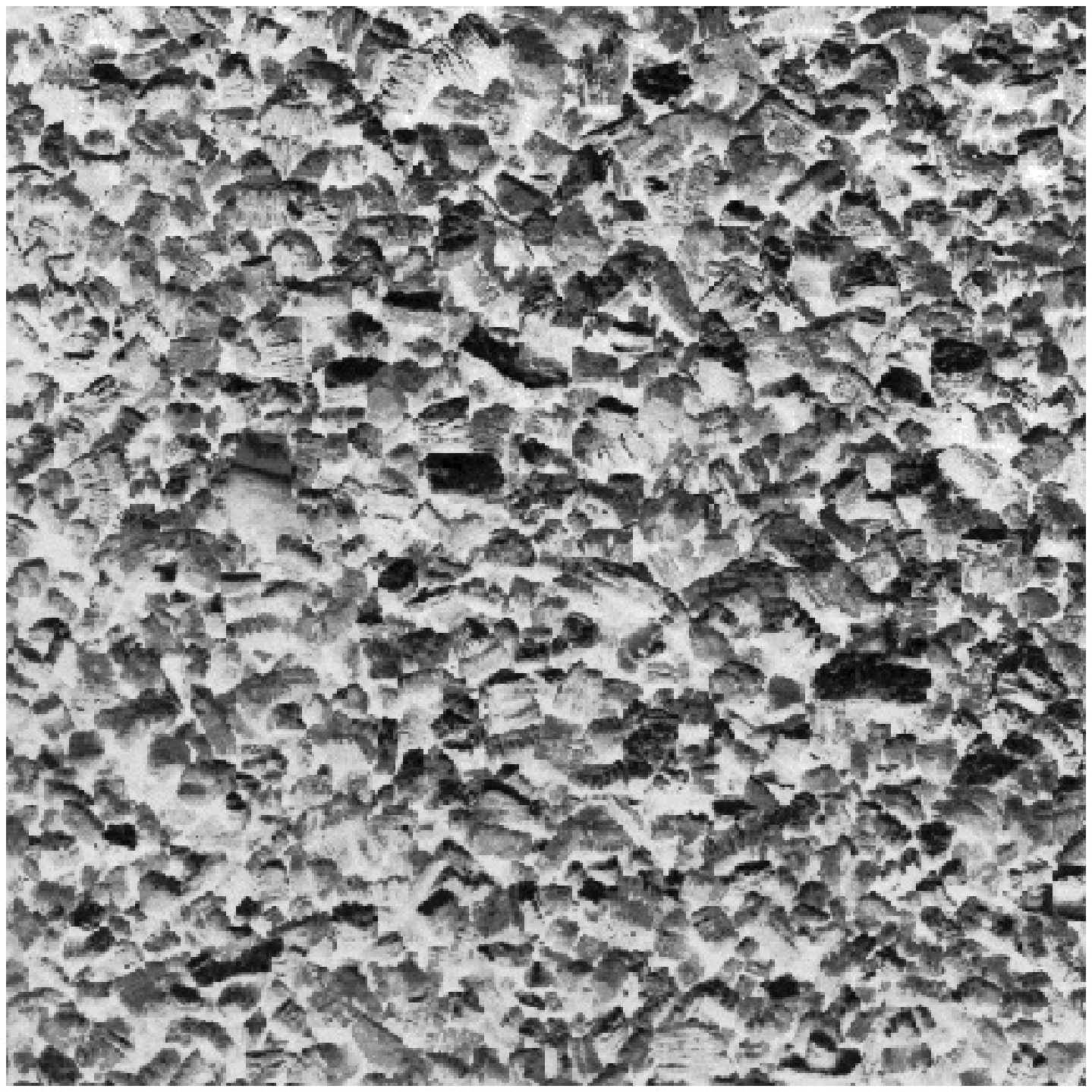}} \hspace{-0.6em}
\subfloat[]
{\includegraphics[width=.13\linewidth]{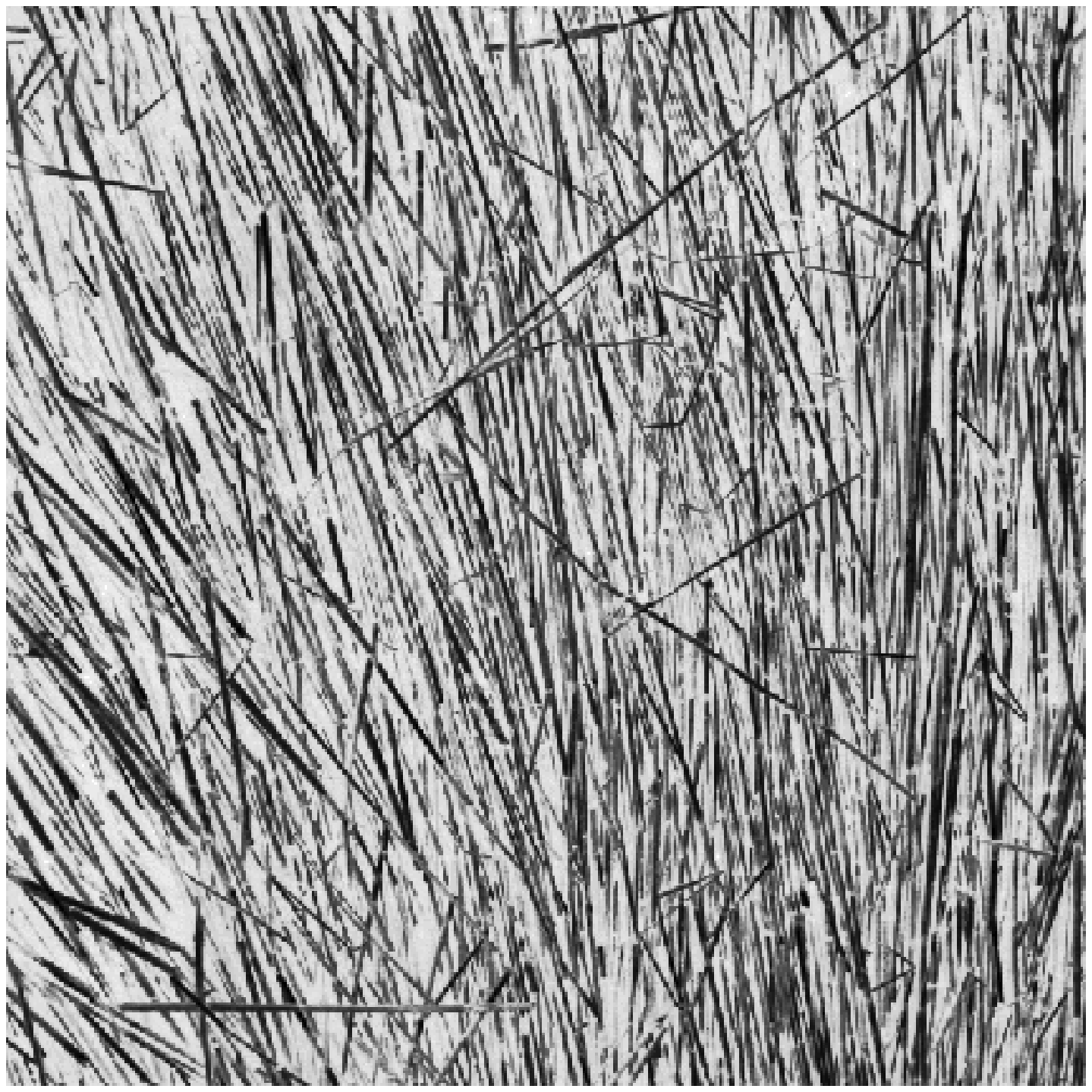}} \hspace{-0.6em}
\subfloat[]
{\includegraphics[width=.13\linewidth]{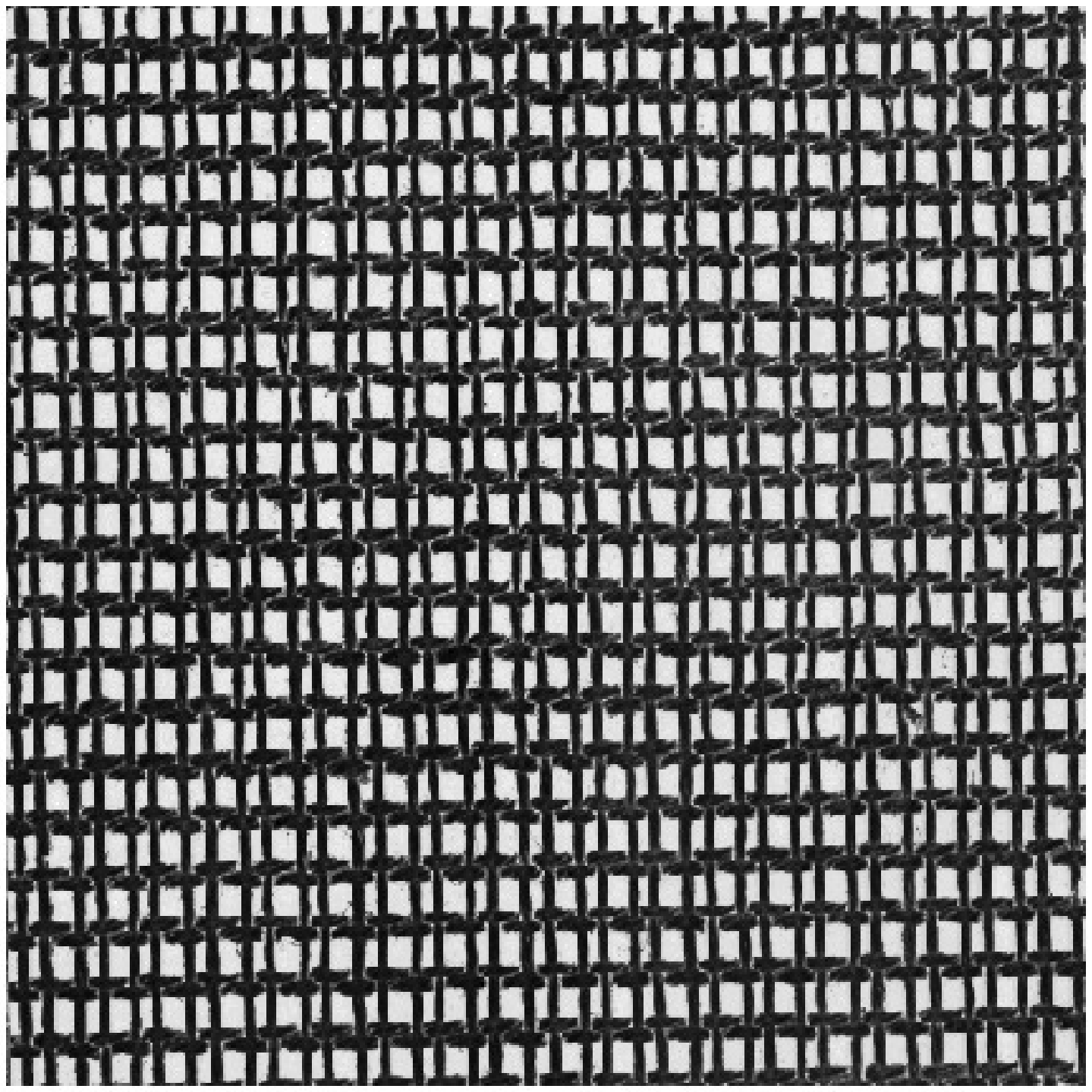}} \hspace{-0.6em}
\subfloat[]
{\includegraphics[width=.13\linewidth]{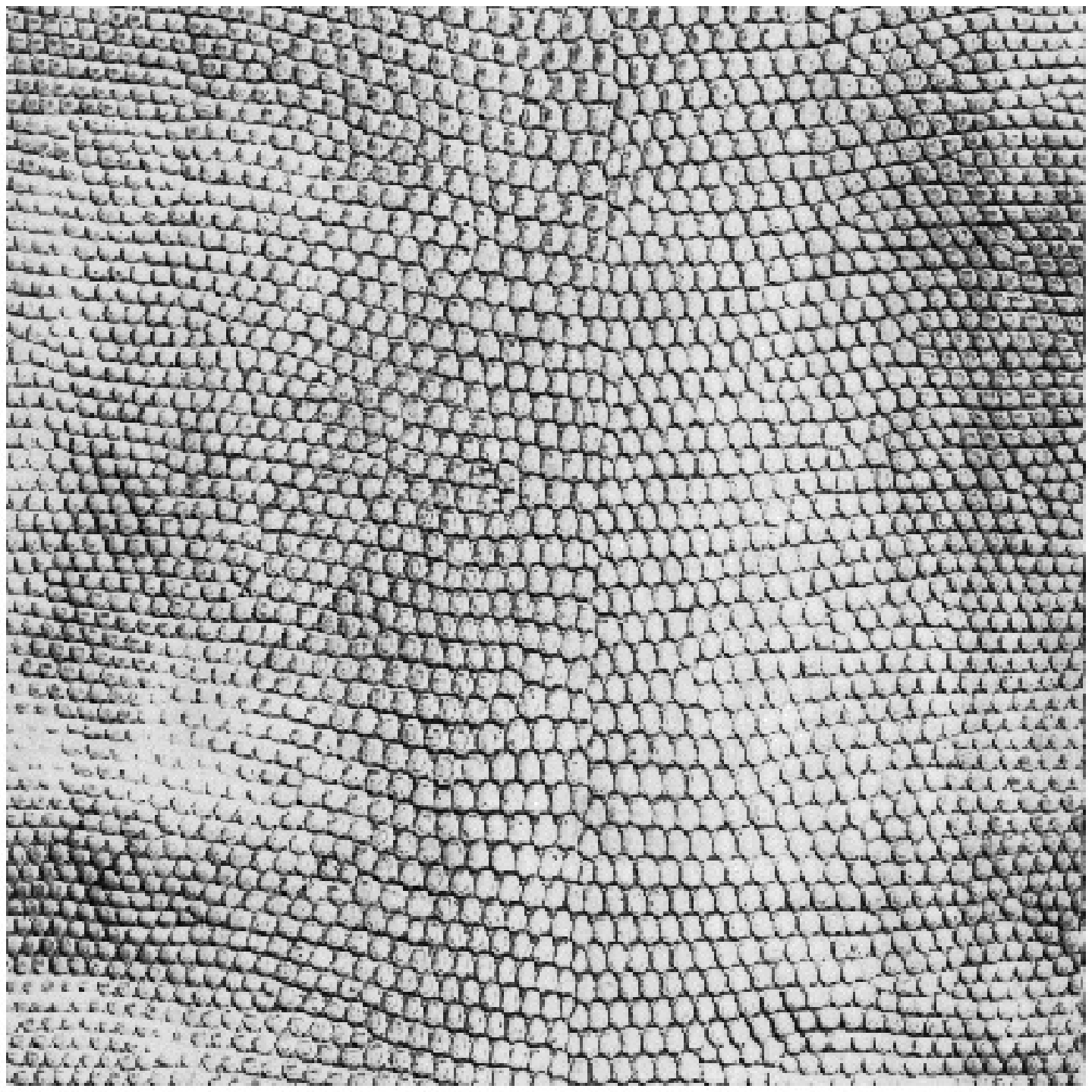}} \hspace{-0.6em}
\subfloat[]
{\includegraphics[width=.13\linewidth]{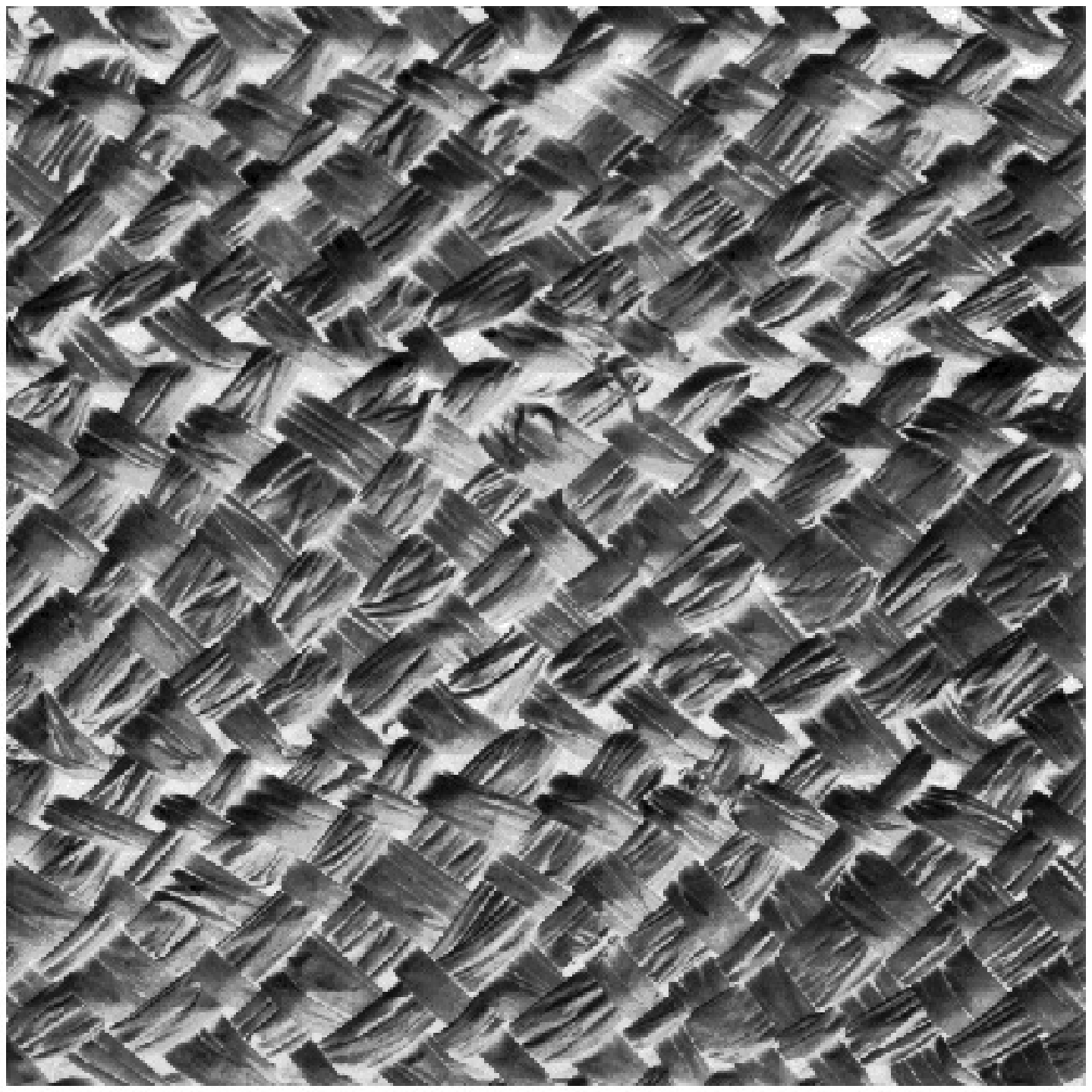}}\\ \vspace{-2.5em}
\subfloat[]
{\includegraphics[width=.13\linewidth]{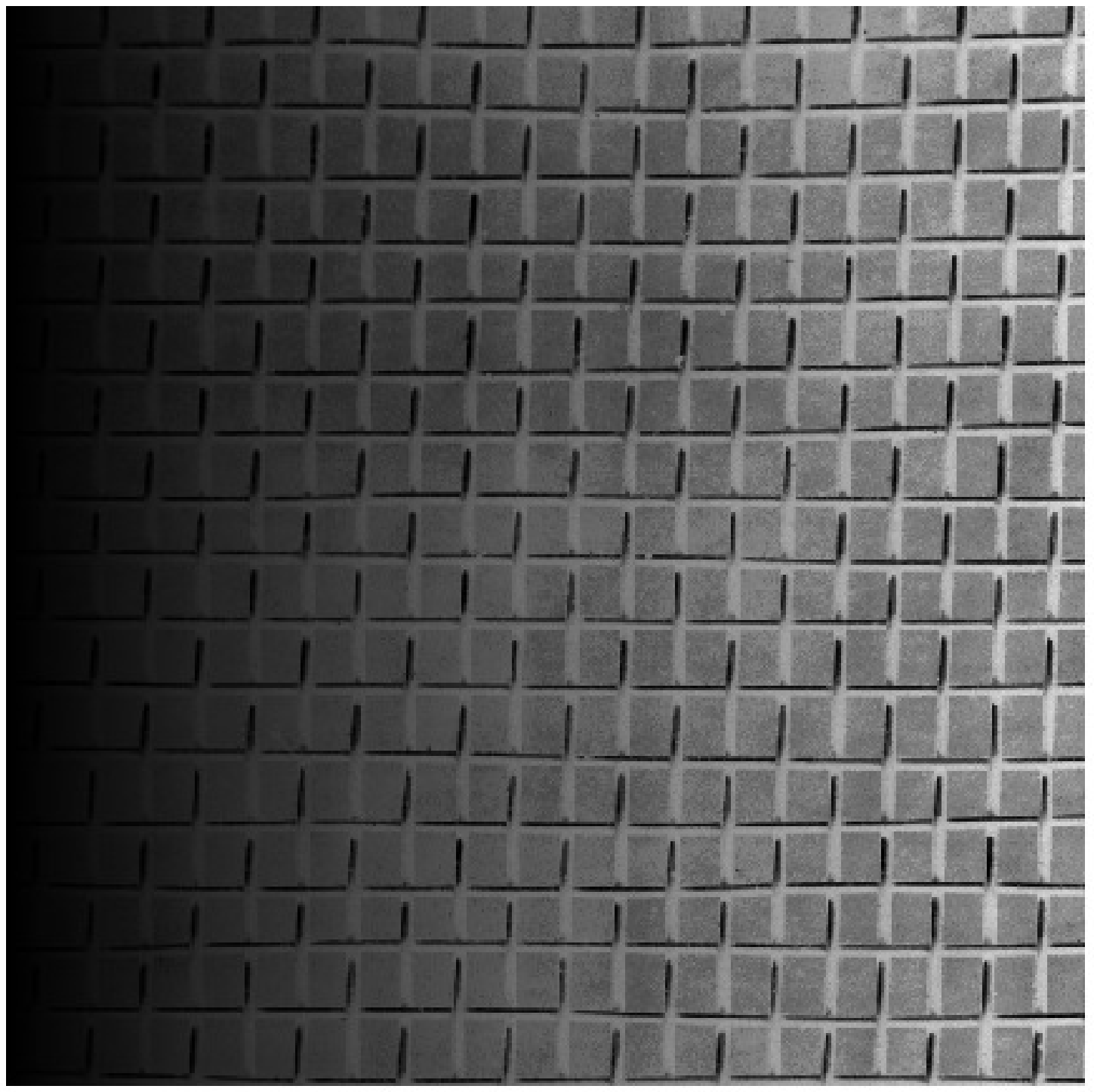}} \hspace{-0.6em}
\subfloat[]
{\includegraphics[width=.13\linewidth]{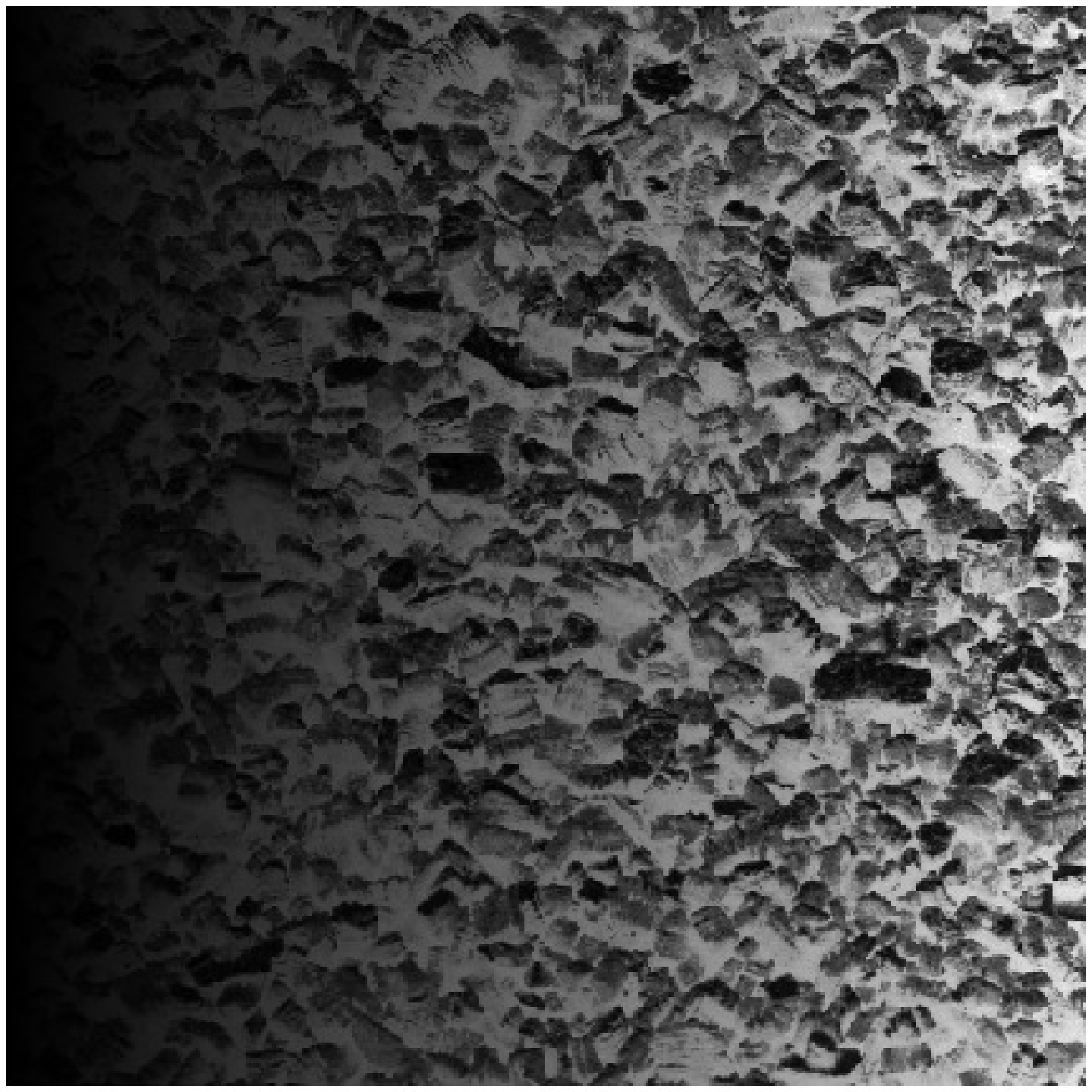}} \hspace{-0.6em}
\subfloat[]
{\includegraphics[width=.13\linewidth]{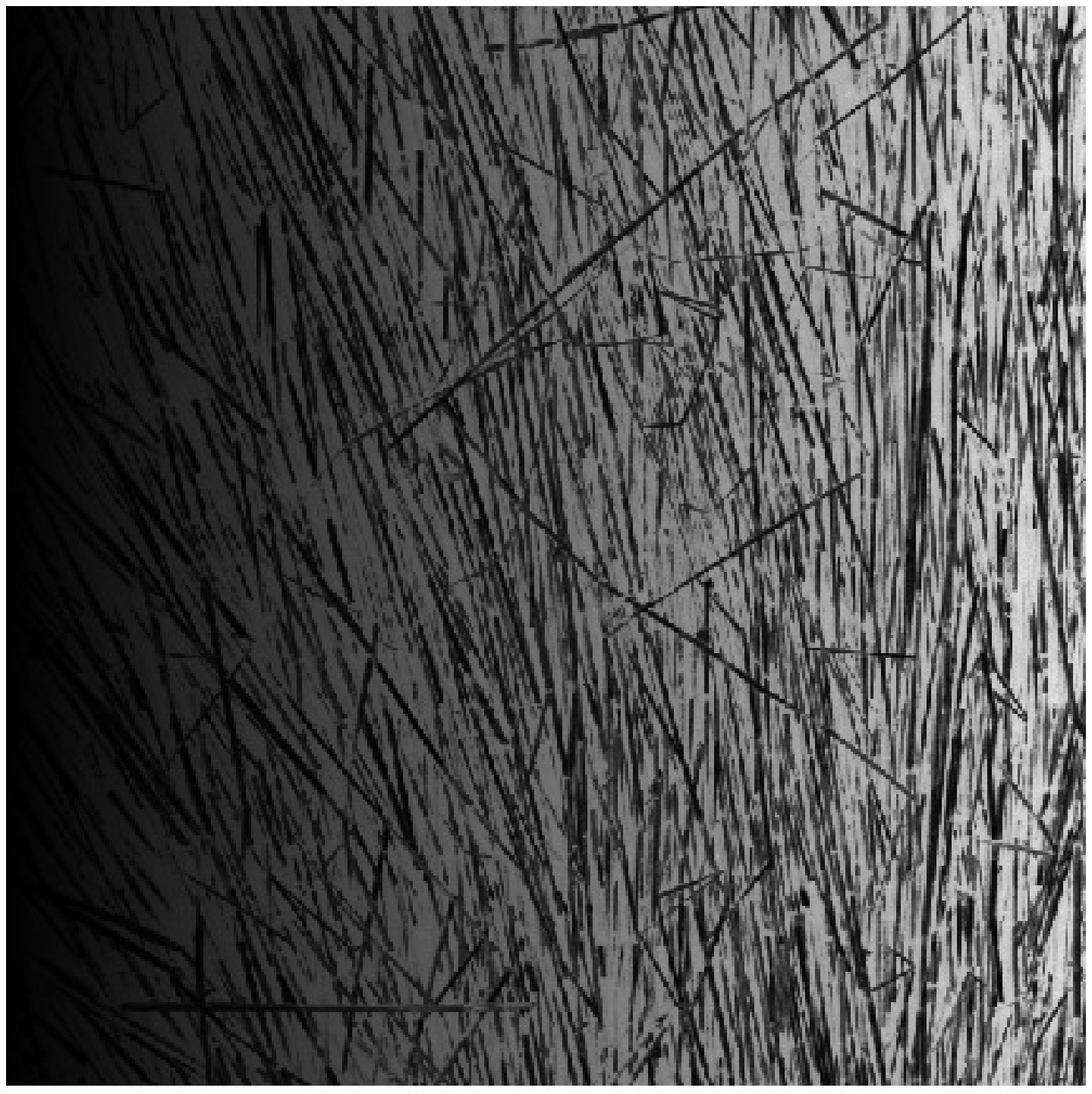}} \hspace{-0.6em}
\subfloat[]
{\includegraphics[width=.13\linewidth]{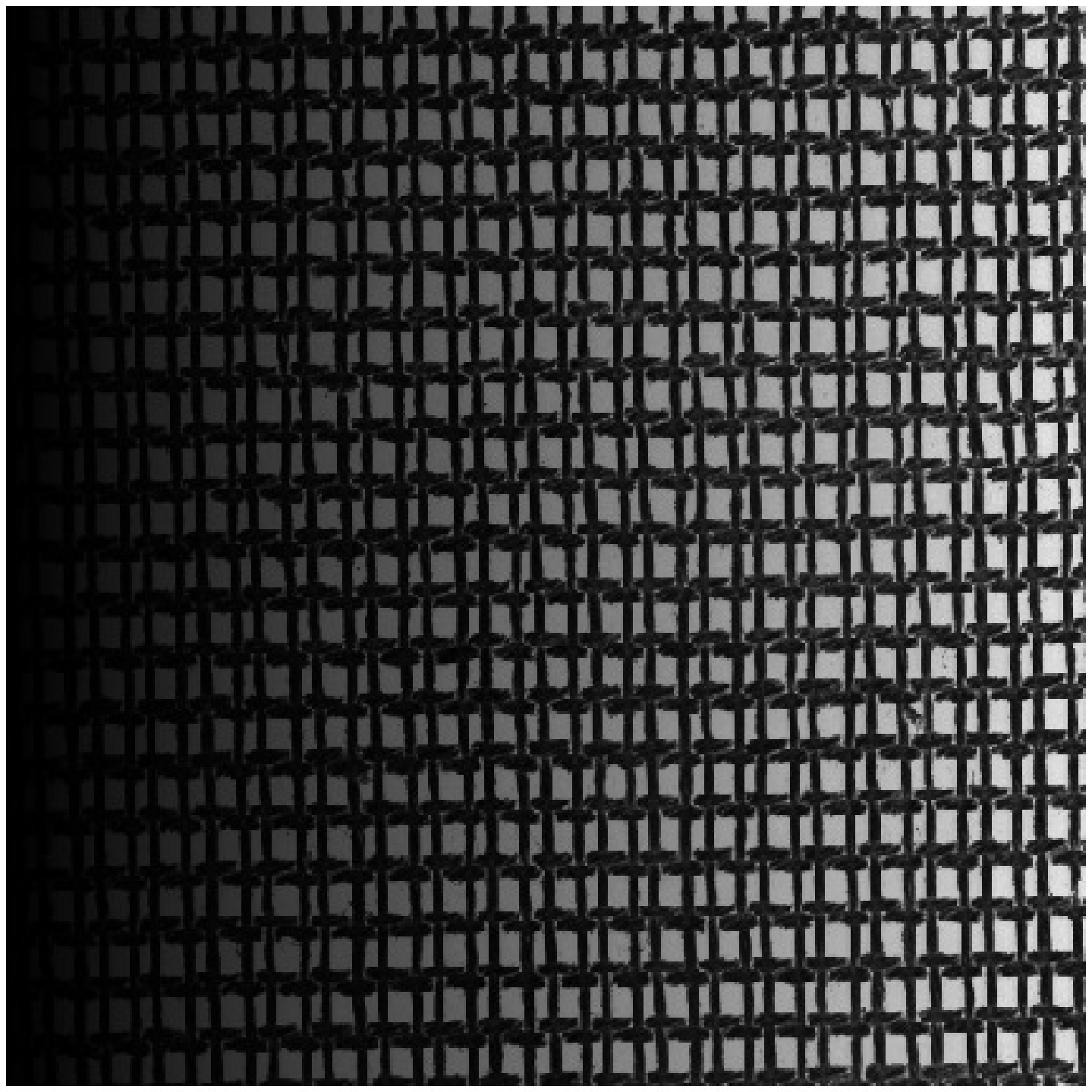}} \hspace{-0.6em}
\subfloat[]
{\includegraphics[width=.13\linewidth]{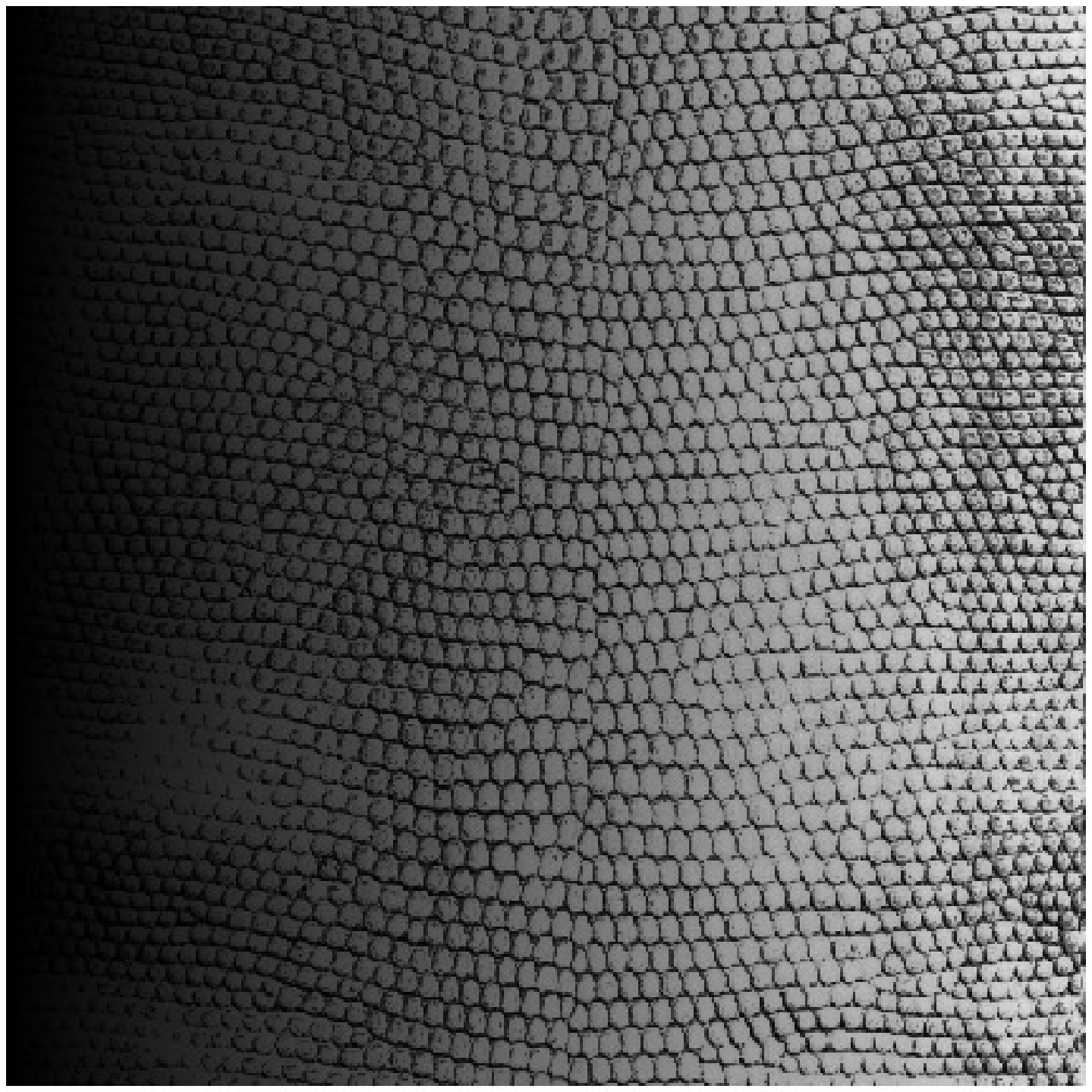}} \hspace{-0.6em}
\subfloat[]
{\includegraphics[width=.13\linewidth]{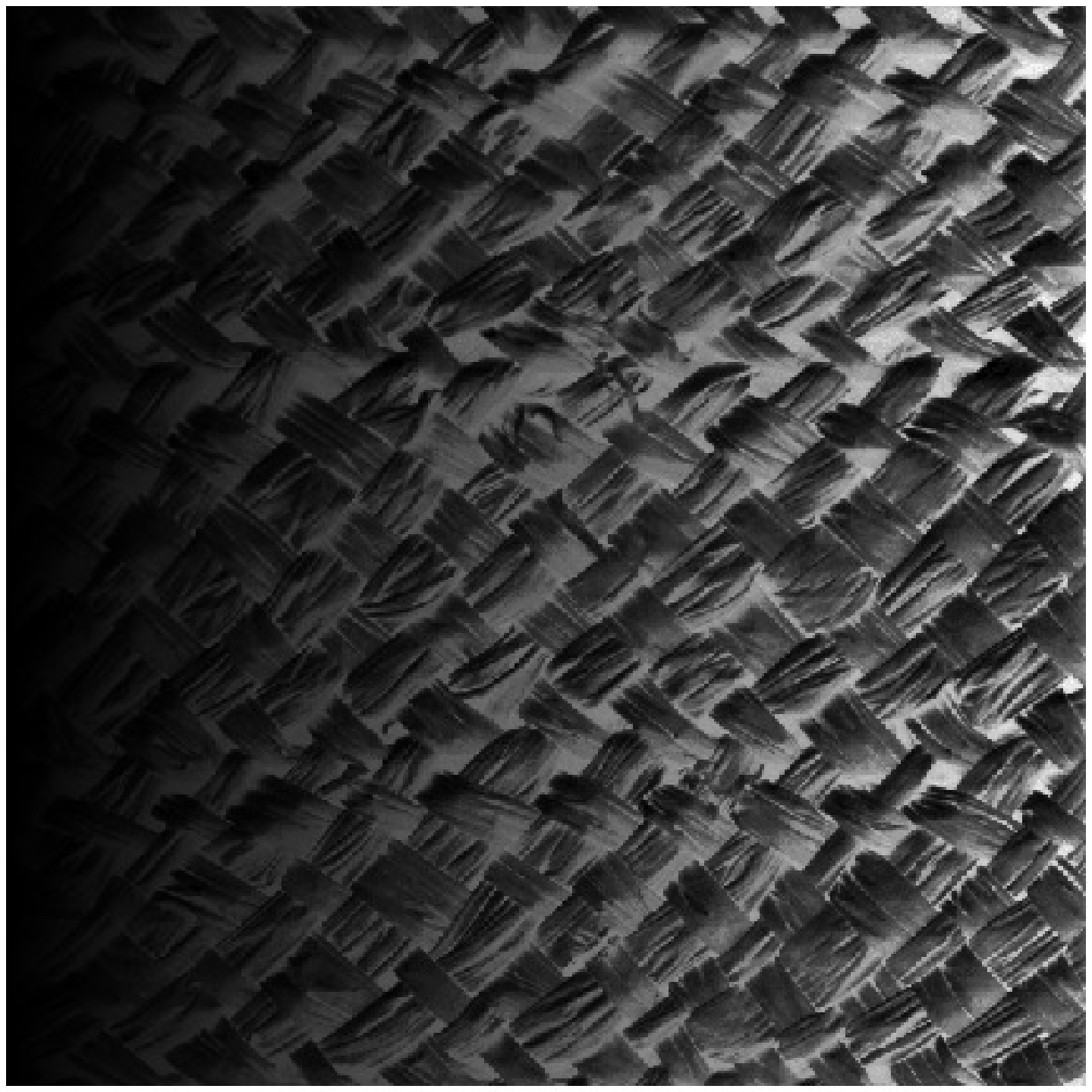}}\\ \vspace{-2.5em}
\subfloat[]
{\includegraphics[width=.13\linewidth]{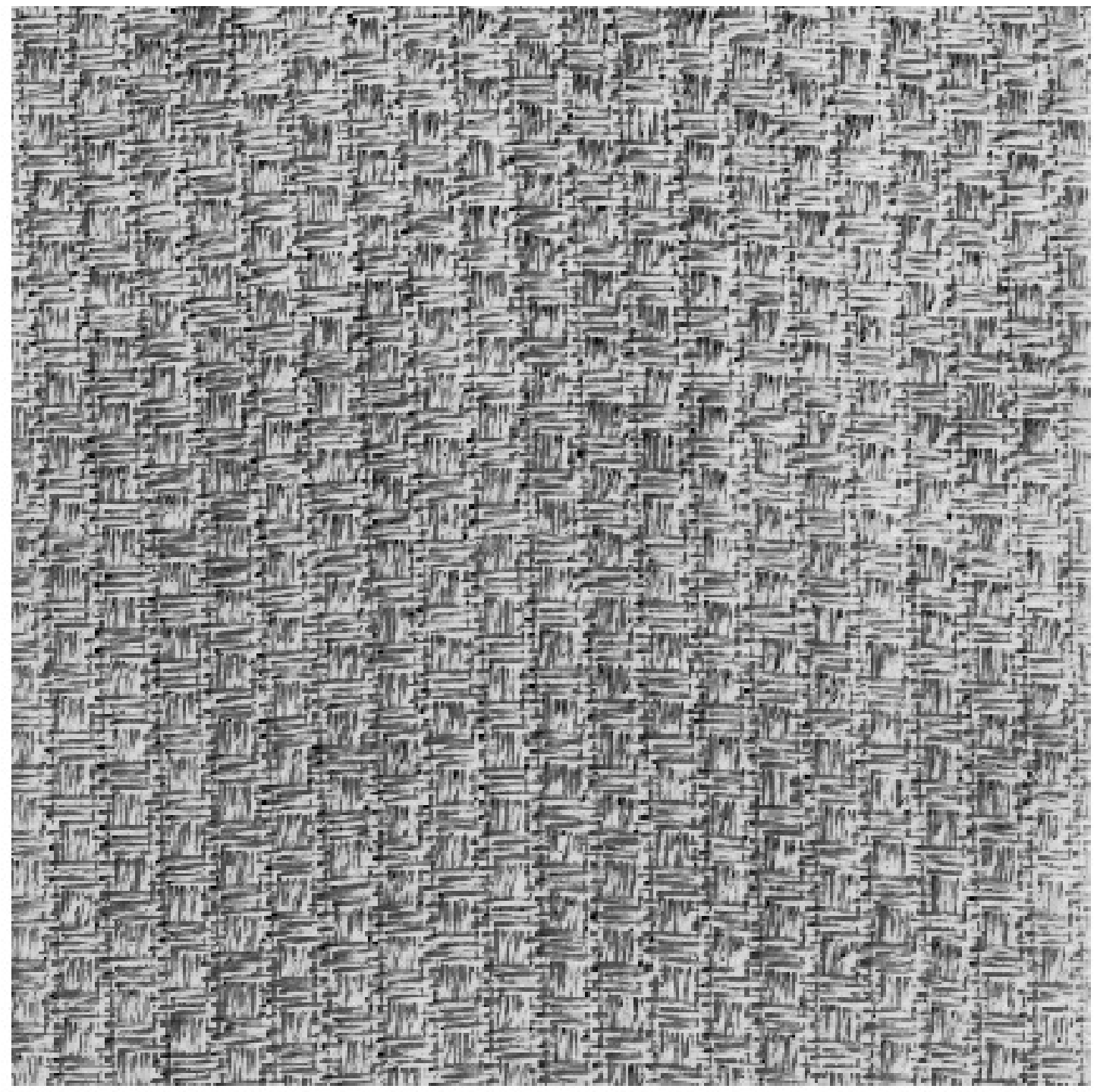}} \hspace{-0.6em}
\subfloat[]
{\includegraphics[width=.13\linewidth]{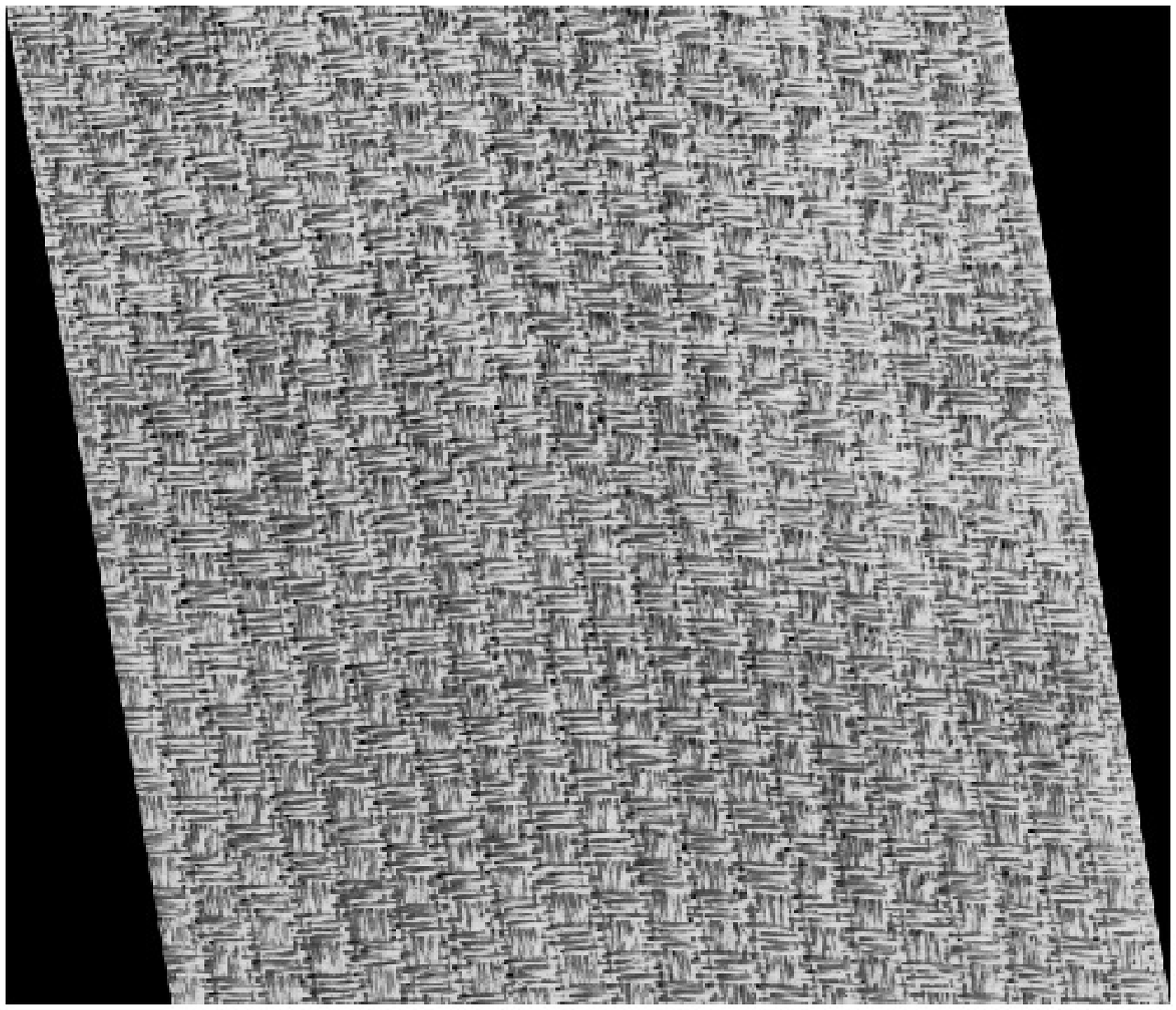}} \hspace{-0.6em}
\subfloat[]
{\includegraphics[width=.13\linewidth]{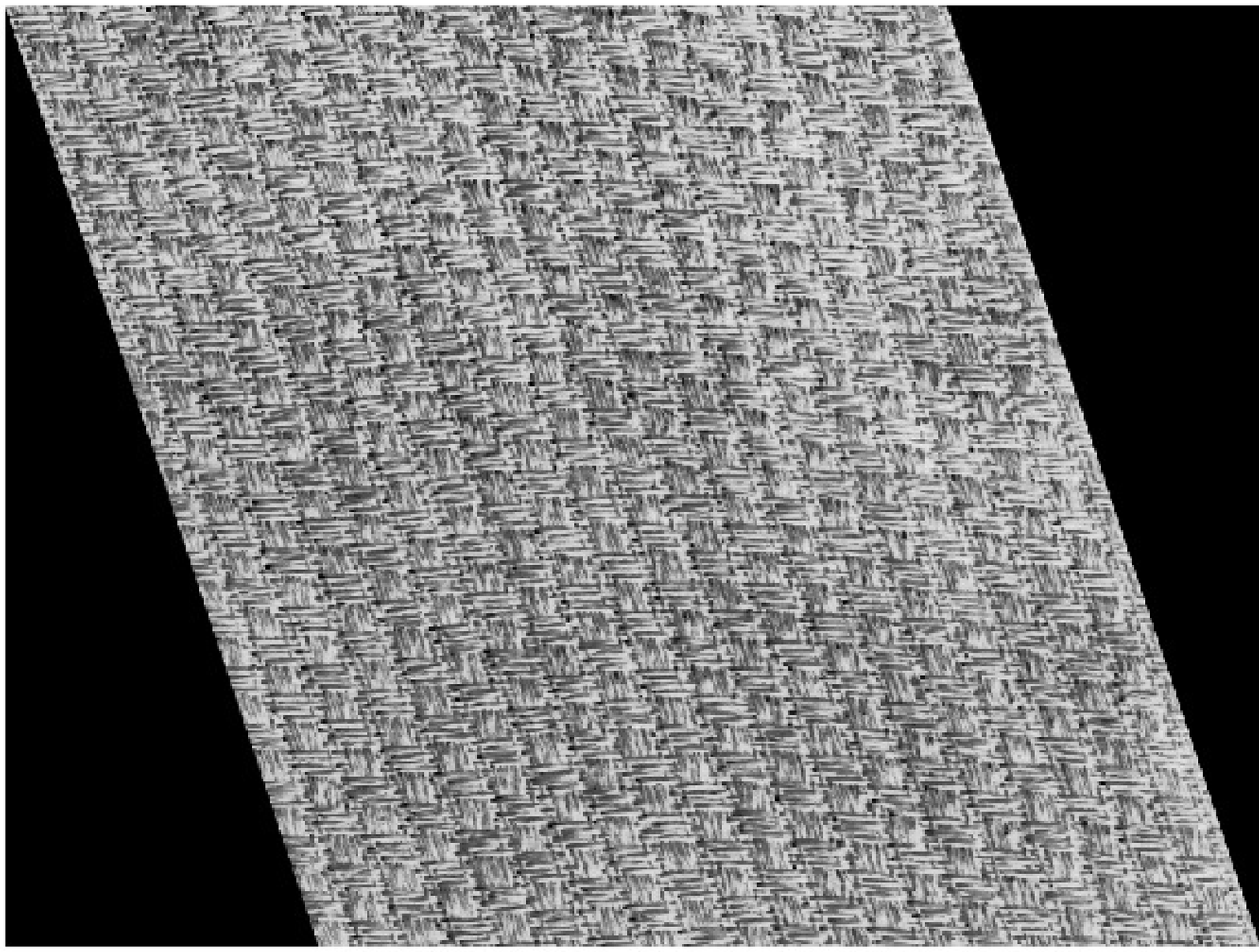}} \hspace{-0.6em}
\subfloat[]
{\includegraphics[width=.13\linewidth]{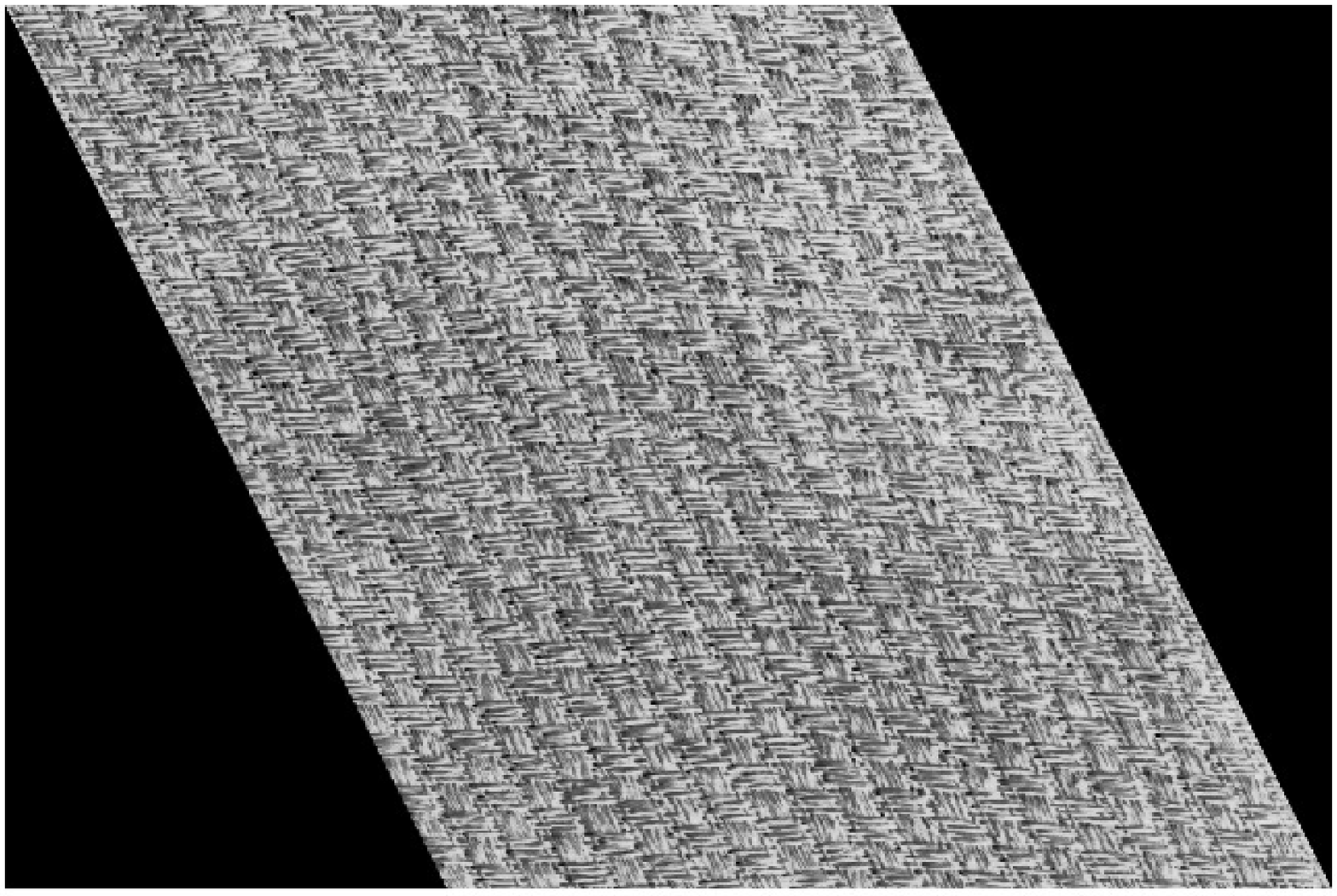}} \hspace{-0.6em}
\subfloat[]
{\includegraphics[width=.13\linewidth]{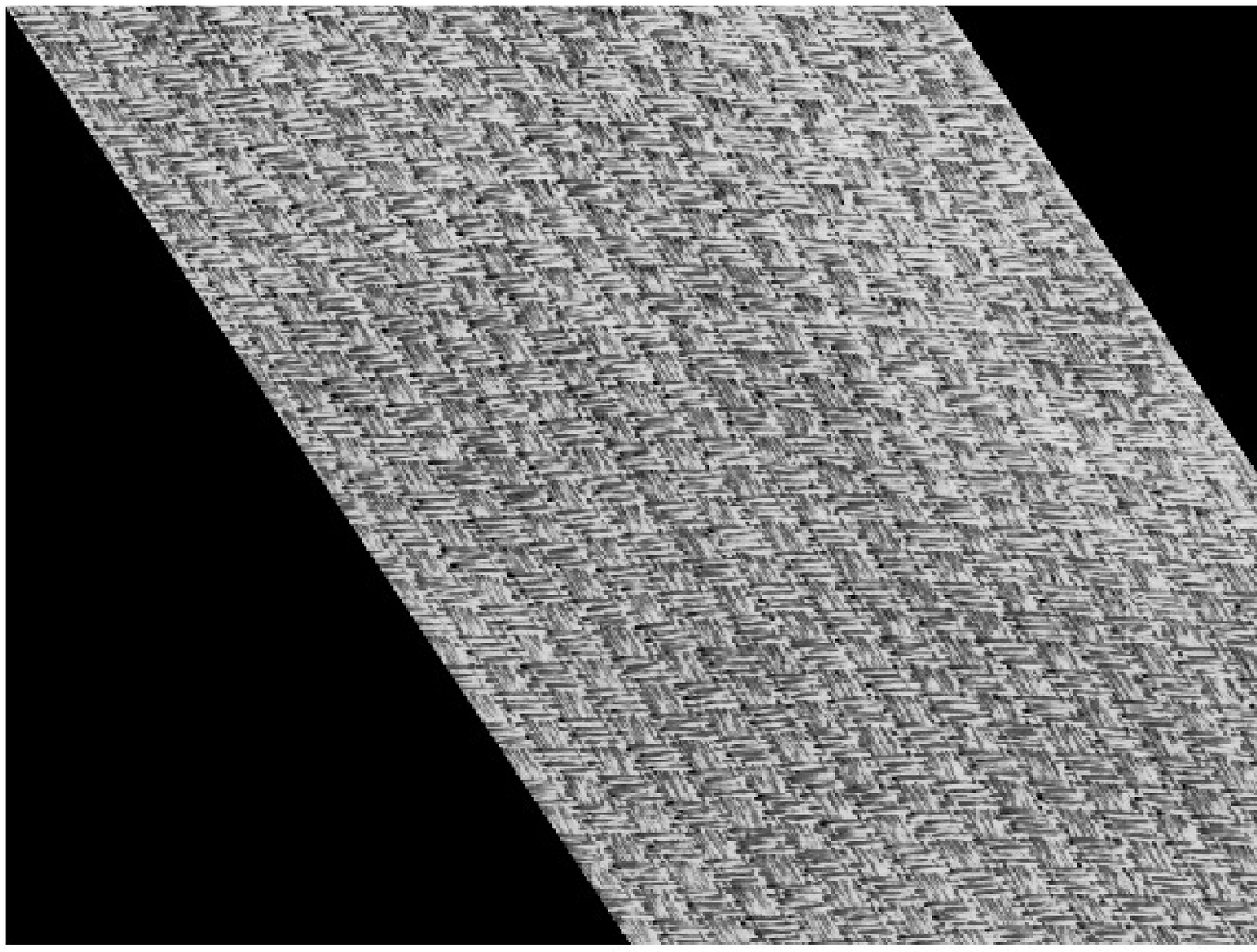}} \hspace{-0.6em}
\subfloat[]
{\includegraphics[width=.13\linewidth]{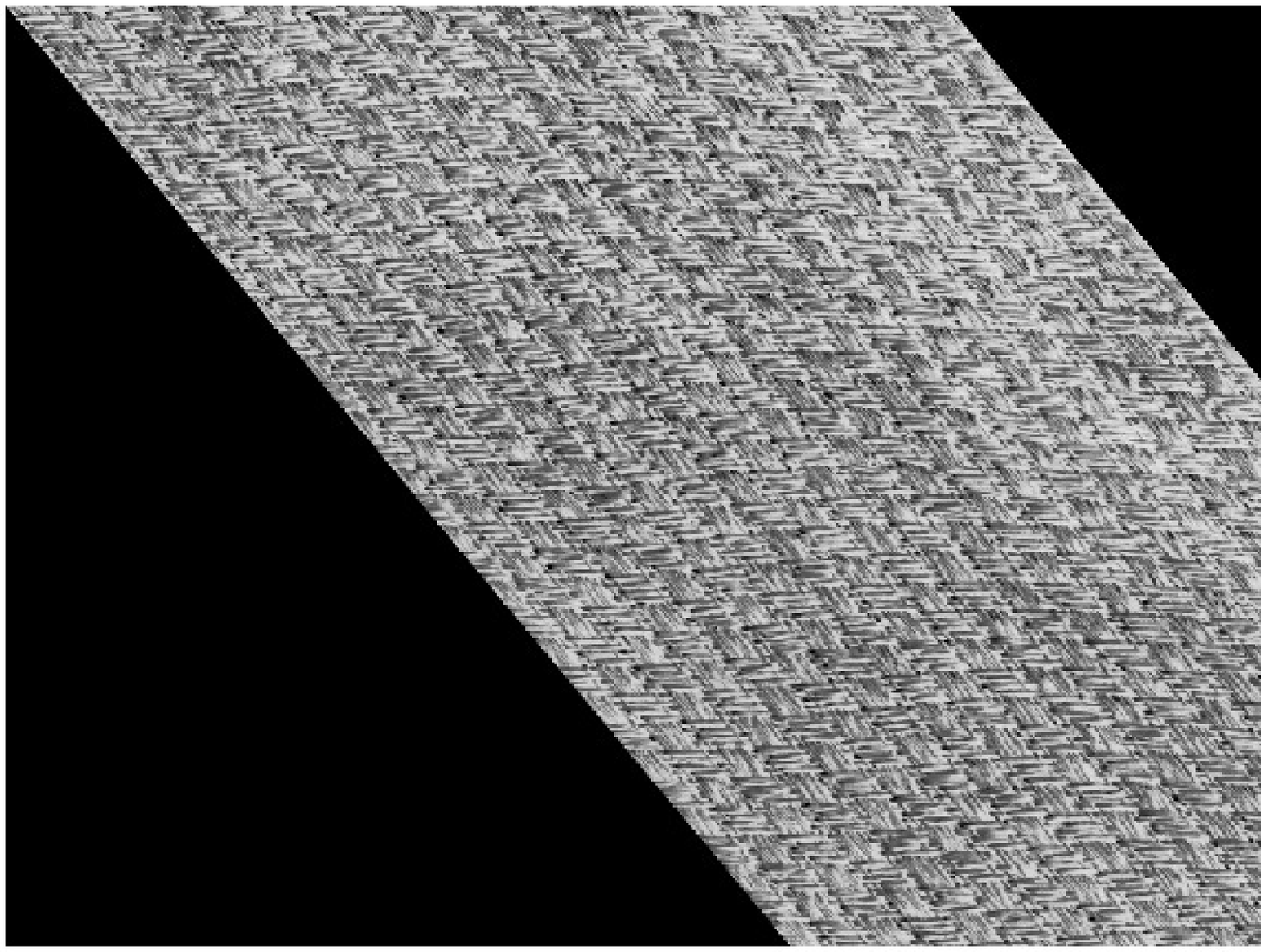}}\\ \vspace{-2.5em}
\subfloat[]
{\includegraphics[width=.13\linewidth]{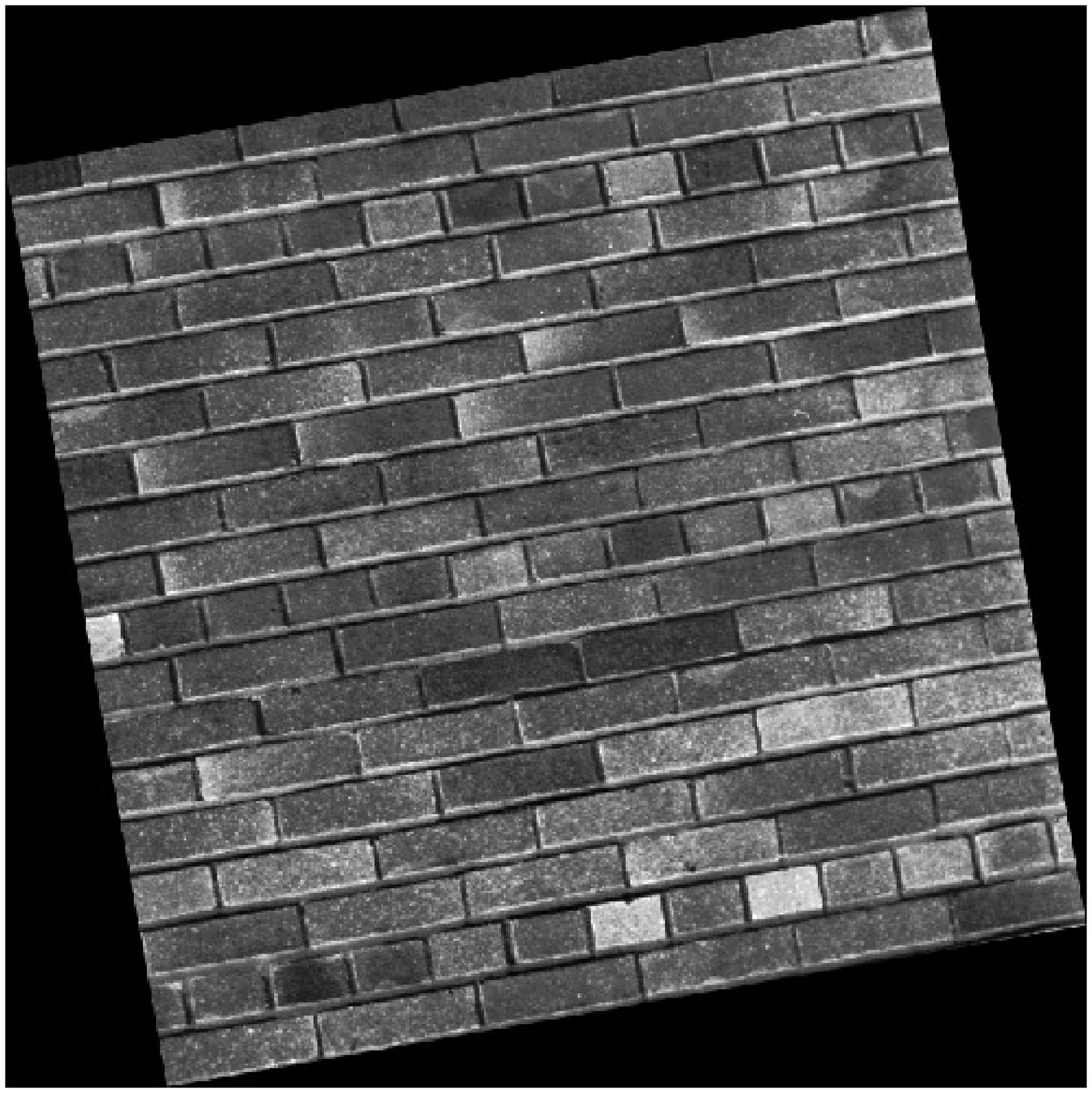}} \hspace{-0.6em}
\subfloat[]
{\includegraphics[width=.13\linewidth]{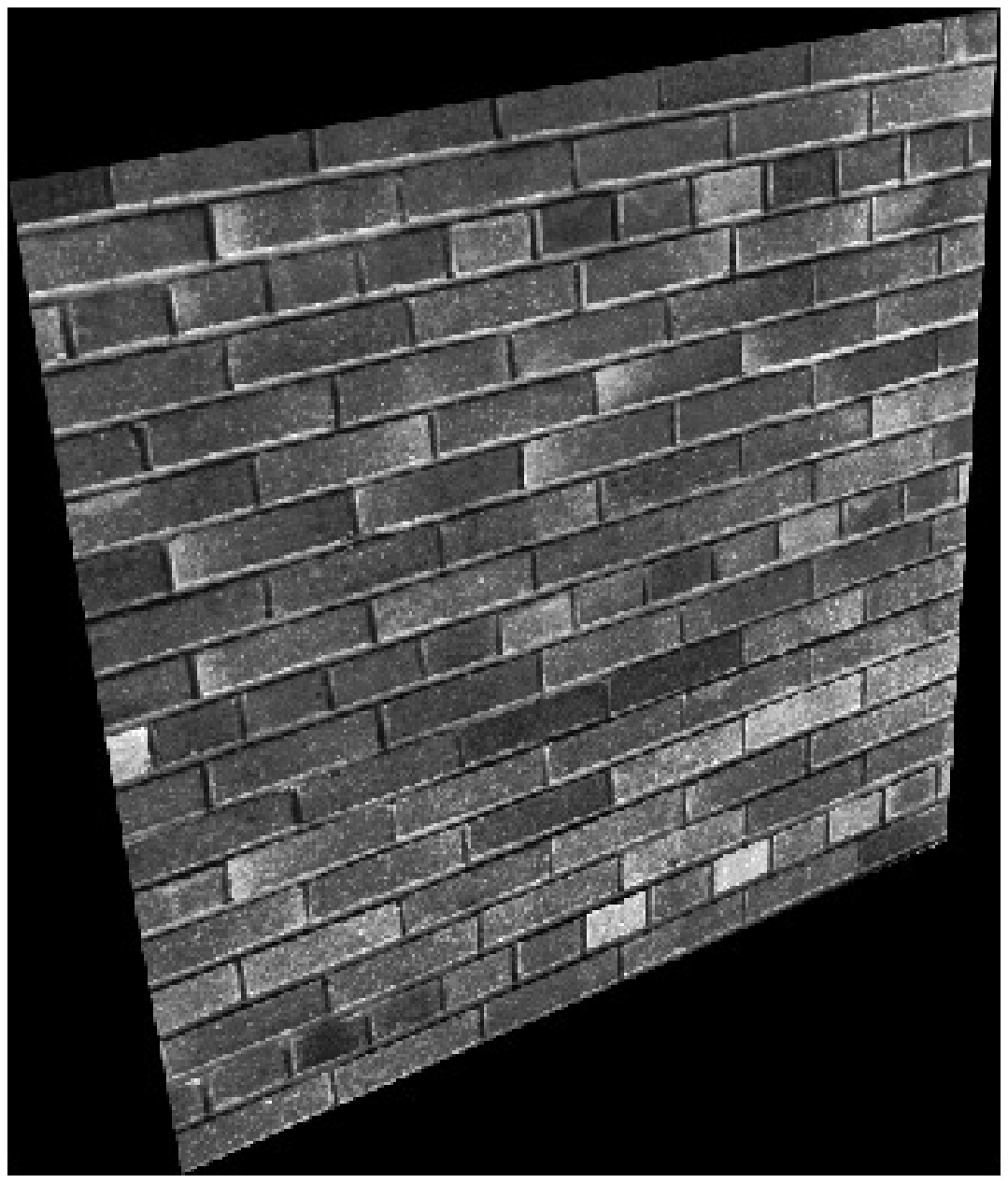}} \hspace{-0.6em}
\subfloat[]
{\includegraphics[width=.13\linewidth]{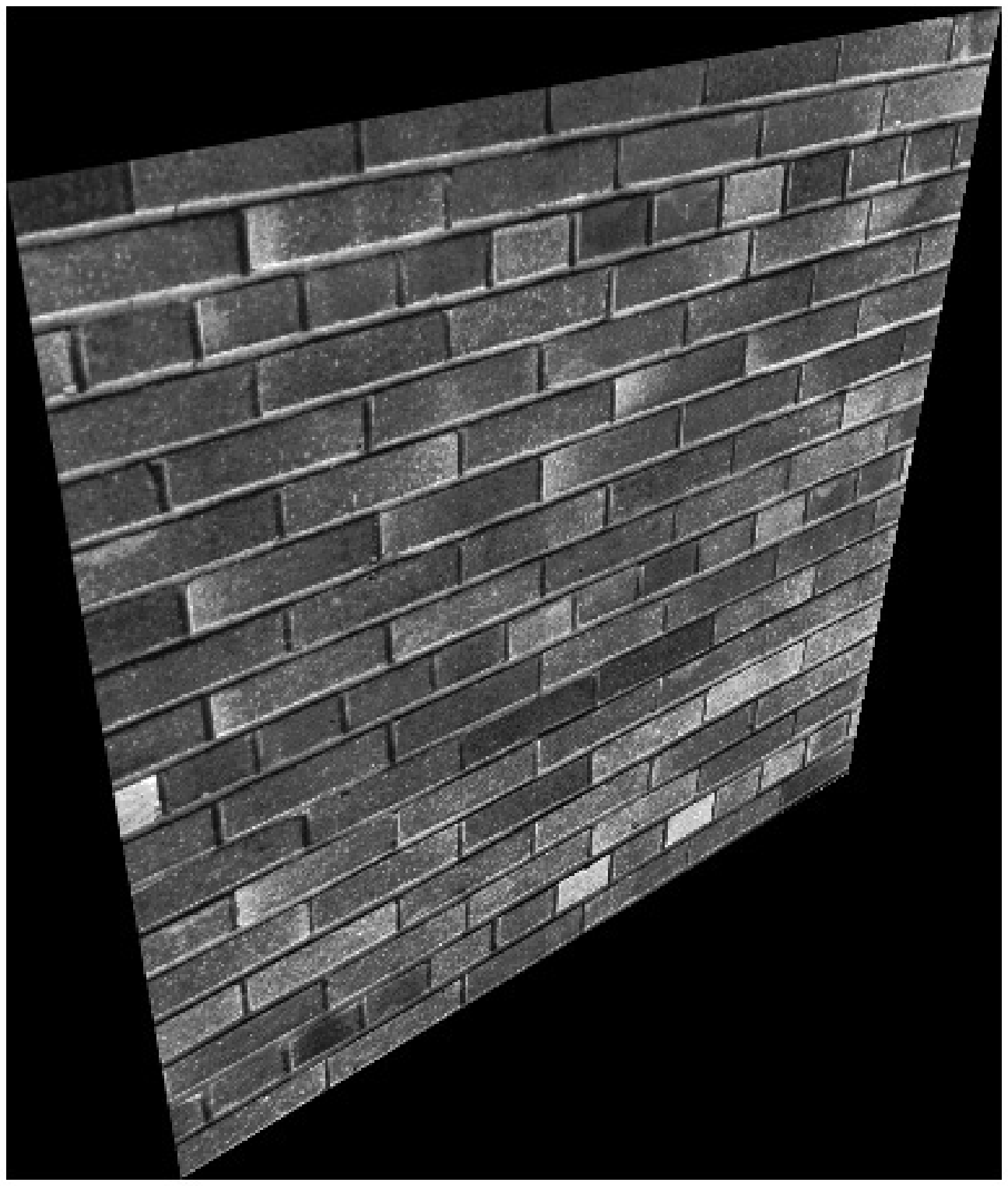}} \hspace{-0.6em}
\subfloat[]
{\includegraphics[width=.13\linewidth]{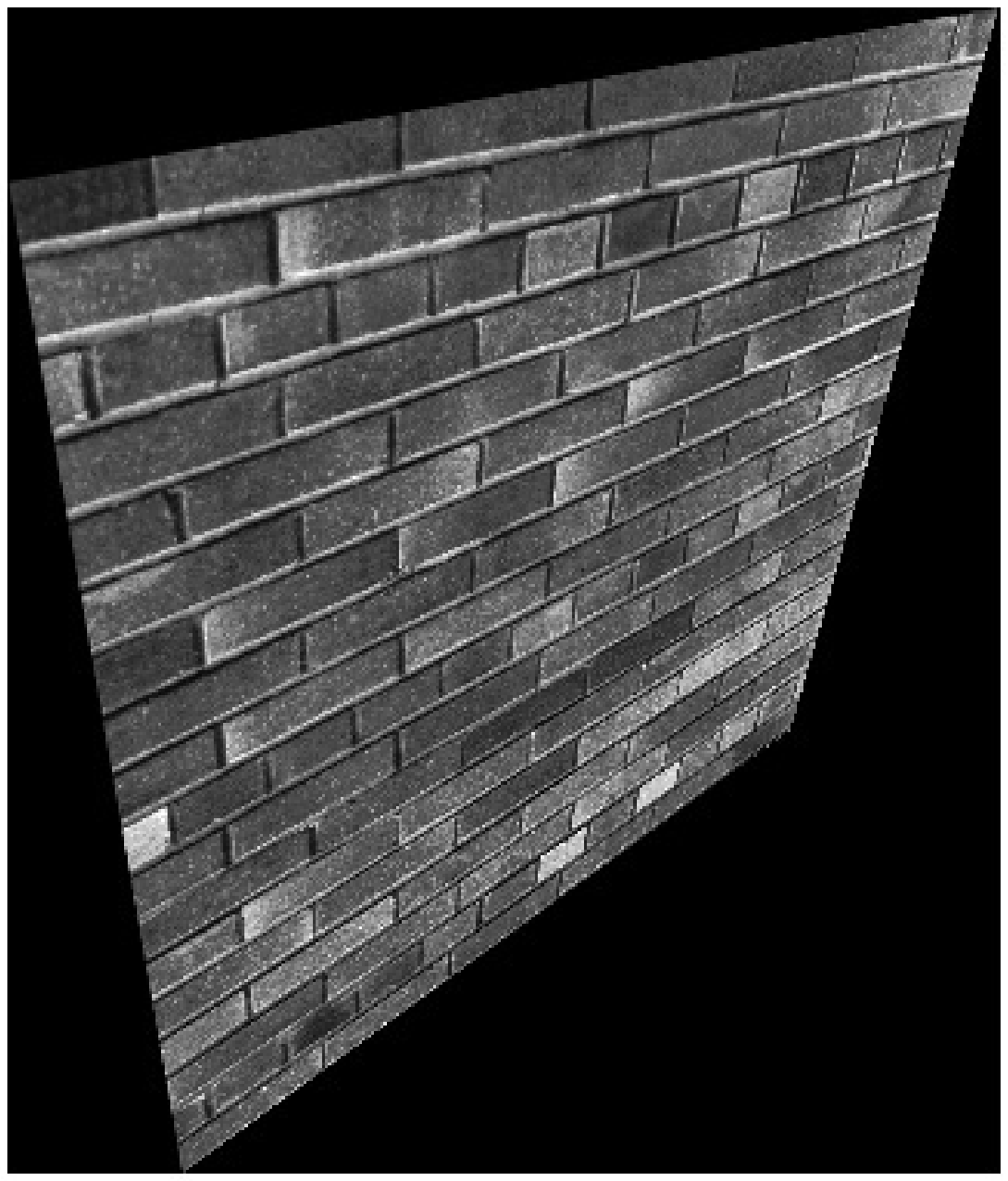}} \hspace{-0.6em}
\subfloat[]
{\includegraphics[width=.13\linewidth]{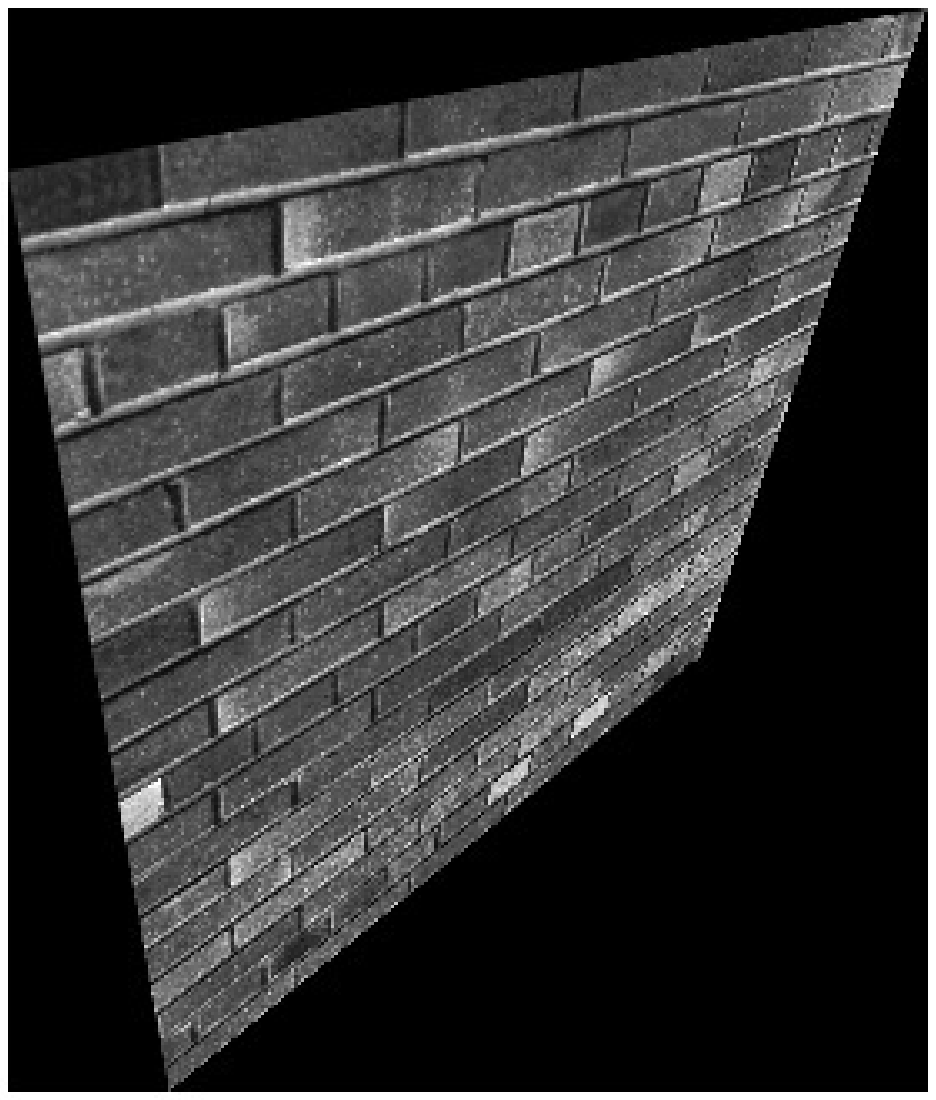}} \hspace{-0.6em}
\subfloat[]
{\includegraphics[width=.13\linewidth]{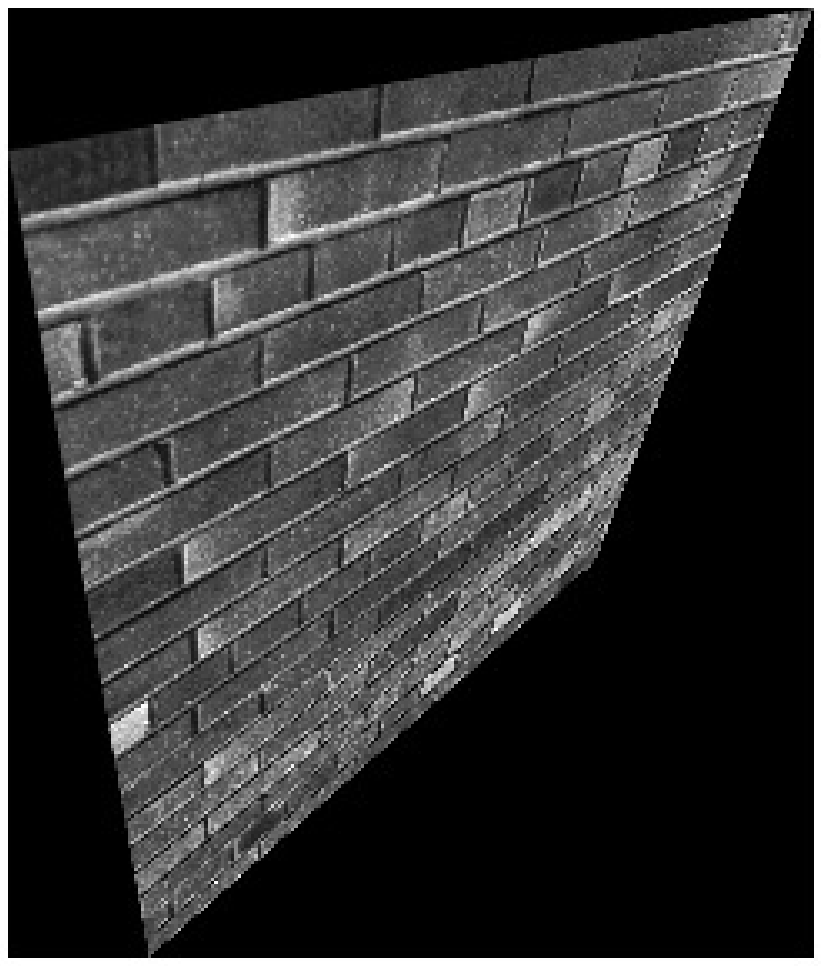}} \hspace{-0.6em}
\caption{Sample images in the Brodatz database and their deformations. The first row shows the 6 original images; in the second row, each image contains a unique texture but different regions of it have different lighting;
the third row shows the horizontal-shifted (distorted) images of an image;
the fourth row shows affine-transformed (change of viewpoints) images of an image.
}
\label{fig:Brodatz}
\end{figure*}

\citet{Tou09} show that region covariances generated by Gabor filters
effectively represent texture patterns in a region (patch). Given a patch of
size $60\times 60$, a Gabor filter of size 11$\times$11 with 8 parameters is
used to extract 2,500 feature vectors of length 8. This set of feature vectors
is then used to compute an 8$\times$8 covariance matrix for the specific patch.

Three clustering tests, one for each type of deformation, are carried out. In
each test, 300 transformed patches are generated equally from 3 different
textures and the region covariance is computed for each patch. Then clustering
algorithms are applied on the dataset of 300 region covariances belonging to 3
texture patterns. The way to generate transformed patches is described below.

\paragraph{\bf I. Lighting transformation:} A single lighting transformation
(demonstrated in Figure~\ref{fig:Brodatz}) is applied to three randomly drawn
images from the Brodatz database and 100 patches of size 60$\times$60 are
randomly picked from each of the 3 transformed images.

\paragraph{\bf II. Horizontal shearing:} Three randomly drawn images are
horizontally sheared by 100 different angles to get 3 sequences of 100 shifted
images. From each shifted image, a patch of size 60$\times$60 is randomly
picked.

\paragraph{\bf III. Affine transformation:} Three randomly drawn images are
affine transformed to create 3 sequences of 100 affine-transformed images. From
each transformed image, a patch of size 60$\times$60 is randomly picked.

%
%
%
%
Figure~\ref{fig:PC_Cov} plots the projection of the embedded datasets generated by the above procedure
 onto their top three principal components (the embedding to Euclidean spaces is done by direct vectorization
 of the covariance matrices).
 The submanifold structure in each
 cluster can be easily observed.

\begin{figure}[htb!]
\centering
\subfloat[\footnotesize Lighting transformation]
{\label{fig:dim} \includegraphics[width=.3\linewidth]{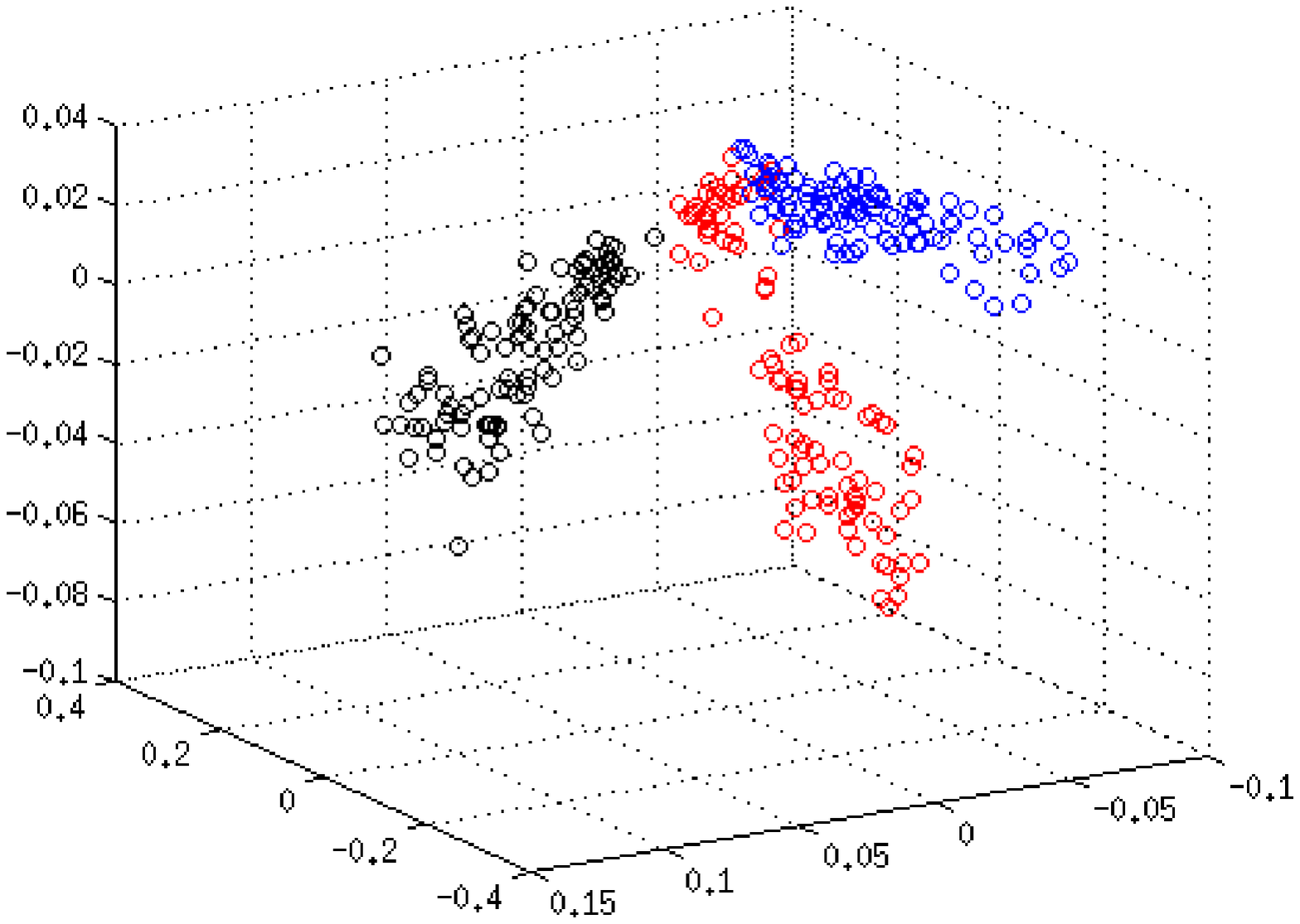}}
\subfloat[\footnotesize Horizontal shearing]
{\label{fig:hshift} \includegraphics[width=.3\linewidth]{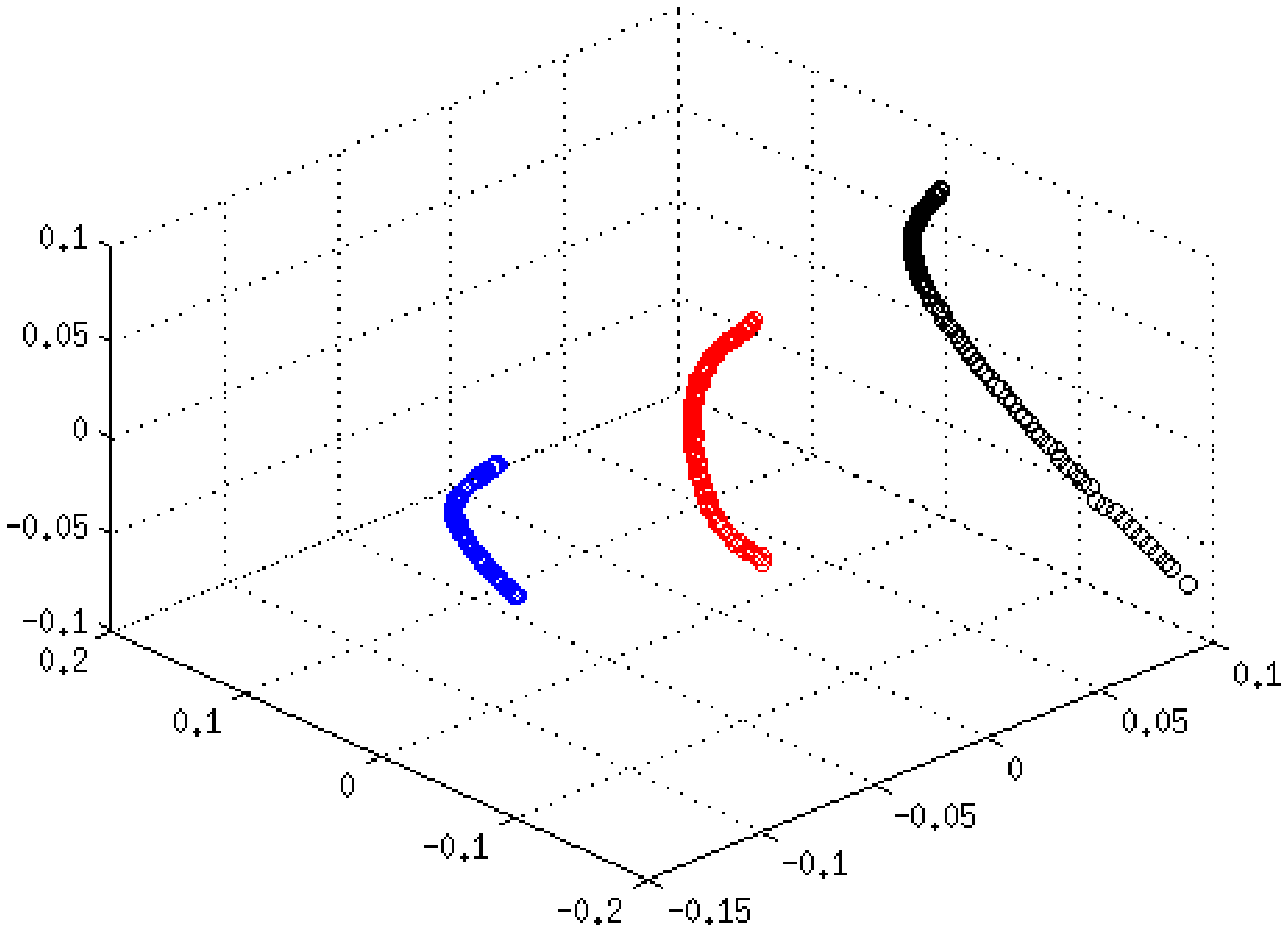}}
\subfloat[\footnotesize Affine transformation]
{\label{fig:affine} \includegraphics[width=.3\linewidth]{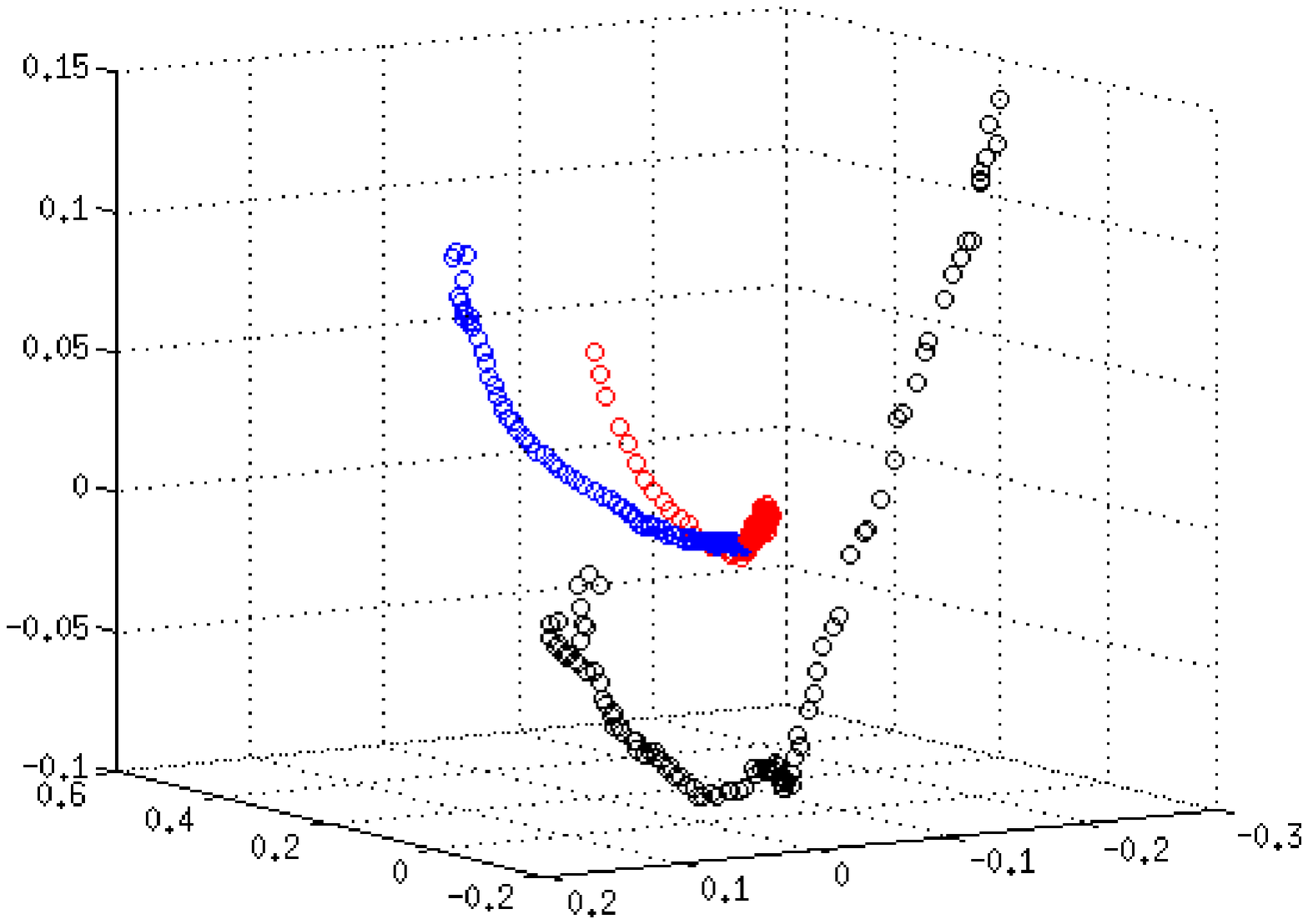}}
\caption{Projection of the covariance matrices of local patches of the
  transformed 3 images onto their top $3$ principal directions. For $3$ sample
  images, a dataset of $300$ covariance matrices is computed for each
  transformation type. The $8\times 8$ covariance matrices are identified as
  vectors in $\R^{64}$. The figure demonstrates the underlying structure of 3
  manifolds for the data generated with each kind of
  transformation.}\label{fig:PC_Cov}
\end{figure}

The procedure of generating the data is repeated $30$ times for each type of
transformation. GCT as well as the other three clustering methods are applied to
these datasets, and the average clustering rates are reported in
Table~\ref{table:texture}. GCT exhibits the best performance for all datasets
and for all types of transforms.


\begin{table*}\centering
\ra{1.3}
\begin{tabular}{@{}rrrrr@{}}\toprule
Methods & GCT & SMC & SCR & EKM\\ \midrule

Lighting transformation & \bf{0.73} & 0.53 & 0.68 & 0.67  \\
Horizontal shifting & \bf{0.95} & 0.61 & 0.85 & 0.76 \\
Affine tranformation & \bf{0.83} & 0.53 & 0.82 & 0.76  \\
\bottomrule
\end{tabular}
\caption{Average clustering rates for each method over 30 datasets.}
\label{table:texture}
\end{table*}


\subsubsection{Clustering Dynamic Patterns.}\label{exp:Ballet} Spatio-temporal
data such as dynamic textures and videos of human actions can often be
approximated by linear dynamical models \citep{DBLP:journals/ijcv/DorettoCWS03,
  Turaga+11}. In particular, by leveraging the auto-regressive and moving
average (ARMA) model, we experiment here with two spatio-temporal databases:
Dyntex++ and Ballet. Following~\citet{Turaga+11}, we employ the ARMA model to
associate local spatio-temporal patches with linear subspaces of the same
dimension. We then apply manifold clustering on the Grassmannian in order to
distinguish between different textures and actions in the Dyntex++ and Ballet
database respectively.

\paragraph{\bf ARMA Model.} The premise of ARMA modeling is based on the
assumption that the spatio-temporal dataset under study is governed by a small
number of latent variables whose temporal variations obey a linear rule. More
specifically, if $\mb{f}(t)\in\R^p$ is the observation vector at time $t$ (in
our case, it is the vectorized image frame of a video sequence), then
\begin{equation}\label{equ:arma}
\begin{aligned}
\mb{f}(t)=\mb{C}\mb{z}(t)+\pmb{\epsilon}_1(t) \qquad \pmb{\epsilon}_1(t)\sim
N(\mb{0},\pmb{\Sigma}_1)\\
\mb{z}(t+1)=\mb{A}\mb{z}(t)+\pmb{\epsilon}_2(t) \qquad \pmb{\epsilon}_2(t)\sim
N(\mb{0},\pmb{\Sigma}_2)
\end{aligned}
\end{equation}
where $\mb{z}(t)\in\R^d$, $d\leq p$, is the vector of latent variables,
$\mb{C}\in\R^{p\times d}$ is the observation matrix, $\mb{A}\in\R^{d\times d}$
is the transition matrix, and $\pmb{\epsilon}_1(t)\in \R^p$ and
$\pmb{\epsilon}_2(t)\in \R^d$ are i.i.d.~sampled vector-values r.vs.\ obeying
the Gaussian distributions $\mathcal{N}(0,\pmb{\Sigma}_1)$ and
$\mathcal{N}(0,\pmb{\Sigma}_2)$, respectively.

We next explain the idea of~\citet{Turaga+11} to associate subspaces with
spatio-temporal data. Given data $\{\mb{f}(t)\}_{t=\tau_1}^{\tau_2}$, the ARMA
parameters $\mb{A}$ and $\mb{C}$ can be estimated according to the procedure in
\citet{Turaga+11}. Moreover, by arbitrarily choosing $\mb{z}(0)$, it can be
verified that for any $m\in\mathbb{N}$,
\[
\E \left[ \begin{array}{c}
\mb{f}(\tau_1)\\
\mb{f}(\tau_1+1)\\
\vdots \\
\mb{f}(\tau_1+m-1) \end{array} \right]
=\left[ \begin{array}{c}
\mb{C}  \\
\mb{C}\mb{A}\\
\vdots \\
\mb{C}\mb{A}^{m-1} \end{array} \right]\mb{z}(\tau_1).
\]
We then set $\mb{V} :=[\mb{C}^T, (\mb{CA})^T,..., (\mb{C}\mb{A}^{m-1})^T]^T\in
\mathbb{R}^{mp\times d}$, which is known as the $m$th order observability
matrix. If the observability matrix is of full column rank, which was the case
in all of the conducted experiments, the column space of $\mb{V}$ is a
$d$-dimensional linear subspace of $\R^{pm}$. In other words, the ARMA model
estimated from data $\{\mb{f}(t)\}_{t=\tau_1}^{\tau_2}$, $\tau_1\leq \tau_2$,
gives rise to a point on the Grassmannian $\G(mp,\ell)$. For a fixed dataset
$\{\mb{f}(t)\}_{t=1}^{\tau}$, different choices of $(\tau_1,\tau_2)$, s.t.\
$\tau_1, \tau_2 \leq \tau$, and several local regions within the image give rise
to different estimates of $\mb{A}$ and $\mb{C}$ and thus to different points in
$\G(mp,\ell)$.

\paragraph{\bf Dynamic textures.} The Dyntex++
database~\citep{DBLP:conf/eccv/GhanemA10} contains $3600$ dynamic textures
videos of size $50\times 50\times 50$, which are divided into $36$
categories. It is a hard-to-cluster database due to its low resolution. Three
videos were randomly chosen, each one from a distinct category from the
available $36$ ones.

Per video sequence, $50$ patches of size $40\times 40 \times 20$ are randomly
chosen.  Each frame of the patch is vectorized resulting into patches of size
$1600 \times 20$. To reduce the size to $30\times 20$, a (Gaussian) random
(linear) projection operator is applied to each patch. As a result, each patch
is reduced to the set
$\{\mb{f}(t)\}_{t=\tau_1}^{\tau_1+20}\subset\mathbb{R}^{30}$. We fix $d=3$ and
$m=3$ and use each such set $\{\mb{f}(t)\}_{t=\tau_1}^{\tau_1+20}$ to estimate
the underlying ARMA model. Consequently, $150$ points on $\G(90,3)$ are
generated, $50$ per video category.

We expect that points in $\G(90,3)$ of the same cluster lie near a submanifold
of $\G(90,3)$.  This is due to the repeated pattern of textures in space and
time (they often look like a shifted version of each other in space and
time). To visualize the submanifold structure, we isometrically embedded
$\G(90,3)$ into a Euclidean space \citep{Basri_nearest_subspace11}, so that
subspaces are mapped to Euclidean points. We then projected the latter points on
their top 3 principal components.  Figure~\ref{fig:dyntex} demonstrates this
projection as well as the submanifold structure within each cluster.

\paragraph{\bf Ballet database.} The Ballet
database~\citep{DBLP:journals/pami/WangM09} contains $44$ videos of $8$ actions
from a ballet instruction DVD. The frames of all videos are of size $301\times
301$ and their lengths vary and are larger than 100. Different performers have
different attire and speed. Three videos, each one associated with a different
action, were randomly chosen.

\begin{figure*}[htb!]
\centering
\includegraphics[width=0.5\textwidth]{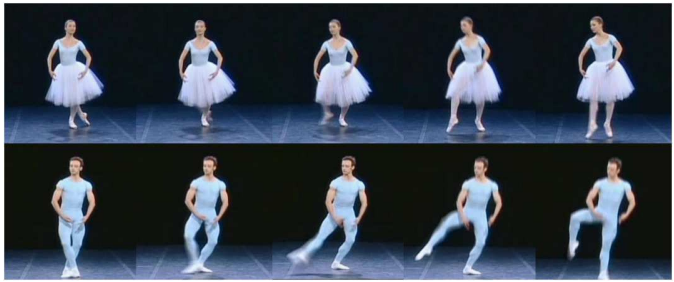}
\caption{Two samples of Ballet video sequences: the first and second rows are
  from videos demonstrating actions of hopping and leg-swinging,
  respectively.} \label{fig:ballet_images}
\end{figure*}

Spatio-temporal patches are generated by selecting $10$ consecutive frames of
size $301\times 301$ from each one of the following overlapping time intervals:
$\{1, \ldots, 10\}$, $\{4, \ldots, 13\}$, $\{7, \ldots, 16\}$, \ldots, $\{91,
\ldots, 100\}$. In this way, for each of the three videos, 31 spatio-temporal
patches of size $301\times 301\times 10$ are generated. As in the case of the
Dyntex++ database, video patches are vectorized and downsized to
spatio-temporal patches of size $30\times 10$. Following the previous ARMA
modeling approach, we set $d=3$ and $m=3$ and associate each such patch with a
subspace in $\G(90,3)$.  Consequently, 93 subspaces (31 per cluster) in the
Grassmannian $\G(90,3)$ are generated. Figure~\ref{fig:Ballet} visualizes the 3D
representation of the subspaces created from three random videos. Their
intersection represents still motion.

The procedure described above (for generating data by randomly choosing 3 videos
from the Dyntex++ and Ballet databases and applying clustering methods on
$\G(90,3)$) is repeated $30$ times. The average clustering accuracy rates are
reported in Table~\ref{table:dynamic}. GCT achieves the highest rates on both
datasets.

\begin{figure}[htb!]
\centering
\subfloat[\footnotesize Dyntex++]
{\label{fig:dyntex} \includegraphics[width=.4\linewidth]{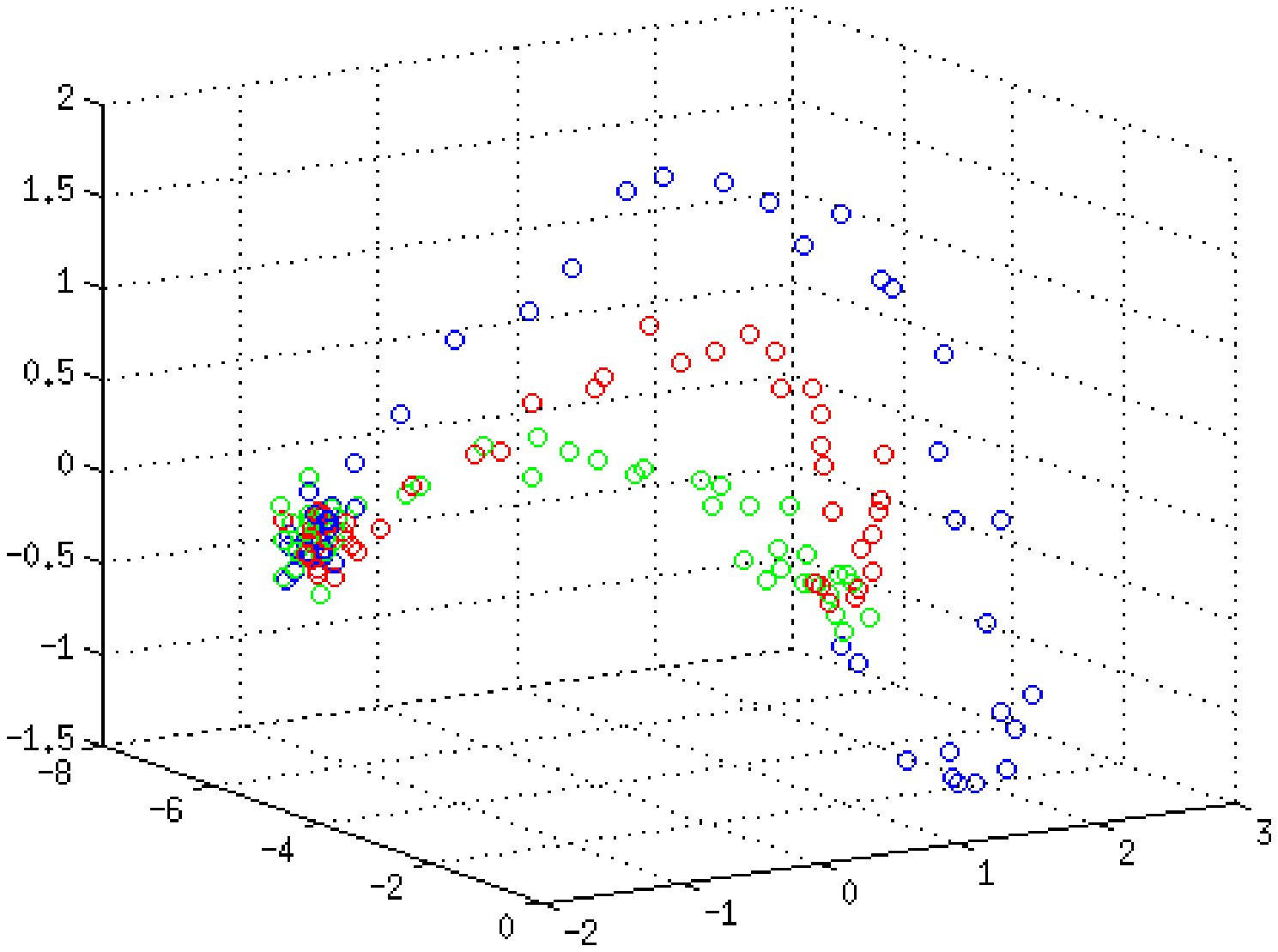}}
\subfloat[\footnotesize Ballet]
{\label{fig:Ballet} \includegraphics[width=.4\linewidth]{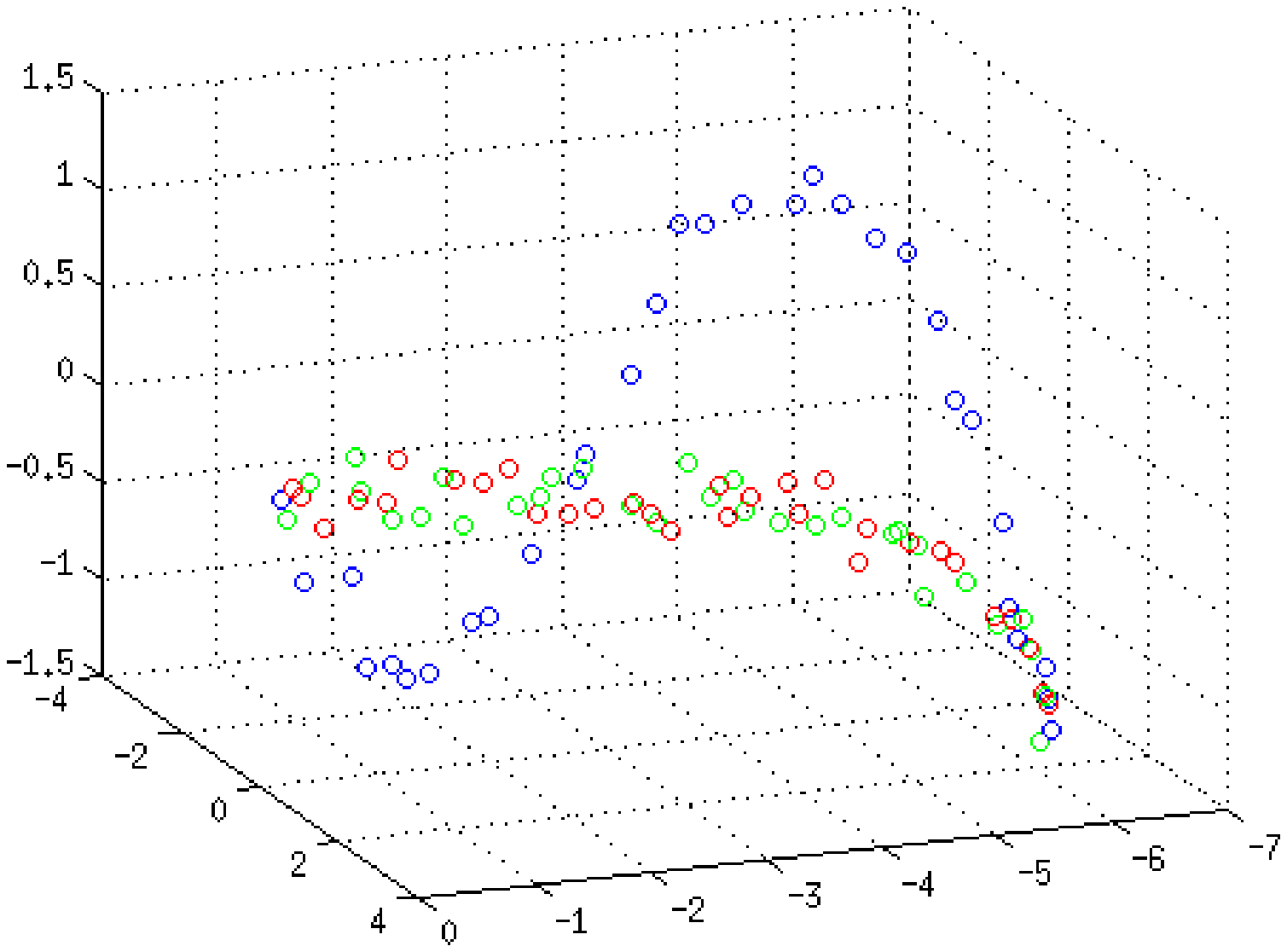}}
\caption{Projection onto top 3 principal components of the two embedded datasets
  (the embedding into Euclidean spaces is according
  to~\citet{Basri_nearest_subspace11}). A submanifold structure for each cluster
  is clearly depicted.}
\label{fig:PC_grass}
\end{figure}


\begin{table*}\centering
\ra{1.3}
\begin{tabular}{@{}rrrrr@{}}\toprule
Methods & GCT & SMC & SCR & EKM\\ \midrule

Dyntex++ & \bf{0.85} & 0.69 & 0.77 & 0.42  \\
Ballet & \bf{0.81} & 0.76 & 0.68 & 0.47 \\
\bottomrule
\end{tabular}
\caption{Average clustering accuracy rates for the Dyntex++ and Ballet
  datasets.}
\label{table:dynamic}
\end{table*}

\section{Proof of Theorem~\ref{theorem:all}}\label{sec:theory}

The idea of the proof is as follows. After excluding points sampled near the
possibly nonempty intersection of submanifolds, we form a graph whose vertices
are the points of the remaining set and whose edges are determined by
$\mb{W}$. The proof then establishes that the resulting graph has two connected
components, which correspond to the two different submanifolds $S_1$ and
$S_2$. Spectral clustering can exactly cluster such a graph with appropriate
choice of its tuning parameter $\sigma$, which can be specified by self-tuning
mechanism~\citep{zelnik04tuning}. This claim follows from
\citet{DBLP:conf/nips/NgJW01} and its unpublished supplemental material.

The basic strategy of the proof and its organization are described as
follows. Section~\ref{sec:addition_notation} presents additional notation used
in the proof. Section~\ref{sec:model_recap} reminds the reader the underlying
model of the proof (with some additional details). Section~\ref{sec:events}
eliminates undesirable events of negligible probability (it clarifies the term
$1-C_0 N\exp[-Nr^{d+2}/C'_0]$ in the statement of the theorem).

The rest of the proof (described in Sections~\ref{sec:SameS}-\ref{sec:MainNoisy}) is briefly sketched as follows. For simplicity, we first assume no noise, i.e., $\tau=0$. We define a ``sufficiently large'' set $X^*$ (and its subsets $X_1^*$ and $X_2^*$)
by the following formula (which uses the notation $X_1=S_1\cap X$ and $X_2=S_2\cap X$):
\begin{equation}\label{equ:X^*}
X_1^*=\{x\in X_1 | B(x,r)\cap X_2 = \emptyset\}, \ X_2^*=\{x\in X_2 | B(x,r)\cap X_1 = \emptyset\} \text{ and } X^* = X_1^*\cup X_2^*.
\end{equation}
In the first part of the proof (see Section~\ref{sec:SameS}), we show that the
graphs of $X_1^*$ and $X_2^*$ (with weights $\mb{W}$) are respectively
connected. If we can show that the graphs of $X_1^*$ and $X_2^*$ are
disconnected from each other, then the proof can be concluded. To this end, the
subsequent auxiliary sets $\hat{X}_1$ and $\hat{X}_2$ will be instrumental in
the proof. We fix a constant $\delta$ (to be specified later
in~\eqref{eq:def_delta}), which depends on $r$, $\eta$ and the angles of
intersection of $S_1$ and $S_2$, and define
\begin{equation}\label{equ:Xhat}
\hat{X}_1=\{x\in X_1 | \dist_g(x, S_2) \geq \delta\},\ \hat{X}_2=\{x\in X_2 |
\dist_g(x, S_1) \geq \delta\} \text{ and } \hat{X}=\hat{X}_1\cup \hat{X}_2.
\end{equation}
We will verify that $X_1^*\subset \hat{X}_1$ and $X_2^*\subset \hat{X}_2$. In
fact, it will be a consequence of the second part of the proof. This part shows
that the graph of $\hat{X}^c$ is disconnected from the graph of $X^*_1$ as well
as graph of $X^*_2$.
Therefore, $X^*_1$ and $X^*_2$ cannot be connected via points in $\hat{X}^c$. At last,
we show that they also cannot be connected within $\hat{X}$. That is, we show in the third
part of the proof (Section~\ref{sec:diffangle}) that the graphs of $\hat{X}_1$
and $\hat{X}_2$ are disconnected from each other. These three parts imply that
the graphs of $X^*_1$ and $X^*_2$ form two connected components within $X^*$. By
definition, $X_1^*$ and $X_2^*$ are identified with $S_1$ and $S_2$
respectively. To conclude the proof (for the noiseless case), we estimate the
measure of the set $X^{*c}$, which was excluded. More precisely, we consider the
measure of the set $X_{S_1\cap S_2} \supset X^{*c}$, which we define as follows
\begin{equation}
\label{eq:s1caps2}
X_{S_1\cap S_2} =\{x\in X_1 | \dist_g(x, S_2) <r\}\cup \{x\in X_2 | \dist_g(x, S_1) <r\}.
\end{equation}
This measure estimate and the conclusion of the proof (to the noiseless case)
are established in Section~\ref{sec:MainNoiseFree}. Section~\ref{sec:MainNoisy}
discusses the generalization of the proof to the noisy case.

Various ideas of the proof follow~\citet{LocalPCA}, which considered
multi-manifold modeling in Euclidean spaces. Some of the arguments in the proof
of~\citet{LocalPCA} even apply to general metric spaces, in particular, to
Riemannian manifolds. We thus tried to maintain the notation
of~\citet{LocalPCA}.

However, the algorithm construction and the main theoretical analysis
of~\citet{LocalPCA} are valid only when the dataset $X$ lies in a Euclidean
space and it is nontrivial to extend them to a Riemannian manifold. Indeed, the
basic idea of~\citet{LocalPCA} is to compare local covariance matrices and use
this comparison to infer the relation between the corresponding data points,
over which those matrices were generated. However, comparing local covariance
matrices in the case where the ambient space is a Riemannian manifold is not
straightforward as in Euclidean spaces. This is due to the fact that local
covariance matrices are computed at different tangent spaces with different
coordinate systems. Instead we show that it is sufficient to compare the ``local
directional information'' (i.e., empirical geodesic angles) and ``local
dimension''. Both of these quantities are derived from the local covariance
matrices. However, due to the nonlinear mapping to the tangent spaces, which
distorts the uniform assumption within the ambient space, care must be taken in
using the inverse nonlinear map, i.e., the logarithm map.

\subsection{Notation}\label{sec:addition_notation} We provide additional
notation to the one in Section~\ref{sec:notation}. Readers are referred
to~\citet{docarmo92} for a complete introduction to Riemannian geometry.

Let $B(x,r)$ and $B_{x}(\mb{0},r)$ denote the $r$-neighborhoods of $x$ and
$\mb{0}$ in $M$ and $T_xM$ respectively. They are related by the exponential
map, $\Phi_x$, as follows: $B(x,r)=\Phi_x(B_{x}(\mb{0},r))$. We refer to the
coordinates obtained in the tangent space by the exponential map $\Phi$ as
normal coordinates. Using normal coordinates, $B_{x}(\mb{0},r)\subset T_x M$ is
endowed with the Riemannian metric $\dist_g$ and measure $\mu_g$. On the other
hand, the tangent space $T_xM$ can also be identified with $\R^D$ by choosing an
orthonormal basis. This provides Euclidean metric $\dist_E$ and measure $\mu_E$
on $T_x M$, in particular, on $B_{x}(\mb{0},r)$. There is a simple relation
between $\mu_E$ and $\mu_g$~\citep{docarmo92}:
\begin{equation}\label{equ:chart}
\mu_g(d\mb{y})=\mu_E(d\mb{y})+\OO(r^2)d\mb{y} \qquad\text{for}\quad \mb{y}\in B_{x}(\mb{0},r).
\end{equation}
Figure~\ref{fig:spheregeodesic} highlights the difference between $\dist_E$ and
$\dist_g$. It shows the tangent space $T_n \Sb^2$ of the north pole, $n$, of
$\Sb^2$ and the straight blue line connecting $\Phi_{n}^{-1}(x)$ and
$\Phi_n^{-1}(y)$ in $T_n$; it is the shortest path w.r.t.~$\dist_E$. On the
other hand, the shortest path w.r.t.~$\dist_g$ is clearly the equator (the
geodesic connecting $x$ and $y$), which is the black arc on $T_{n}\Sb^2$; it is
different than the blue line. In fact, only lines in $T_n\Sb^2$ connecting the
origin and other points on $T_n \Sb^2$ correspond to geodesics on $\Sb^2$ for a
general metric. As a consequence, the measures $\mu_g$ and $\mu_E$ induced by
$\dist_g$ and $\dist_E$ are also different.

\begin{figure*}[htb!]
\centering
\includegraphics[width=0.8\textwidth]{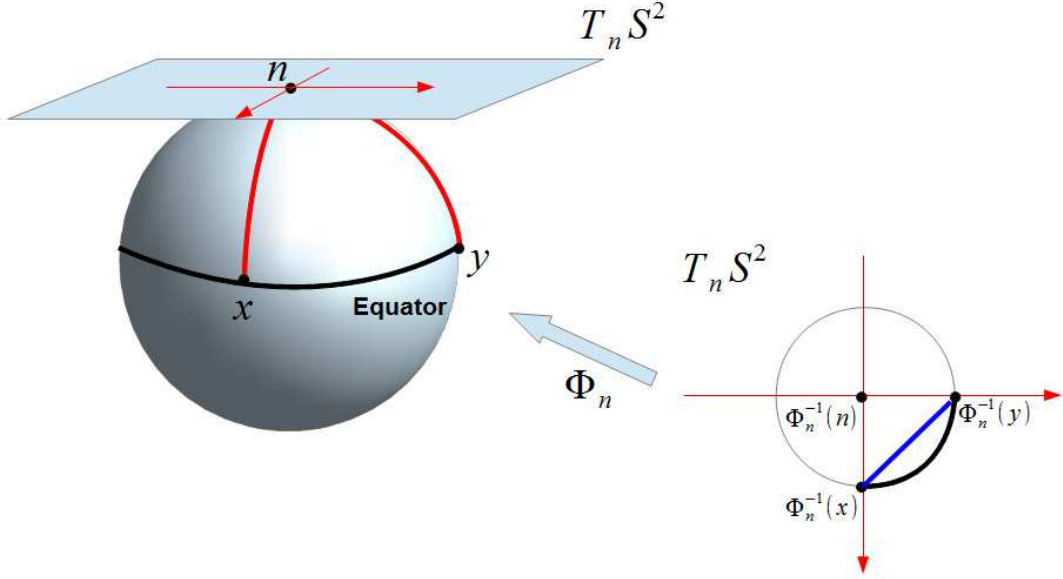}
\caption{Difference between the metrics $\dist_g$ and $\dist_E$ on $T_n \Sb^2$. The black arc in $T_n\Sb^2$ is a geodesic under the metric $g$. The blue segment in $T_n \Sb^2$ is a geodesic under the Euclidean metric $\dist_E$.}
\label{fig:spheregeodesic}
\end{figure*}

Given a submanifold $S\subset M$ (or a $\tau$-tubular neighborhood $S^\tau$ of
$S$), the metric tensor on $S$ (or $S^\tau$) inherited from $g$ induces a
measure $\mu_{gS}$ on $S$ (or $S^\tau$), which is called the uniform measure on
$S$ (or $S^\tau$). For simplicity we assume throughout most of the proof that
$\tau=0$ and thus mainly discuss the measure $\mu_{gS}$ on $S$. In
Section~\ref{sec:MainNoisy} we generalize the proof to the noisy case and thus
discuss $\mu_{gS}$ on $S^\tau$. The push-forward measure of $\mu_{gS}$ by
$\Phi_x^{-1}$ is a measure on $T_xM$, which is again denoted by $\mu_{gS}$. By
definition, the support of the push-forward measure $\mu_{gS}$ is
$\Phi_x^{-1}(S)$, which is a submanifold of $T_x M \equiv \R^D$. The Euclidean
metric $\dist_E$ similarly induces another measure $\mu_{ES}$, which is
supported on $\Phi_x^{-1}(S)$.

For a measure $\mu$ on $T_xM$ and a subset $H\subset T_xM$ of positive such
measure, the expected covariance matrix $\E_{\mu}\mb{C}_H$ is defined by
\begin{equation}
\label{eq:def_exp_cov}
\E_{\mu}\mb{C}_H=\frac{1}{\mu(H)}\int_{\mb{y}\in H}
\mb{y}\mb{y}^T\mu(d\mb{y})-\frac{1}{(\mu (H))^2}\int_{\mb{y}\in H}
\mb{y}\mu(d\mb{y})\cdot \int_{\mb{y}\in H} \mb{y}^T\mu(d\mb{y}).
\end{equation}
For the two compact submanifolds of the model, $S_1$ and $S_2$, we denote $S=
S_1\cup S_2$ and define the following two measures w.r.t.~S:
$\mu_{gS}=\mu_{gS_1}+\mu_{gS_2}$ and $\mu_{ES}=\mu_{ES_1}+\mu_{ES_2}$. The
covariance matrices w.r.t.~$\mu_{gS}$ and $\mu_{ES}$ are denoted by
$\E_{\mu_{gS}}\mb{C}_H$ and $\E_{\mu_{ES}}\mb{C}_H$, respectively. For
simplicity, when $H=B_x(\mb{0},r)\subset T_xM$, we denote them by
$\E_{\mu_{gS}}\mb{C}_x$ and $\E_{\mu_{ES}}\mb{C}_x$. For
$H=\Phi_z^{-1}(B(x,r))\subset T_zM$, we denote them by $\E_{\mu_{gS}}\mb{C}_x^z$
and $\E_{\mu_{ES}}\mb{C}_x^z$.

If a dataset $X\in M$ is given, let $\mb{C}_{x_0}$ denote the sample covariance
of the data $\Phi_{x_0}^{-1}(B(x_0,r)\cap X)$ on $T_{x_0} M$ and
$\mb{C}_{x_0}^z$ denote the sample covariance of the data
$\Phi_z^{-1}(B(x_0,r)\cap X)$ on $T_z M$. Let $\theta_{\min}(T_z S_1, T_z S_2)$
denote the minimal nonzero principal angle between the subspaces $T_z S_1, T_z
S_2\subset T_z M$\footnote{We only use $\theta_{\min}(T_z S_1, T_z S_2)$ when
  there is a nonzero principal angle. It is thus well-defined.} and let
\begin{equation}
\label{eq:def_theta0}
\displaystyle \theta_0(S_1, S_2) = \inf_{z\in S_1\cap S_2} \theta_{\min}(T_z S_1, T_z S_2).
\end{equation}
For $x\in S_1\cap S_2$, let $\theta_{\max}(T_x S_1, T_x S_2)$ denote the largest
principal angle between  $T_x S_1$ and $T_x S_2$ and let
\begin{equation}
\label{eq:def_theta_max}
\theta_{\max}(S_1, S_2)=\displaystyle \min_{x\in S_1\cap S_2}\theta_{\max}(T_x
S_1, T_x S_2).
\end{equation}

Recall that the notation
\[
Q_1(r) =Q_2(r) +\OO(r^n)
\]
means that there is a constant $C$ independent of $r$ such that
\begin{equation}\label{equ:O}
| Q_1(r) - Q_2(r) | \leq Cr^n.
\end{equation}
If $Q_1(r)$ and $Q_2(r)$ are matrices, then~\eqref{equ:O} applies to their
entries. If $x_i,x_j\in M$, we denote by $l'(x_i,x_j)$ the tangent vector of
$l(x_i,x_j)$ at $x_i$ (it was denoted by $\mb{v}_{ij}$ in
Section~\ref{sec:notation}). We denote the empirical geodesic angle between $x$
and $y$ by $\theta_{x,y}$ (where for data points $x_i$, $x_j$,
$\theta_{x_i,x_j}=\theta_{ij}$). Lastly, for a matrix $\mb{C}$,
$\lambda_k(\mb{C})$ stands for the $k$th largest eigenvalue of $\mb{C}$.

\subsection{A Generative Multi-Geodesic Model}\label{sec:model_recap}
We review in more details the generative model for two geodesic submanifolds
(see Section~\ref{sec:generativeMGM}). 
We first state the definition of geodesic submanifolds.

\begin{definition}\label{def:geodesic}
  For a Riemannian manifold $M$, a submanifold $S$ is called a geodesic
  submanifold if $\forall x,y\in S$, the shortest geodesic connecting $x$ and
  $y$ in $M$ is also contained in $S$.
\end{definition}

Let $S_1$, $S_2$ be two compact geodesic submanifolds of dimension $d$ in a
Riemannian manifold $(M,g)$ and recall that $S=S_1\cup S_2$.  Let $S_1^{\tau}$,
$S_2^{\tau}$ and $S^{\tau}$ denote $\tau$-tubular neighborhoods of $S_1$, $S_2$
and $S$ respectively. For example, $S_1^\tau=\{x \in M \ : \dist_g(x,S_1) \leq
\tau\}$, where $\dist_g(x,S_1):= \min_{y \in S_1} \dist_g(x,y)$.
The dataset $X$ of size $N$ is i.i.d.~sampled from the normalized version of
$\mu_{gS}$ (by $\mu_{gS}(S^{\tau})$) on $S^{\tau}$. We recall the notation:
$X_1=S_1\cap X$ and $X_2=S_2\cap X$.  Fixing a point $x_i$, then
$\mb{x}_j^{(i)}$ is i.i.d.~sampled from the normalized push-forward $\mu_{gS}$
on $T_{x_i}M$.

\subsection{Local Concentration with High Probability}\label{sec:events}
We verify here the concentration of the local covariance matrices and the
existence of sufficiently large samples in local neighborhoods from the same
submanifold. We follow~\citet{LocalPCA}\footnote{For simplicity, we set the
  parameter $t$ of~\citet{LocalPCA} to be equal to $r$} and define on the
probability space $S^N$ (i.e., $(S_1\cup S_2)\times \cdots\times (S_1\cup
S_2)$), the following events $\Omega_1$ and $\Omega_2$:
\begin{equation}
\label{eq:def_omega1}
\Omega_1=\displaystyle\bigcap_{k=1}^2\{X=(x_1,\ldots, x_N)\in S^N: \#\{i :
x_i\in S_k\cap B(y,r/C_{\Omega})\}>nr^d/C_7, \forall y\in S_k\},
\end{equation}
\begin{equation}
\label{eq:def_omega2}
\Omega_2=\displaystyle \{X=(x_1,\ldots, x_N)\in S^N:\|\mb{C}_{x_i}-\E_{\mu_{gS}}
\mb{C}_{x_i}\|\leq r^3, \, i=1,\ldots,N\},
\end{equation}
where $C_{\Omega}$ and $C_7$ are specified in~\citet{LocalPCA} ($C_{\Omega}$
depends on $d$ and $\theta_0$ (defined in~\eqref{eq:def_theta0})
and $C_7$ depends on the covering number of $S$)
and $\mb{C}_{x_i}$ is the sample covariance of images
$\{\mb{x}_j^{(i)}\}_{j \in J(x_i, r)}$ on $T_{x_i}M$. We note that $\Omega_1$ is the
set of datasets of $N$ samples, where each dataset satisfies the following
condition: for any point in $S_i$ ($i=1,2$ is fixed), there are enough samples
that also belong to $S_i$ (their fraction is proportional to $r^d$). The set
$\Omega_2$ is the set of datasets of $N$ samples with sufficient concentration
of local covariance matrices. The following theorem of~\citet[page
35]{LocalPCA} ensures
that the event $\Omega=\Omega_1\cap \Omega_2$ is large.
It uses the constant $C_0 = 4d+2C_7$ and an absolute constant $C'_0$.

\begin{theorem}\label{theorem:omega}
Let $\Omega=\Omega_1\cap \Omega_2$. Then,
\[
\P(\Omega^c)\leq C_0\cdot Ne^{-Nr^{d+2}/C'_0}.
\]
\end{theorem}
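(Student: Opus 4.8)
The plan is to bound $\P(\Omega^c)$ by a union bound, $\P(\Omega^c)\le \P(\Omega_1^c)+\P(\Omega_2^c)$, and to show that each term is dominated by the stated quantity $C_0 N e^{-Nr^{d+2}/C'_0}$. Since the two events reflect different concentration phenomena, I expect the harder exponent $r^{d+2}$ to come entirely from $\Omega_2$, while $\Omega_1$ produces a strictly larger (hence more favorable) exponent of order $r^d$ that is absorbed into the bound. The single structural fact I would lean on throughout is that, after pushing the uniform measure $\mu_{gS}$ forward by $\Phi_{x_i}^{-1}=\log_{x_i}$, the relevant data $\{\mb{x}_j^{(i)}\}_{j\in J(x_i,r)}$ all lie in $B_{x_i}(\mb{0},r)$, so every vector $\mb{y}$ in sight obeys $\|\mb{y}\|\le r$; this uniform boundedness is what powers all the Bernstein-type estimates below. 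The fact that the push-forward is \emph{not} the Euclidean uniform measure is controlled to leading order by the volume-distortion estimate~\eqref{equ:chart}: on the scale $r$ it differs from a uniform measure only by an $\OO(r^2)$ factor, which does not change any order of magnitude.

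For $\Omega_2$ I would fix an index $i$ and regard $\mb{C}_{x_i}$ as the (centered) empirical second moment of the i.i.d.\ sample $\{\mb{x}_j^{(i)}\}_{j\in J(x_i,r)}$; the empirical-mean correction is smaller and is handled identically. Each summand $\mb{y}\mb{y}^T$ has spectral norm at most $r^2$, and the per-sample variance proxy is $\|\E[\|\mb{y}\|^2\,\mb{y}\mb{y}^T]\|=\OO(r^4)$. Writing $m=\#J(x_i,r)$, a matrix Bernstein inequality applied at threshold $t=r^3$ gives
\[
\P\!\left(\|\mb{C}_{x_i}-\E_{\mu_{gS}}\mb{C}_{x_i}\|>r^3\right)\le 2d\,\exp\!\left(-c\,\frac{(m r^3)^2}{m\,r^4+m\,r^5}\right)= 2d\,\exp(-c'\,m\,r^2).
\]
On $\Omega_1$ one has $m\gtrsim Nr^d/C_7$, so the exponent is of order $Nr^{d+2}$, exactly the target. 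A union bound over $i=1,\dots,N$ supplies the prefactor $N$; the per-point dimensional factor $2d$ (the data spread is effectively $d$-dimensional near $T_{x_i}S$, rather than filling the ambient $T_{x_i}M\equiv\R^D$), doubled over the two submanifolds, accounts for the $4d$ in $C_0=4d+2C_7$.

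For $\Omega_1$ I would fix $k$ and a point $y\in S_k$ and observe that $\#\{i:x_i\in S_k\cap B(y,r/C_\Omega)\}$ is a Binomial variable with mean proportional to $N\,\mu_{gS_k}(B(y,r/C_\Omega))\sim Nr^d$, using the $d$-dimensionality of $S_k$ and~\eqref{equ:chart}. A one-sided Chernoff bound controls the probability that this count falls below $Nr^d/C_7$ by $e^{-cNr^d}$, which is far smaller than the target. To upgrade ``for a fixed $y$'' to ``$\forall y\in S_k$'' I would cover $S_k$ by a net of cardinality $\OO(r^{-d})$ (this covering number is where $C_7$, hence the $2C_7$ in $C_0$, enters), apply the bound at each net center, and transfer to arbitrary $y$ via its proximity to a center; the polynomial net size is swallowed by the exponential. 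The main obstacle, and the only genuinely Riemannian step, is precisely this transfer through the logarithm map: one must check that the push-forward measure in each tangent space is close enough to uniform---\emph{uniformly over all basepoints $x_i$ and all net centers $y$}---for the moment bounds and volume estimates above to hold with basepoint-independent constants. This uniformity is exactly what~\eqref{equ:chart} together with the compactness of $S_1$ and $S_2$ (which bounds curvature and injectivity radius) is meant to deliver.
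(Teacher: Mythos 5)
The paper never actually proves this statement: Theorem~\ref{theorem:omega} is imported verbatim from \citet[page~35]{LocalPCA}, and your sketch reconstructs essentially that cited argument (matrix-Bernstein concentration for the local covariances, a Chernoff bound plus a covering net for the local sample counts, union bound over data points), so your approach matches the proof the paper relies on. Two small repairs are needed but do not change the order of the final bound. First, your opening union bound $\P(\Omega^c)\le \P(\Omega_1^c)+\P(\Omega_2^c)$ is inconsistent with your later step ``on $\Omega_1$ one has $m\gtrsim Nr^d/C_7$'': $\P(\Omega_2^c)$ cannot be bounded in isolation, so decompose instead as $\P(\Omega^c)\le \P(\Omega_1^c)+\P(\Omega_2^c\cap\Omega_1)$ and, in the second term, condition on $x_i$ and on $m_i=\#J(x_i,r)$ (given which the neighbors are i.i.d.\ from the restricted measure) before invoking Bernstein. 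Second, the dimensional prefactor in matrix Bernstein over $T_{x_i}M\equiv\R^D$ is $D$, not $2d$; to obtain the $\OO(d)$ factor you claim (consistent with $C_0=4d+2C_7$) you must invoke the intrinsic-dimension (effective-rank) variant of the inequality, which is legitimate here because the pushed-forward data lie within $\OO(r^2)$ of a union of two $d$-dimensional subspaces of $T_{x_i}M$, so the variance proxy has effective rank $\OO(d)$.
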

In view of this theorem, we assume in the rest of the proof that
\begin{equation}\label{equ:XInOmega}
X\in\Omega.
\end{equation}

\subsection{Ensuring Connectedness of $X_1^*$ and $X^*_2$}\label{sec:SameS}
The following proposition establishes WLOG the connectedness of the graph of the set $X^*_1$ (defined in~\eqref{equ:X^*}). It uses a constant $C_1$, which is clarified in the proof and depends on geometric properties of $S_1$ and $S_2$ and their angle of intersection.
\begin{proposition}\label{lemma:connect}
There exists a constant $C_1>1$ such that if
\begin{equation}\label{equ:constraint123}
 r< \frac{\min\left(\eta, \sigma_a, \sigma_d\right)}{C_1},
\end{equation}
then the graph with nodes at $X_1^*$ and edges given by $\mb{W}$ is connected.
\end{proposition}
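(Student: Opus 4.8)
The plan is to reduce the connectedness of the graph on $X_1^*$ to two ingredients: a local geometric description of every point of $X_1^*$, and a chaining argument exploiting the sampling density guaranteed by $\Omega_1$ (see~\eqref{eq:def_omega1}). Throughout I work on the event $\Omega$ of Theorem~\ref{theorem:omega} and, following the sketch, first treat the noiseless case $\tau=0$, so that $X_1^*$ consists of points of $S_1$ whose $r$-ball contains no sample of $S_2$. A preliminary observation I would record is that $\Omega_1$ forces $\dist_g(x,S_2)\gtrsim r$ for every $x\in X_1^*$: otherwise a ball of radius $r/C_\Omega$ around the nearest point of $S_2$ would sit inside $B(x,r)$ and, by density, would contain a sample of $X_2$.

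The key local fact is that for every $x\in X_1^*$ one has $\dim T_x^E S=d$ and the largest principal angle $\angle(T_x^E S,T_x S_1)=\OO(r)$. Because $S_1$ is geodesic, $\Phi_x^{-1}(S_1)$ coincides near $\mb 0$ with the linear subspace $T_x S_1$, and the uniform measure on the $d$-ball of radius $r$ in $T_x S_1$ has covariance $\tfrac{r^2}{d+2}P_{T_x S_1}$; the distortion~\eqref{equ:chart} between $\mu_g$ and $\mu_E$, together with the negligible sliver of $S_2$ near $\partial B(x,r)$, perturbs this only at order $\OO(r^4)$. Hence $\E_{\mu_{gS}}\mb{C}_x$ has $d$ eigenvalues of size $\asymp r^2$ and the remaining ones of size $\OO(r^4)$, with top eigenspace $\OO(r^2)$-close to $T_x S_1$. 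The concentration bound $\|\mb{C}_x-\E_{\mu_{gS}}\mb{C}_x\|\le r^3$ from $\Omega_2$ (see~\eqref{eq:def_omega2}) then shows, via Weyl's inequality, that $\mb{C}_x$ has exactly $d$ eigenvalues exceeding $\eta\|\mb{C}_x\|\asymp\eta r^2$ once $r<\eta/C_1$ (this separates the order-$r^2$ block from the order-$r^3$ block), giving $\dim T_x^E S=d$; and the Davis--Kahan theorem, with eigengap $\asymp r^2$ and perturbation $\le r^3$, yields $\angle(T_x^E S,T_x S_1)=\OO(r)$.

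I would next show that any two points $x,x'\in X_1^*$ with $\dist_g(x,x')<\sigma_d$ satisfy $\mb{W}_{xx'}=1$. The distance indicator holds by assumption, and the dimension indicator holds since both local dimensions equal $d$ by the previous step. For the angle indicator, because $S_1$ is geodesic the shortest geodesic $l(x,x')$ lies in $S_1$, so its initial tangent $\mb{v}_{xx'}$ belongs to $T_x S_1$; the empirical geodesic angle therefore satisfies $\theta_{xx'}\le\angle(T_x^E S,T_x S_1)=\OO(r)$, and likewise $\theta_{x'x}=\OO(r)$, so $r<\sigma_a/C_1$ forces $\theta_{xx'}+\theta_{x'x}<\sigma_a$. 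It is essential that this bound is uniform in $\dist_g(x,x')$: two points of $X_1^*$ become $\mb{W}$-adjacent as soon as they are $\sigma_d$-close, no matter how $S_2$ sits between them.

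Finally, global connectedness follows by chaining, and this is where the main obstacle lies. Given $x,y\in X_1^*$, join them by a path in the compact connected $S_1$ and discretize it with spacing below $\sigma_d/2$. At any intermediate point with $\dist_g(\cdot,S_2)\ge 3r$, the event $\Omega_1$ supplies a sample of $X_1$ within $r/C_\Omega$, which automatically lies in $X_1^*$ (its $r$-ball stays at distance $\ge 2r$ from $S_2$, which carries all of $X_2$ when $\tau=0$); consecutive such samples are $\sigma_d$-close and hence adjacent by the previous step. The delicate point is crossing the excluded tube $\{\dist_g(\cdot,S_2)<r\}$ about $S_1\cap S_2$, which contains no point of $X_1^*$ and so cannot be stepped through. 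The resolution uses $\sigma_d\gg r$: such a tube has width $\OO(r)$ in $S_1$, so an $X_1^*$ sample just before it and one just after are separated by only $\OO(r)<\sigma_d$ and are joined by a single long-range edge, legitimate because the connecting geodesic still lies in $S_1$ and keeps both geodesic angles $\OO(r)$. Concatenating these hops yields a $\mb{W}$-path from $x$ to $y$, proving the graph on $X_1^*$ is connected; the argument for $X_2^*$ is identical.
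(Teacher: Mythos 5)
Your proposal follows essentially the same route as the paper's proof, and the local parts are a faithful match: your dimension claim is the paper's Lemma~\ref{lemma:SameSdim} (expected covariance $\tfrac{r^2}{d+2}\mb{P}_{T_{x}S_1}$ up to $\OO(r^4)$ via~\eqref{equ:chart}, the $\Omega_2$ concentration, Weyl's inequality against the threshold $\eta\|\mb{C}_x\|$), your angle claim is Lemma~\ref{lemma:SameSang} (Davis--Kahan with spectral gap $\asymp r^2$, plus the observation that a geodesic between points of the geodesic submanifold $S_1$ has initial tangent in $T_xS_1$, so the empirical angle is bounded by $\theta_{\max}(T_x^ES,T_xS_1)$ uniformly in the distance between the endpoints), and your adjacency criterion for $\sigma_d$-close points of $X_1^*$ is exactly how the paper combines the three indicator factors. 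Your chaining step plays the role of the paper's Part III, which the paper largely outsources to \citet[pages 38--39]{LocalPCA}; your explicit version, with $\Omega_1$-supplied samples along a discretized path and a long-range edge jumping the excluded tube, is the same mechanism.

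The one genuine lean point is your unproved assertion that the excluded tube $\{p\in S_1:\dist_g(p,S_2)<3r\}$ ``has width $\OO(r)$ in $S_1$.'' This is not automatic: it amounts to the comparison inequality $\dist_g(p,S_1\cap S_2)\leq C'\dist_g(p,S_2)$ for $p\in S_1$, which is precisely what the paper isolates as Lemma~\ref{lemma:intercoord} (its Riemannian adaptation of Lemma~18 of \citet{LocalPCA}) and proves in Appendix~\ref{sec:lemma:intercoord:proof} by a compactness argument that crucially uses the transversality $\theta_0(S_1,S_2)>0$ of~\eqref{eq:def_theta0}. The bound must be linear in $r$, not merely $o(1)$: since the hypothesis~\eqref{equ:constraint123} allows $\sigma_d$ as small as $C_1 r$, a weaker width bound (e.g., of order $\sqrt{r}$, which is what one gets near a tangential intersection) would exceed $\sigma_d$ for small $r$ and destroy the long-range edge. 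A second, smaller omission in the same step: knowing the tube lies within $\OO(r)$ of $S_1\cap S_2$ does not by itself make your entry and exit samples $\OO(r)$-close, since $S_1\cap S_2$ may be a positive-dimensional set; you must also choose the path to cross the intersection transversally at isolated points so that the in-tube portion is a single short segment. Both points are true under the paper's standing assumptions, so your argument is repairable as written, but they are exactly the content the paper flags as requiring ``careful adaptation to the Riemannian case'' and cannot be waved through.
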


\subsubsection{Proof of Proposition~\ref{lemma:connect}}
Three different constants $C_8$, $C_9$ and $C_{10}$ appear in the proof. As clarified below, they depend on geometric properties of $S_1$ and $S_2$ and their angle of intersection. The constant $C_1$ is then determined by these constants as follows: $C_1=\max(\{C_i\}_{i=8}^{10})$.

The proof is divided into three parts. The first one shows that $\mathbf{1}_{\dim {T_{x_i}^E S}=\dim {T_{x_j}^E S }}=1$ for all $x_i,x_j$ in $  X_1^*$ if $r<\eta/C_8$. The second one shows that $\mathbf{1}_{(\theta_{ij}+\theta_{ji})<\sigma_a}=1$ for all $x_i,x_j$ in $  X_1^*$ if $\sigma_a \geq C_9 r$. The last one uses an argument of~\citet[page 38]{LocalPCA}. It claims that the graph with nodes at $X_1^*$ and weights given by the indicator function $\mb{1}_{\dist_g(x_i,x_j)<\sigma_d}$ is connected if $r\leq \sigma_d/C_{10}$.

\paragraph{\bf Part I:} We prove the following lemma, which clearly implies that $\mathbf{1}_{\dim {T_{x_i}^E S}=\dim {T_{x_j}^E S }}=1$ for $x_i,x_j\in X_1^*$.

\begin{lemma}\label{lemma:SameSdim}
There exists a constant $C_8>1$ such that if $x_0\in   X_1^*$, $r < \eta/C_8$ and $0<\eta<1$, then
\begin{equation}\label{equ:lemma:SameS6}
\dim {T_{x_0}^E S} =\dim {T_{x_0}S}.
\end{equation}
\end{lemma}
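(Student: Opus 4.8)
The plan is to prove the two matching inequalities $\dim T_{x_0}^E S \le \dim T_{x_0} S = d$ and $\dim T_{x_0}^E S \ge d$, recalling that $\dim T_{x_0}^E S$ counts the eigenvalues of $\mathbf{C}_{x_0}$ exceeding the threshold $\eta\|\mathbf{C}_{x_0}\|$. Following the convention of this section I take $\tau=0$. The starting observation is that membership $x_0\in X_1^*$, by the definition \eqref{equ:X^*}, forces every sample index $j\in J(x_0,r)$ to satisfy $x_j\in X_1=S_1\cap X$; that is, the entire local neighborhood used to build $\mathbf{C}_{x_0}$ lies on $S_1$. Since $S_1$ is a geodesic submanifold through $x_0$ (Definition~\ref{def:geodesic}), its logarithmic image satisfies $\log_{x_0}(S_1)=T_{x_0}S_1$ exactly, so each image $\mathbf{x}_j^{(0)}$ lies in the $d$-dimensional subspace $T_{x_0}S_1$. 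Consequently $\mathbf{C}_{x_0}$ is supported on $T_{x_0}S_1$, its bottom $D-d$ eigenvalues vanish, and at most $d$ eigenvalues can exceed the positive threshold $\eta\|\mathbf{C}_{x_0}\|$. This already gives $\dim T_{x_0}^E S\le d$, while $\dim T_{x_0}S=d$ since $x_0$ lies on $S_1$ away from the intersection.

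For the reverse inequality it suffices to show $\lambda_d(\mathbf{C}_{x_0})>\eta\,\lambda_1(\mathbf{C}_{x_0})$, i.e.\ that all $d$ surviving eigenvalues clear the threshold. I would establish this by comparing $\mathbf{C}_{x_0}$, restricted to $T_{x_0}S_1$, with the isotropic covariance of a uniform distribution on a Euclidean $d$-ball of radius $r$, which equals $\tfrac{r^2}{d+2}$ times the identity on $T_{x_0}S_1$. Three ingredients feed this comparison: the measure-distortion estimate \eqref{equ:chart}, relating the pushforward uniform measure $\mu_{gS_1}$ on $T_{x_0}S_1$ to the Euclidean-uniform one up to a relative $\OO(r^2)$ error; the concentration bound of $\Omega_2$ in \eqref{eq:def_omega2}, giving $\|\mathbf{C}_{x_0}-\E_{\mu_{gS}}\mathbf{C}_{x_0}\|\le r^3$; and the sample-richness bound $\Omega_1$ of \eqref{eq:def_omega1}, guaranteeing enough $S_1$-points in the ball so that $\mathbf{C}_{x_0}$ is well populated. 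Weyl's inequality then yields $\lambda_1(\mathbf{C}_{x_0})\le \tfrac{r^2}{d+2}+\OO(r^3)$ and $\lambda_d(\mathbf{C}_{x_0})\ge \tfrac{r^2}{d+2}-\OO(r^3)$, whence the ratio satisfies $\lambda_d/\lambda_1\ge (1-\OO(r))/(1+\OO(r))$, which exceeds any fixed $\eta<1$ once $r$ is small relative to $\eta$ — this is the role of the hypothesis $r<\eta/C_8$.

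The main obstacle is controlling the expected covariance $\E_{\mu_{gS}}\mathbf{C}_{x_0}$, which is what $\Omega_2$ concentrates to, rather than the cleaner $\E_{\mu_{gS_1}}\mathbf{C}_{x_0}$. The difficulty is that $X_1^*$ is defined by a \emph{sample} condition (no $X_2$-points in $B(x_0,r)$) whereas $\mu_{gS}=\mu_{gS_1}+\mu_{gS_2}$ integrates the full geometry, so I must rule out a nonnegligible contribution of $S_2$ to the ball. The key is to convert the sample condition into geometric separation: on $\Omega_1$, every point of $S_2$ carries a nearby $S_2$-sample within distance $r/C_\Omega$, and the absence of such samples in $B(x_0,r)$ forces $\dist_g(x_0,S_2)\gtrsim r$. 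This separation bounds the $S_2$-grazing part of $\E_{\mu_{gS}}\mathbf{C}_{x_0}$ and keeps any spurious eigenvalue it produces below $\eta\|\mathbf{C}_{x_0}\|$; the smallness $r<\eta/C_8$ is precisely what absorbs both this grazing term and the $\OO(r^3)$ concentration error into the $\eta$-gap, so that exactly the $d$ genuine tangent directions of $S_1$ are selected. The constant $C_8$ is then read off from the implied constants in \eqref{equ:chart}, the covering/concentration constants of $\Omega$, and the local geometry (curvature and the intersection angle $\theta_0(S_1,S_2)$).
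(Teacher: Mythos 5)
Your proposal follows the paper's own skeleton in its quantitative core: on the event $\Omega=\Omega_1\cap\Omega_2$, compare $\mb{C}_{x_0}$ with the isotropic matrix $\frac{r^2}{d+2}\mb{P}_{T_{x_0}S_1}$ by combining the chart distortion \eqref{equ:chart}, the identity $\E_{\mu_{ES}}\mb{C}_{x_0}=\frac{r^2}{d+2}\mb{P}_{T_{x_0}S}$ from Lemma~11 of \citet{LocalPCA}, the concentration bound \eqref{eq:def_omega2}, and Weyl's inequality, then read the estimated dimension off the $\eta$-threshold. Your one simplification is sound and even cleaner than the paper's: since $x_0\in X_1^*$ and $S_1$ is geodesic, every $\mb{x}_j^{(0)}$ with $j\in J(x_0,r)$ lies in $T_{x_0}S_1$, so $\lambda_{d+1}(\mb{C}_{x_0})=0$ exactly, whereas the paper only gets $\lambda_{d+1}(\mb{C}_{x_0})<(C_S+1)r^3$ via Weyl.

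The genuine gap is in your final paragraph, precisely where you try to go beyond the paper. From \eqref{eq:def_omega1} you correctly deduce that $x_0\in X_1^*$ forces $\dist_g(x_0,S_2)\geq r(1-1/C_{\Omega})$, but this separation does \emph{not} make the $S_2$-contribution to $\E_{\mu_{gS}}\mb{C}_{x_0}$ small in $r$: $S_2$ can still cut through the outer annulus $B(x_0,r)\setminus B(x_0,r(1-1/C_{\Omega}))$, and since $S_2$ is $d$-dimensional, the grazing piece $S_2\cap B(x_0,r)$ carries $\mu_{gS_2}$-mass of order $\bigl(r\sqrt{2/C_{\Omega}}\bigr)^d$, i.e., a \emph{constant} fraction $\sim (2/C_{\Omega})^{d/2}$ of the total mass in the ball, independent of $r$. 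That blob sits at distance $\approx r$ from the origin of $T_{x_0}M$, so the mean-shift term it adds to the mixture covariance is of order $(2/C_{\Omega})^{d/2}r^2$ --- the same order as the signal eigenvalue $r^2/(d+2)$, not $\OO(r^3)$ --- and no choice of $C_8$ lets the hypothesis $r<\eta/C_8$ absorb a perturbation proportional to $r^2$. To close this you need a different mechanism: either do what the paper does silently, namely compute the reference covariance only over $H=B_{x_0}(\mb{0},r)\cap T_{x_0}S$ (reading $\Omega_2$ as concentration around the $S_1$-only covariance), or exploit consistency: your rank observation puts the range of $\mb{C}_{x_0}$ inside $T_{x_0}S_1$, while transversality ($\theta_0(S_1,S_2)>0$) forces any grazing blob of relative mass $\epsilon$ to contribute a covariance component transverse to $T_{x_0}S_1$ of order $\epsilon r^2/C_{\Omega}$; since \eqref{eq:def_omega2} caps $\|\mb{C}_{x_0}-\E_{\mu_{gS}}\mb{C}_{x_0}\|$ at $r^3$, this forces $\epsilon=\OO(r)$, and only then does your absorption argument run. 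A smaller related slip: the ratio bound $(1-\OO(r))/(1+\OO(r))>\eta$ needs $r\lesssim 1-\eta$, not merely $r\lesssim\eta$, so $r<\eta/C_8$ does not yield $\lambda_d>\eta\lambda_1$ when $\eta$ is near $1$; the paper shares this looseness, as it verifies only $\lambda_{d+1}/\lambda_1<\eta$ and never the matching lower bound.
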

\begin{proof}
Recall that $\mb{C}_{x_0}$ denotes the sample covariance of the transformed data $\Phi_x^{-1}(X)\cap B_{x_0}(\mb{0},r)$.
We denote $H=B_{x_0}(\mb{0},r)\cap T_{x_0} S$ and note that
\begin{equation}\label{equ:lemma:SameS}
\begin{aligned}
&\E_{\mu_{gS}}\mb{C}_{x_0} =\frac{1}{\mu_{gS}(H)}\int_H \mb{yy}^T\mu_S(d\mb{y})-\frac{1}{(\mu_{gS}(H))^2}\int_H \mb{y}\mu_{gS}(d\mb{y})\cdot \int_H \mb{y}^T\mu_S(d\mb{y}) \\
&=\frac{1}{\mu_{ES}(H)}\int_H \mb{yy}^T\mu_{IS}(d\mb{y})-\frac{1}{(\mu_{ES}(H))^2}\int_H \mb{y}\mu_{ES}(d\mb{y})\cdot \int_H \mb{y}^T\mu_{IS}(d\mb{y})+\OO(r^4) \\
&=\E_{\mu_{ES}}\mb{C}_{x_0}+\OO(r^4).
\end{aligned}
\end{equation}
The first and third equalities of~\eqref{equ:lemma:SameS} follow from the definition of the expected covariance. The second equality of~\eqref{equ:lemma:SameS} follows from~\eqref{equ:chart} and the fact that $\|\mb{y}\| \leq r$. A slight generalization of Lemma~11 of~\citet{LocalPCA} implies that
\begin{equation}\label{equ:lemma:SameS2}
\E_{\mu_{ES}}\mb{C}_{x_0}=\frac{r^2}{d+2}\mb{P}_{T_{x_0} S},
\end{equation}
where $\mb{P}_{T_{x_0} S}$ is the orthogonal projector onto $T_{x_0} S$ in $T_{x_0}M$. Equation~\eqref{equ:lemma:SameS} and ~\eqref{equ:lemma:SameS2} imply that
\begin{equation}\label{equ:lemma:SameS3}
\|\E_{\mu_{gS}}\mb{C}_{x_0}-\frac{r^2}{d+2}\mb{P}_{T_{x_0} S}\|<C_Sr^4,
\end{equation}
where $C_S>0$ is a constant depending on the Riemannian metric $g$ (arising due to~\eqref{equ:chart}).
Using this constant $C_S$, we define
\begin{equation}
C_8=2(d+2)(C_S+1).
\label{eq:C5}
\end{equation}
We note that $C_8>1$. Combining this observation with the following two assumptions: $r<\eta/C_8$ and $0<\eta<1$, we conclude that $r<1$.

Combining the triangle inequality, \eqref{equ:XInOmega}, \eqref{equ:lemma:SameS3} and the fact that $r<1$, we conclude that
\begin{equation}\label{equ:lemma:SameS4}
\|\mb{C}_{x_0}-\frac{r^2}{d+2}\mb{P}_{T_{x_0}S}\| \leq \|\mb{C}_{x_0}-\E_{\mu_{gS}}\mb{C}_{x_0}\| + \|\E_{\mu_{gS}}\mb{C}_{x_0}-\frac{r^2}{d+2}\mb{P}_{T_{x_0}S}\| <r^3+C_Sr^4 \leq (C_S+1)r^3.
\end{equation}
The application of both Weyl's inequality~\citep{MR1061154} and~\eqref{equ:lemma:SameS4} results in the following lower bound of $\lambda_1(\mb{C}_{x_0})$ and upper bound of $\lambda_{d+1}(\mb{C}_{x_0})$:
\begin{equation}\label{equ:constraint0}
{\lambda_{d+1}(\mb{C}_{x_0})}< {(C_S+1)r^3} \ \text{ and } \ {\lambda_1(\mb{C}_{x_0})}> \frac{r^2}{d+2}-(C_S+1)r^3.
\end{equation}
It follows from~\eqref{eq:C5}, \eqref{equ:constraint0} and elementary algebraic manipulations that
\begin{align}\label{equ:constraint1}
&\frac{\lambda_{d+1}(\mb{C}_{x_0})}{\lambda_1(\mb{C}_{x_0})}< \frac{(C_S+1)r^3}{\frac{r^2}{d+2}-(C_S+1)r^3} = \frac{C_S+1}{1/(r(d+2)) -(C_S+1)}\\
& < \frac{C_S+1}{C_8/(d\eta+2\eta)-(C_S+1)}=\frac{\eta}{2-\eta}<\eta. \nonumber
\end{align}
Equation~\eqref{equ:lemma:SameS6} thus follows from~\eqref{equ:constraint1} and the thresholding of eigenvalues by $\eta\|\mb{C}_{x_0}\|$ in Algorithm~1.
\end{proof}

\paragraph{\bf Part II:} Next, we prove that $\mathbf{1}_{(\theta_{ij}+\theta_{ji})<\sigma_a}=1$ if $\sigma_a \geq C_9 r$.

\begin{lemma}\label{lemma:SameSang}
There exists a constant $C_9>1$ such that if $x, y\in   X_1^*$ and
\begin{equation}\label{equ:constraint2}
\sigma_a \geq C_9r,
\end{equation}
then $\mathbf{1}_{(\theta_{x,y}+\theta_{y,x})<\sigma_a}=1$.
\end{lemma}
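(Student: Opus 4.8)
The plan is to reduce the entire statement to a tangent-subspace estimation error by exploiting that $S_1$ is geodesic. Since $x,y\in X_1^*\subset S_1$ and, by definition of $X_1^*$, neither point lies on the intersection $S_1\cap S_2$, the shortest geodesic $l(x,y)$ lies entirely in $S_1$ (Definition~\ref{def:geodesic}); hence its initial velocity $l'(x,y)$ belongs to the \emph{true} tangent subspace $T_x S_1$, and likewise $l'(y,x)\in T_y S_1$. Moreover, because $B(x,r)\cap X_2=\emptyset$, the local measure near $x$ is supported on $S_1$, so $T_x S=T_x S_1$. Consequently the empirical geodesic angle $\theta_{x,y}$, being the elevation angle of $l'(x,y)$ from $T_x^E S$, measures \emph{only} the discrepancy between the estimated tangent $T_x^E S$ and the true tangent $T_x S_1$. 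Thus the whole lemma hinges on showing this discrepancy is $O(r)$.

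To control it, I would reuse the spectral estimate \eqref{equ:lemma:SameS4} from the proof of Lemma~\ref{lemma:SameSdim}, namely $\|\mb{C}_x-\tfrac{r^2}{d+2}\mb{P}_{T_x S_1}\|<(C_S+1)r^3$. The top-$d$ eigenspace of $\tfrac{r^2}{d+2}\mb{P}_{T_x S_1}$ is exactly $T_x S_1$, while by Lemma~\ref{lemma:SameSdim} the estimated subspace $T_x^E S$ is precisely the top-$d$ eigenspace of $\mb{C}_x$. The eigenvalue bounds \eqref{equ:constraint0} give a gap between the $d$th and $(d+1)$st eigenvalues of at least $\tfrac{r^2}{d+2}-2(C_S+1)r^3$. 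Applying the Davis--Kahan $\sin\Theta$ theorem then yields
\begin{equation*}
\|\mb{P}_{T_x^E S}-\mb{P}_{T_x S_1}\|\leq \frac{(C_S+1)r^3}{\tfrac{r^2}{d+2}-2(C_S+1)r^3}=\frac{(C_S+1)r}{\tfrac{1}{d+2}-2(C_S+1)r},
\end{equation*}
which, since $r<\eta/C_8\leq 1/C_8$ keeps the denominator bounded below by a positive geometric constant, is at most $C'r$ for a constant $C'$ depending only on $d$ and $C_S$; the same bound holds at $y$.

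The last step converts this subspace bound into an angle bound. Because $l'(x,y)\in T_x S_1$, one has $\mb{P}_{(T_x^E S)^\perp}\,l'(x,y)=(\mb{P}_{T_x S_1}-\mb{P}_{T_x^E S})\,l'(x,y)$, so
\begin{equation*}
\sin\theta_{x,y}=\frac{\|\mb{P}_{(T_x^E S)^\perp}\,l'(x,y)\|}{\|l'(x,y)\|}\leq \|\mb{P}_{T_x^E S}-\mb{P}_{T_x S_1}\|\leq C'r.
\end{equation*}
Using $\arcsin t\leq\tfrac{\pi}{2}t$ for small $t$ gives $\theta_{x,y}\leq\tfrac{\pi}{2}C'r$, and identically $\theta_{y,x}\leq\tfrac{\pi}{2}C'r$. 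Summing, $\theta_{x,y}+\theta_{y,x}\leq\pi C'r$, so setting $C_9=\pi C'+1>1$ forces $\theta_{x,y}+\theta_{y,x}<C_9 r\leq\sigma_a$ whenever $\sigma_a\geq C_9 r$, which is the claim.

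As for difficulty, once the geodesic property is invoked the argument is essentially a perturbation-theory computation, and I expect the only genuinely delicate point to be the Davis--Kahan step: one must verify that the relevant eigengap is of order $r^2$ (not smaller), so that the $O(r^3)$ covariance perturbation produces only an $O(r)$ rotation of the estimated tangent. Everything else is bookkeeping of explicit constants, all of which depend only on $d$ and the Riemannian metric $g$ through $C_S$, hence only on the underlying geometry.
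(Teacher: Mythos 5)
Your proposal follows essentially the same route as the paper's proof of Lemma~\ref{lemma:SameSang}: the geodesic property of $S_1$ places $l'(x,y)$ inside the true tangent subspace $T_xS_1$, the covariance concentration bound \eqref{equ:lemma:SameS4} plus the Davis--Kahan theorem controls $\|\mb{P}_{T_x^ES}-\mb{P}_{T_xS}\|$ by $\OO(r)$, and Jordan's inequality (your $\arcsin t\le\tfrac{\pi}{2}t$) converts this into a bound on each elevation angle; summing and choosing $C_9$ finishes. Your projection identity $\mb{P}_{(T_x^ES)^\perp}l'(x,y)=(\mb{P}_{T_xS_1}-\mb{P}_{T_x^ES})l'(x,y)$ is exactly the role played in the paper by Lemma~15 of \citet{LocalPCA}, so structurally the two arguments coincide.

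There is, however, one concrete error in your Davis--Kahan step. You take the eigengap to be $\frac{r^2}{d+2}-2(C_S+1)r^3$ and claim that the resulting denominator $\frac{1}{d+2}-2(C_S+1)r$ is ``bounded below by a positive geometric constant'' because $r<\eta/C_8\le 1/C_8$. But by \eqref{eq:C5} we have $C_8=2(d+2)(C_S+1)$, so $2(C_S+1)/C_8=1/(d+2)$ \emph{exactly}: the constraint $r<1/C_8$ yields only the trivial lower bound $0$, and the constraint $r<\eta/C_8$ yields $\frac{1-\eta}{d+2}$, which degenerates as $\eta\to 1$ --- a value the standing assumption $0<\eta<1$ permits. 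Consequently your $C'$, and hence $C_9=\pi C'+1$, depends on the tuning parameter $\eta$ rather than only on $d$ and $C_S$, contradicting both your own claim and the way $C_9$ is used downstream: in Proposition~\ref{lemma:connect} one sets $C_1=\max(C_8,C_9,C_{10})$ and requires $r<\min(\eta,\sigma_a,\sigma_d)/C_1$, which only makes sense if $C_9$ is a fixed geometric constant. The fix is precisely what the paper does: apply Davis--Kahan with the spectral gap of the \emph{unperturbed} matrix $\frac{r^2}{d+2}\mb{P}_{T_xS}$, whose eigenvalues are $\frac{r^2}{d+2}$ and $0$, so the gap is exactly $\frac{r^2}{d+2}$ and one gets
\begin{equation*}
\|\mb{P}_{T_x^ES}-\mb{P}_{T_xS}\|<\frac{\sqrt{2}\,\|\mb{C}_x-\tfrac{r^2}{d+2}\mb{P}_{T_xS}\|}{\tfrac{r^2}{d+2}}<\sqrt{2}(C_S+1)(d+2)\,r,
\end{equation*}
with a purely geometric constant and no $r$- or $\eta$-dependent denominator to control.
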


\begin{proof}
We define
\begin{equation}
\label{eq:C6}
C_9=\sqrt{2}(C_S+1)(d+2)\pi,
\end{equation}
where $C_S$ is the constant introduced in~\eqref{equ:lemma:SameS3}.
We show that for $x,y\in X_1^*$:
\begin{equation}\label{equ:lemma:SamSang1}
\theta_{x,y}<\frac{C_9}{2} r,
\end{equation}
which immediately implies the lemma.

In order to prove~\eqref{equ:lemma:SamSang1}, we first apply the Davis-Kahan Theorem~\citep{1970SJNA....7....1D} and~\eqref{equ:lemma:SameS6} and then apply~\eqref{equ:lemma:SameS4} to obtain the following bound on the distance between the subspaces $T_{x}^E S$ and $T_{x} S$ (which are spanned by the top $d$ eigenvectors of $\mb{C}_{x}$ and $\frac{r^2}{d+2}\mb{P}_{T_{x} S}$,  respectively; this observation uses~\eqref{equ:lemma:SameS6}):
\begin{equation}\label{equ:lemma:SameS5}
\begin{aligned}
\|\mb{P}_{T_{x}^E S}-\mb{P}_{T_{x} S}\|<\frac{\sqrt{2}\|\mb{C}_{x}-\frac{r^2}{d+2}\mb{P}_{T_{x} S}\|}{\frac{r^2}{d+2}}<\sqrt{2}(C_S+1)(d+2)r.
\end{aligned}
\end{equation}
We remark that in applying the Davis-Kahan Theorem we made use of the following basic calculation of $\Delta$,
the $d$th spectral gap of $\frac{r^2}{d+2}\mb{P}_{T_{x} S}$:
$\Delta=\lambda_d(\frac{r^2}{d+2}\mb{P}_{T_{x} S})-\lambda_{d+1}(\frac{r^2}{d+2}\mb{P}_{T_{x} S})=\frac{r^2}{d+2}$.

Next, we recall that $\theta_{\max}(T_{x}^E S,T_{x} S)$ denotes the largest principal angle between $T_{x}^E S$ and $T_{x} S$. We note that Lemma~15 of~\citet{LocalPCA} (whose application
requires \eqref{equ:lemma:SameS6}), \eqref{equ:lemma:SameS5}, \eqref{eq:C6} and Jordan's inequality (lower bounding the $\sin$ function by $2/\pi$) imply that
\begin{equation}\label{equ:lemma:SamSang2}
\displaystyle \theta_{\max}(T_{x}^E S,T_{x} S)=\sin^{-1}(\|\mb{P}_{T_{x}^E S}-\mb{P}_{T_{x} S}\|)<\sin^{-1}(\sqrt{2}(C_S+1)(d+2)r)<\frac{C_9}{2} r.
\end{equation}

Since $\theta_{x,y}$ is the angle between $l'(x,y)\in T_{x} S$ and $T_{x}^E S$, \eqref{equ:lemma:SamSang1} follows from~\eqref{equ:lemma:SamSang2}. We can then conclude that if $\sigma_a \geq C_9 r$, then $\mathbf{1}_{(\theta_{ij}+\theta_{ji})<\sigma_a}=1$ for all $x_i,x_j$ in $  X_1^*$.

\end{proof}

\paragraph{\bf Part III:} By the construction of the affinity matrix $\mb{W}$ and Lemmata~\ref{lemma:SameSdim} and \ref{lemma:SameSang}, the connectivity between points $x_i,x_j\in   X_1^*$ is solely determined by the indicator function $\mathbf{1}_{\dist_g(x_i,x_j)<\sigma_d}$. It is obvious that if $\sigma_d > 4r$ then the graph with nodes in $X_1$ and weights $\mb{1}_{\dist_g(x_i,x_j)< \sigma_d}$ is connected (this can be done by finite covering of $S_1$ with balls of radius $r$). It follows from~\citet[pages 38-39]{LocalPCA} that the graph with nodes in $X_1^*$ is also connected if
\begin{equation}\label{equ:constraint3}
\displaystyle r\leq \sigma_d/C_{10}.
\end{equation}

There is one component in the argument of~\citet[page 38]{LocalPCA} that requires careful adaptation to the Riemannian case. It is related to the determination of the constant $C_{10}$. This constant is set to be $(3C'+9)^{-1}$ (see~\citet[page 39]{LocalPCA}). In the Euclidean case, $C'$ is guaranteed by Lemma~18 of~\citet{LocalPCA}. The adaptation of this Lemma to the Riemannian case can be stated in the following lemma (it uses $\theta_0$, which was defined in~\eqref{eq:def_theta0}).
\begin{lemma}\label{lemma:intercoord}
Let $(M,g)$ be a Riemannian manifold and $S_1$, $S_2$ be two compact geodesic submanifolds of dimension $d$ such that $\theta_0(S_1,S_2)>0$. Then there is a constant $C'$ such that
\begin{equation*}
\dist_g(x,S_1\cap S_2)\leq C' \max\{\dist_g(x,S_1), \dist_g(x,S_2)\}\quad \forall x\in S_1\cup S_2.
\end{equation*}
\end{lemma}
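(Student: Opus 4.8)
The plan is to transfer the statement, via normal coordinates centered at a point of the intersection, to the Euclidean fact already available as Lemma~18 of~\citet{LocalPCA}, and to use compactness of $S_1 \cap S_2$ to pin down the relevant angle so that the constant depends only on $\theta_0(S_1,S_2)$ (see~\eqref{eq:def_theta0}). By symmetry I may assume $x \in S_1$, so $\dist_g(x,S_1)=0$ and the claim reduces to $\dist_g(x, S_1\cap S_2) \le C'\, \dist_g(x,S_2)$. Note first that $S_1\cap S_2$ is itself a compact geodesic submanifold, since the shortest geodesic joining two of its points lies in $S_1$ and in $S_2$, hence in their intersection; I also assume $S_1\cap S_2\neq\emptyset$, which is implicit in $\theta_0(S_1,S_2)$ being a genuine infimum.

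I would argue by contradiction. If no such $C'$ existed, there would be a sequence $x_n \in S_1$ with $\dist_g(x_n, S_1\cap S_2) > n\,\dist_g(x_n, S_2)$. Since $S_1$ is compact and $S_1\cap S_2$ nonempty, the left-hand side is bounded by $\diam(S_1)$, forcing $\dist_g(x_n,S_2)\to 0$. Passing to a subsequence, $x_n \to x_*\in S_1$, and closedness of $S_2$ gives $x_*\in S_1\cap S_2$. Writing $y_n\in S_2$ and $w_n\in S_1\cap S_2$ for nearest points to $x_n$, the triangle inequality yields $y_n, w_n \to x_*$ as well, so the whole configuration localizes inside an arbitrarily small normal-coordinate ball $B(x_*,\rho)$.

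Next I would transfer everything to $T_{x_*}M$ through $\Phi_{x_*}^{-1}=\log_{x_*}$. Because $S_1,S_2$ are geodesic, within $B(x_*,\rho)$ the map $\Phi_{x_*}$ carries the linear sets $V_i := T_{x_*}S_i \cap B_{x_*}(\mb 0,\rho)$ onto $S_i$, and since $\Phi_{x_*}^{-1}$ is a bijection there one gets $\Phi_{x_*}^{-1}(S_1\cap S_2)=V_1\cap V_2$. In normal coordinates the metric tensor satisfies $g_{ij}=\delta_{ij}+\OO(\rho^2)$ (cf.~\citet{docarmo92}), so on $B_{x_*}(\mb 0,\rho)$ the metrics $\dist_g$ and $\dist_E$ agree up to a factor $1+\OO(\rho^2)$; because the nearest points $y_n,w_n$ localize in the ball, this comparison passes to distances-to-submanifolds, giving $\dist_g(x_n,S_2)=(1+\OO(\rho^2))\,\dist_E(\mb x_n,V_2)$ and $\dist_g(x_n,S_1\cap S_2)=(1+\OO(\rho^2))\,\dist_E(\mb x_n,V_1\cap V_2)$, where $\mb x_n=\Phi_{x_*}^{-1}(x_n)\in V_1$. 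Now Lemma~18 of~\citet{LocalPCA}, applied to $V_1,V_2$ whose minimal nonzero principal angle equals $\theta_{\min}(T_{x_*}S_1,T_{x_*}S_2)\ge \theta_0(S_1,S_2)>0$, bounds $\dist_E(\mb x_n,V_1\cap V_2)$ by a constant depending only on $\theta_0$ times $\max\{\dist_E(\mb x_n,V_1),\dist_E(\mb x_n,V_2)\}=\dist_E(\mb x_n,V_2)$. Chaining the three relations contradicts $\dist_g(x_n,S_1\cap S_2)>n\,\dist_g(x_n,S_2)$ for large $n$, and the resulting estimate is exactly the desired inequality with $C'$ controlled by $\theta_0$.

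The main obstacle I expect is the distance-to-submanifold comparison in the third step: the normal-coordinate expansion only controls pointwise metric distortion, so I must verify that the Riemannian and Euclidean \emph{nearest-point} distances to $S_2$ and to $S_1\cap S_2$ are genuinely comparable. This rests on two facts that need care, namely that the nearest points $y_n,w_n$ stay inside $B(x_*,\rho)$ (guaranteed by their convergence to $x_*$) and that $\Phi_{x_*}$ carries $S_i$ exactly onto $V_i$ near $x_*$. A secondary subtlety is that $\theta_0(S_1,S_2)$ is an infimum over the whole intersection, so the angle at any single point need not control the constant uniformly; the contradiction-plus-compactness argument is precisely what localizes the failure at the limit point $x_*$, where the angle is bounded below by $\theta_0>0$, thereby upgrading the pointwise Euclidean bound to the global constant $C'$.
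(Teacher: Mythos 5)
Your proposal is correct and follows essentially the same route as the paper's own proof: a contradiction-plus-compactness argument that localizes a hypothetical failing sequence at a limit point of $S_1\cap S_2$, transfers everything to normal coordinates there (where the geodesic submanifolds become linear subspaces), and combines the Euclidean subspace estimate of \citet{LocalPCA} with the $\dist_g$ versus $\dist_E$ comparison in a small normal-coordinate ball to contradict the assumed blow-up. The only cosmetic difference is that the paper invokes Lemma~17 of \citet{LocalPCA} (the pointwise bound $\dist_E(\mb{y},L_1\cap L_2)\leq \dist_E(\mb{y},L_2)/\sin\theta_{\min}(L_1,L_2)$) where you cite Lemma~18, and it handles the distance-to-set comparison by noting that the $\OO(r^2)$ distortion is uniform over the ball, which is the same localization point you flag as the main obstacle.
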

We prove Lemma~\ref{lemma:intercoord} in Appendix~\ref{sec:lemma:intercoord:proof}. The proof implies that $C'$ is determined by the geometric properties of $S_1$ and $S_2$ and the angle $\theta_0(S_1,S_2)$.

\subsection{Disconnectedness Between $\hat{X}^c$ and $X^*_1$ (or $X^*_2$)}

We show here that the points in $\hat{X}^c$ (where $\hat{X}$ is defined in~\eqref{equ:Xhat}) are not connected to the points of $X^*$. In Section~\ref{sec:SameS}, we showed that the estimated dimensions of local neighborhoods of points in $X^*$ equal $d$.
In this section, we show that the estimated dimensions of local neighborhoods of points in $\hat{X}^c$ are larger than $d$. Since $\mathbf{1}_{\dim {T_{x_i}^E S}=\dim {T_{x_j}^E S }}$ is a multiplicative term of $\mb{W}$, we conclude that $\hat{X}^c$ is disconnected from $X^*$.
The following main proposition of this section implies that WLOG the estimated tangent dimension at $\hat{X}^c\cap X_1$ is at least $d+1$ (it uses the angle $\theta_{\max}(S_1, S_2)$
defined in~\eqref{eq:def_theta_max}).
\begin{proposition}\label{lemma:Intersection}
There exists a constant $C_2>1$ depending only on $d$ and $\theta_{\max}(S_1, S_2)$ such that if $r < \eta$,
\begin{equation}
\label{eq:r_eta_cond}
\eta<C_2^{-\frac{d+2}{2}},
\end{equation}
\begin{equation}
\label{eq:def_delta}
\delta := r \sqrt{1-C_2\eta^{\frac{2}{d+2}}}
\end{equation}
and
\begin{equation}
\label{eq:bound_by_delta}
x \in \hat{X}^c \cap X_1,
 \text{ that is, }
\dist_g(x, S_2) < \delta,
\end{equation}
then
\begin{equation*}
\frac{\lambda_{d+1}(\mb{C}_x)}{\lambda_1(\mb{C}_x)}>\eta.
\end{equation*}
\end{proposition}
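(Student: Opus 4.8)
The plan is to show that, because $x$ lies within distance $\delta<r$ of $S_2$, the ball $B(x,r)$ pools data from both submanifolds, so the log-images in $T_xM$ acquire a genuine $(d{+}1)$st direction of spread and the estimated local dimension exceeds $d$. Since $S_1$ is geodesic and $x\in S_1$, the set $\Phi_x^{-1}(S_1)$ is exactly the subspace $T_xS_1$, whereas $\Phi_x^{-1}(S_2)$ is a $d$-dimensional submanifold whose nearest point to $\mb{0}$ sits at Euclidean distance $s:=\dist_g(x,S_2)<\delta$. I would first reduce the claim about the sample covariance $\mb{C}_x$ to one about a model covariance: by $\Omega_2$ (see \eqref{eq:def_omega2}) one has $\|\mb{C}_x-\E_{\mu_{gS}}\mb{C}_x\|\le r^3$; by the measure comparison \eqref{equ:chart} one has $\E_{\mu_{gS}}\mb{C}_x=\E_{\mu_{ES}}\mb{C}_x+\OO(r^4)$; and replacing the curved set $\Phi_x^{-1}(S_2)$ by its tangent flat $F=\log_x(x')+W$, where $x'\in S_2$ is the foot point and $W$ is the $d$-plane tangent to $\Phi_x^{-1}(S_2)$ at $\log_x(x')$, costs a further $\OO(r^3)$ in covariance (the two sets differ by $\OO(r^2)$ over a region of radius $r$). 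All these errors are $\OO(r^3)$, which the hypothesis $r<\eta$ turns into $\OO(\eta r^2)$.

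The key geometric input is that the largest principal angle $\theta_d$ between $T_xS_1$ and $W$ is bounded below by essentially $\theta_{\max}(S_1,S_2)$. Lemma~\ref{lemma:intercoord} applies here since $\dist_g(x,S_1)=0$ and $\dist_g(x,S_2)<\delta$, yielding $\dist_g(x,S_1\cap S_2)\le C'\delta$; thus $x$ lies within $\OO(\delta)$ of the intersection, and by smoothness of $S_1,S_2$ the planes $T_xS_1$ and $W$ are $\OO(\delta)$-close to $T_zS_1$ and $T_zS_2$ at the nearest intersection point $z$. Combined with definition \eqref{eq:def_theta_max}, this gives $\sin^2\theta_d\ge\tfrac12\sin^2\theta_{\max}(S_1,S_2)$ once $\delta$ (equivalently $r,\eta$) is small.

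On the tangent-flat model, $\E_{\mu_{ES}}\mb{C}_x$ is the covariance of a two-component mixture: a fraction $1-\alpha$ on $T_xS_1$ with covariance $\tfrac{r^2}{d+2}\mb{P}_{T_xS_1}$ (as in \eqref{equ:lemma:SameS2}) and a fraction $\alpha$ on $F$ with covariance $\tfrac{\rho^2}{d+2}\mb{P}_{W}$, where $\rho=\sqrt{r^2-s^2}$ is the radius of $F\cap B_x(\mb{0},r)$. Two quantities drive the bound: first $\rho^2=r^2-s^2\ge r^2-\delta^2=C_2\,r^2\eta^{2/(d+2)}$ by \eqref{eq:def_delta} and \eqref{eq:bound_by_delta}; second, the $S_2$-mass fraction obeys $\alpha\asymp(\rho/r)^d\ge\tfrac12 C_2^{d/2}\eta^{d/(d+2)}$, comparing a $d$-disk of radius $\rho$ against a full $d$-disk of radius $r$. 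To lower bound $\lambda_{d+1}$ I would apply Courant--Fischer with the $(d{+}1)$-dimensional test space $U=T_xS_1\oplus\R w$, where $w$ is the unit normal to $T_xS_1$ realizing $\theta_d$. Dropping the positive-semidefinite mean-difference term and restricting to $U$ reduces the Rayleigh quotient to a $2\times2$ quadratic form whose determinant equals $(1-\alpha)\alpha\,\tfrac{r^2\rho^2}{(d+2)^2}\sin^2\theta_d$ and whose trace is at most $\tfrac{r^2}{d+2}$; hence its least eigenvalue, and therefore $\lambda_{d+1}(\E_{\mu_{ES}}\mb{C}_x)$, is at least $\tfrac12\alpha\tfrac{\rho^2}{d+2}\sin^2\theta_d$.

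Combining these estimates gives $\lambda_{d+1}(\E_{\mu_{ES}}\mb{C}_x)\gtrsim\tfrac{C_2^{(d+2)/2}}{8(d+2)}\sin^2\theta_{\max}(S_1,S_2)\,r^2\eta$, while $\lambda_1(\E_{\mu_{ES}}\mb{C}_x)\le c\,r^2$ for a dimensional constant $c$. Transferring back to $\mb{C}_x$ with the $\OO(r^3)=\OO(\eta r^2)$ errors of the first paragraph and forming the ratio, I would then choose $C_2$ large enough, depending on $d$ and $\theta_{\max}(S_1,S_2)$, so that the leading $C_2^{(d+2)/2}$ factor dominates $c$ together with the error constants, which yields $\lambda_{d+1}(\mb{C}_x)/\lambda_1(\mb{C}_x)>\eta$; note that \eqref{eq:r_eta_cond}, namely $\eta<C_2^{-(d+2)/2}$, is exactly what keeps $\delta$ real in \eqref{eq:def_delta}. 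I expect the main obstacle to be the coupled estimate of the third paragraph: the single parameter $\delta$ must simultaneously force a large enough mass fraction $\alpha\sim\eta^{d/(d+2)}$ and a large enough residual radius $\rho^2\sim\eta^{2/(d+2)}$, whose product must recombine to order $\eta^{1}$, while the tilt $\theta_d$ is pinned to $\theta_{\max}(S_1,S_2)$ via the proximity-to-intersection argument; balancing these is precisely what dictates the form of $\delta$ and the $\eta$-threshold comparison.
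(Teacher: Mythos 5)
Your proposal is correct in substance, but it follows a genuinely different route from the paper. The paper's proof transfers the whole problem to the normal coordinates $T_zM$ at the nearest intersection point $z$ (obtained, as in your argument, from Lemma~\ref{lemma:intercoord}): there the geodesic property makes \emph{both} $S_1$ and $S_2$ exactly linear, so the spectral estimate can be quoted wholesale from Lemma~21 of \citet{LocalPCA}; the price is the coordinate-transfer machinery --- showing that the covariance changes only by a rotation up to $\OO(r^3)$ (see \eqref{equ:lemma:Intersection2}, whose proof needs the ``near-isometry implies near-rotation'' lemma of the appendix) and the domain-change estimate \eqref{equ:interarea}. You instead stay in $T_xM$, where only $S_1$ is exactly flat, linearize $\Phi_x^{-1}(S_2)$ by its tangent flat at the foot point, and then prove the eigenvalue bound from scratch via the mixture decomposition, Courant--Fischer on the test space $T_xS_1\oplus\R w$, and the $\det/\mathrm{tr}$ bound on the extremal $2\times 2$ section; Lemma~\ref{lemma:intercoord} enters only to pin the tilt $\theta_d$ to $\theta_{\max}(S_1,S_2)$ via \eqref{eq:def_theta_max}. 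The two arguments meet exactly at the end: your $\alpha\rho^2 \gtrsim C_2^{(d+2)/2}r^2\eta$, i.e.\ $(\rho/r)^{d+2}>C_2^{(d+2)/2}\eta$, is precisely the paper's condition \eqref{eq:conclude_prop8} coming from the factor $(1-(\dist_g(x_0,S_2)/r)^2)^{d/2+1}$ in \eqref{eq:lower_bound_rat_beta}, and your choice of $C_2$ plays the role of \eqref{eq:def_C2} (your angle constant $\sin^2\theta_{\max}$ versus the paper's $(1-\cos\theta_{\max})^2$ is immaterial --- either defines a valid $C_2$). What your route buys is self-containedness (no appeal to Lemma~21 of \citet{LocalPCA}) and the avoidance of the rotation/domain-change lemmas; what it costs is that two steps you treat as routine carry the hidden technical weight: (i) the claim that an $\OO(r^2)$-Hausdorff replacement of the curved $\Phi_x^{-1}(S_2)$ by the flat $F$ perturbs the covariance by only $\OO(r^3)$ (including boundary-mass effects, which need $\rho\gtrsim r\eta^{1/(d+2)}$ to stay harmless), and (ii) comparing $W\subset T_xM$ with $T_zS_2\subset T_zM$ across different tangent spaces, which requires a parallel-transport/transition-map stability argument that is a simpler cousin of the very lemma the paper proves in its appendix. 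Neither is a gap --- both are true and of the same order as the error terms the paper tolerates --- but they should be flagged as the places where curvature-dependent constants enter, just as the paper's $C'_S$ does in \eqref{equ:IntersectionAdapt}.
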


\begin{proof}
Let us first sketch the idea of the proof.
It is easier to estimate the local covariance matrices when the two manifolds are subspaces (see Lemma~21 of~\citet{LocalPCA}). However, for $x\in \hat{X}^c\cap X_1$, the logarithm map of $S$ into $T_x M$ does not result in two subspaces (see Figure~\ref{fig:transitionmap}). On the other hand, for $z$, the projection of $x$ onto $S_1\cap S_2$, the logarithm map of $S$ into $T_z M$ results in two subspaces, where the local covariance can be estimated more easily. Some difficulties arise due to the application of the logarithm map and the change of tangent spaces. In particular, the ball $B(x,r)$ becomes irregular in the domain $T_z M$.

We recall that $\mu_{gS}=\mu_{gS_1}+\mu_{gS_2}$ and $\mu_{ES}=\mu_{ES_1}+\mu_{ES_2}$. We arbitrarily fix $x_0\in S_1$ such that $\dist_g(x_0, S_2)<r$. We note that Lemma~\ref{lemma:intercoord} implies that
\begin{equation}\label{equ:lemma:Intersection1}
\dist_g(x_0,S_1\cap S_2)\leq C r.
\end{equation}
Let
\[
\displaystyle z=\argmin_{y\in S_1\cap S_2} \dist_g(x_0,y),
\]
where if argmin is not uniquely defined, then $z$ is arbitrarily chosen among all minimizers.
It follows from~\eqref{equ:lemma:Intersection1} that $\dist_g(x_0,z)\leq Cr$ and from this and the triangle inequality, it follows that
\begin{equation}\label{equ:inclusion}
B(x_0,r)\subset B(z,(C+1)r).
\end{equation}
Recall that $\Phi_{x_0}$ and $\Phi_z$ denote the normal coordinate charts around $x_0$ and $z$ respectively (see Figure~\ref{fig:transitionmap}); it is sufficient to restrict them to $B(x_0,r)$ and $B(z,(C+1)r)$ respectively. When using the chart $\Phi_z$, $S_1$ and $S_2$ correspond to two subspaces in $T_z M$, which we denote by $L_1$ and $L_2$ respectively.
On the other hand, when using the chart $\Phi_{x_0}$, $S_2$ corresponds to a manifold in $T_{x_0} M$, whereas $S_1$ still corresponds to a subspace.
\begin{figure*}[htb!]
\centering
\includegraphics[width=0.8\textwidth]{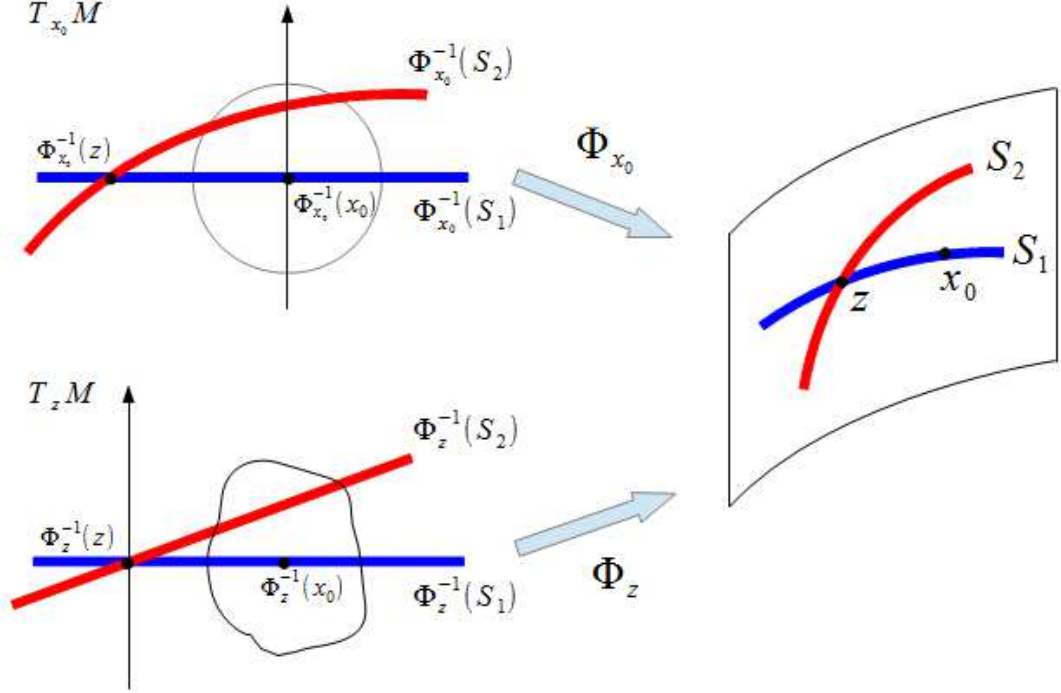}
\caption{The transition map between normal coordinates of $T_{x_0}M$ and $T_zM$. Notice the regular ball $B_{x_0}(\mb{0},r)$ in $T_{x_0} M$ is mapped to the irregular region in $T_z M$ because the exponential maps $\Phi_{x_0}$ and $\Phi_z$ are nonlinear.}
\label{fig:transitionmap}
\end{figure*}
It follows from~\eqref{equ:inclusion} and the invertibility of $\Phi_z$ that the composition map $\phi=\Phi_z^{-1}\circ\Phi_{x_0}$ embeds $B_{x_0}(\mb{0},r)$ into $B_z(\mb{0},(C+1)r)$ as shown in Figure~\ref{fig:transitionmap}. Recall that $\mb{C}_{x_0}$ denotes the sample covariance of the data $\Phi_{x_0}^{-1}(B(x_0,r)\cap X)$ in $T_{x_0} M$ and $\mb{C}_{x_0}^z$ denotes the sample covariance of the data $\Phi_z^{-1}(B(x_0,r)\cap X)$ in $T_z M$. Using the notation $O(D)$ for the set of orthogonal $D\times D$ matrices, we claim that
\begin{equation}\label{equ:lemma:Intersection2}
\displaystyle \exists \mb{R}\in O(D) \ \text{ s.t. } \ \mb{R} \left(\E_{\mu_{gS}}\mb{C}_{x_0}\right) \mb{R}^T=\E_{\mu_{gS}}\mb{C}_{x_0}^z+\OO(r^3).
\end{equation}
The technical proof of~\eqref{equ:lemma:Intersection2} is in Appendix~\ref{sec:lemma:Intersection2:proof}.

We estimate $\E_{\mu_{gS}}\mb{C}_{x_0}^z$ as follows. Let $H=\Phi_z^{-1}(B(x_0,r)\cap (S_1\cup S_2))$ and $H'=B_I(\Phi_z^{-1}(x_0),r)\cap (L_1\cup L_2)$ (see Figure~\ref{fig:changedomain}), where $B_I(\Phi_z^{-1}(x_0),r)$ is the $r$-ball with center $\Phi_z^{-1}(x_0)$ in $T_z M$, which uses the Euclidean distance $\dist_E$.

\begin{figure*}[htb!]
\centering
\includegraphics[width=0.8\textwidth]{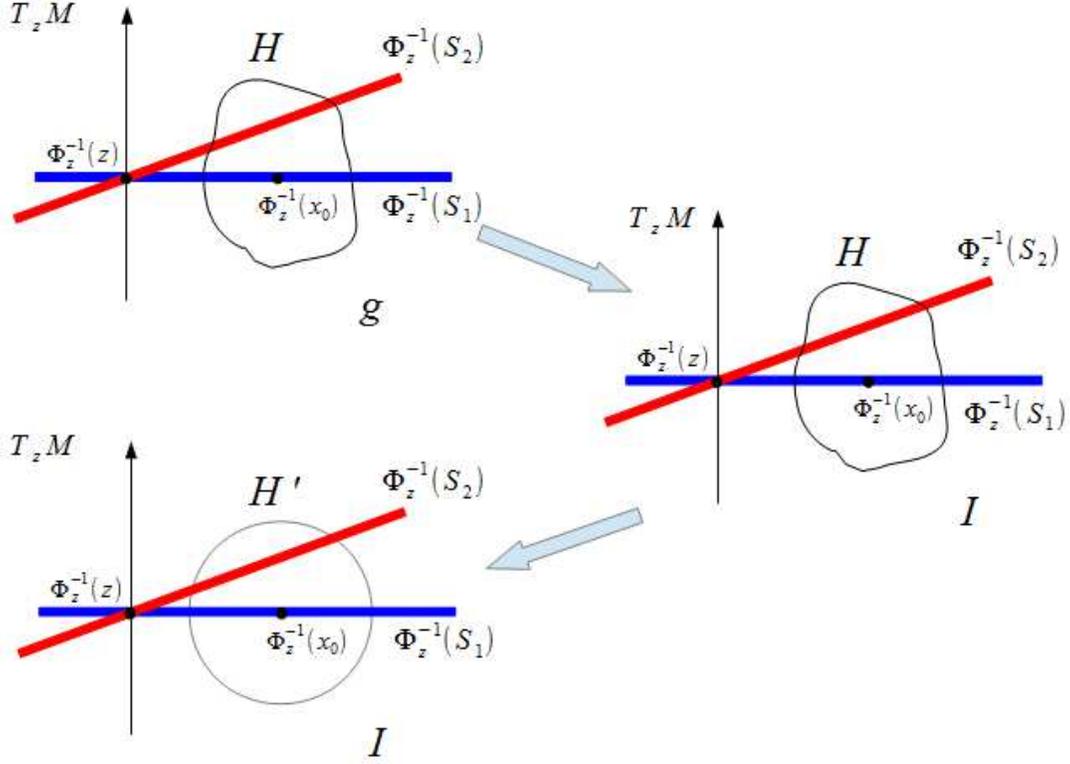}
\caption{Change of domain and metric}
\label{fig:changedomain}
\end{figure*}

The rest of the proof requires the following two technical observations
\begin{equation}\label{equ:interarea}
\mu_{ES}((H\setminus H')\cup (H'\setminus H))=\OO(r)\mu_{ES}(H).
\end{equation}
and
\begin{align}\label{equ:lemma:Intersection3}
\E_{\mu_{gS}}\mb{C}_{x_0}^z =\E_{\mu_{ES}}\mb{C}_{x_0}^z+\OO(r^3)=\E_{\mu_{ES}}\mb{C}_{H'}+\OO(r^3).
\end{align}
We prove~\eqref{equ:interarea} in Appendix~\ref{sec:interarea}.
The first equality of~\eqref{equ:lemma:Intersection3} follows from the definition of the expected covariance (see~\eqref{eq:def_exp_cov}), \eqref{equ:chart} and the fact that $\|\mb{y}\| \leq (C+1)r$. The second equality of~\eqref{equ:lemma:Intersection3}  follows from the definition of the expected covariance (see~\eqref{eq:def_exp_cov}), \eqref{equ:interarea} and the fact that $\|\mb{y}\| \leq (C+1)r$.

It follows from~\eqref{eq:def_omega2}, \eqref{equ:XInOmega}, \eqref{equ:lemma:Intersection2}, \eqref{equ:lemma:Intersection3} and the triangle inequality that
\begin{align}
\| \mb{R}\mb{C}_{x_0} \mb{R}^T - \E_{\mu_{ES}}\mb{C}_{H'}\|
\leq
&\| \mb{R}\mb{C}_{x_0} \mb{R}^T - \mb{R}\E_{\mu_{gS}} \mb{C}_{x_0} \mb{R}^T \|
+
\| \mb{R}\E_{\mu_{gS}} \mb{C}_{x_0} \mb{R}^T - \E_{\mu_{ES}}\mb{C}_{x_0}^z \| +
\nonumber \\
\label{equ:IntersectionAdapt}
&\| \E_{\mu_{ES}}\mb{C}_{x_0}^z - \E_{\mu_{ES}}\mb{C}_{H'} \|
\leq r^3 + \OO(r^3)+ \OO(r^3) \leq C'_S r^3
\end{align}
for a constant $C'_S>0$.

The combination of~\eqref{equ:IntersectionAdapt}, Weyl's inequality~\citep{MR1061154} for $\mb{R}(\mb{C}_{x_0})\mb{R}^T$ and $\E_{\mu_{ES}}\mb{C}_{H'}$, and the fact that $\mb{R}\mb{C}_{x_0} \mb{R}^T$ and $\mb{C}_{x_0}$ have the same eigenvalues implies that
\begin{equation}
\label{eq:Weyl_imply}
\lambda_{d+1}(\mb{C}_{x_0})\geq \lambda_{d+1}(\E_{\mu_{ES}}\mb{C}_{H'})-C'_S r^3, \quad \lambda_1(\mb{C}_{x_0})\leq \lambda_1(\E_{\mu_{ES}}\mb{C}_{H'})+C'_Sr^3.
\end{equation}
 Notice that $\theta_{\max}(S_1,S_2)\leq \theta_{\max}(L_1,L_2)$ by definition. Applying \eqref{eq:Weyl_imply} and Lemma~21 of~\citet{LocalPCA} to $\E_{\mu_{ES}}\mb{C}_{H'}$, where  $\theta_{\max}(L_1,L_2)$ is replaced by $\theta_{\max}(S_1,S_2)$ and proper scaling is used, results in
\begin{align}
\nonumber
\displaystyle \frac{\lambda_{d+1}(\mb{C}_{x_0})}{\lambda_1(\mb{C}_{x_0})} &\geq \frac{\frac{1}{8(d+2)}(1-\cos\theta_{\max}(S_1,S_2))^2(1-(\dist_g(x_0, S_2)/r)^2)_+^{d/2+1}-C'_S r^3}{1/(d+2)+(\dist_g(x_0, S_2)/r)(1-(\dist_g(x_0, S_2)/r)^2)_+^{d/2}+C'_S r^3}\\
& \geq \frac{\frac{1}{8(d+2)}(1-\cos\theta_{\max}(S_1,S_2))^2(1-(\dist_g(x_0, S_2)/r)^2)^{d/2+1}-C'_S r^3}{1/(d+2)+1+C'_S r^3}.
\label{eq:lower_bound_rat_beta}
\end{align}
We remark that the second inequality of~\eqref{eq:lower_bound_rat_beta} is derived by applying the bound: $\dist_g(x, S_2)<r$. In order to satisfy $\frac{\lambda_{d+1}(\mb{C}_{x_0})}{\lambda_1(\mb{C}_{x_0})}>\eta$, we require that
\begin{equation}
\label{eq:require_eta}
 \frac{\frac{1}{8(d+2)}(1-\cos\theta_{\max}(S_1,S_2))^2(1-(\dist_g(x_0, S_2)/r)^2)^{d/2+1}-C'_S r^3}{1/(d+2)+1+C'_S r^3}> \eta.
\end{equation}
Since $r<\eta<1$, we replace $C'_S r^3$ with $C'_S \eta$ in the numerator of~\eqref{eq:require_eta} and $C'_S r^3$ with $C'_S$ in the denominator of~\eqref{eq:require_eta} and slightly simplify the inequality to obtain the following stronger requirement:
\begin{equation}
\label{eq:require_eta1}
 {\frac{(1-\cos\theta_{\max}(S_1,S_2))^2}{8(d+2)}(1-(\dist_g(x_0, S_2)/r)^2)^{d/2+1}}
 >
 \left( \frac{1}{d+2}+1+2C'_S \right) \eta.
\end{equation}

Finally, setting
\begin{equation}
\label{eq:def_C2}
C_2=\left(\frac{8d+24+16(d+2)C'_S}{(1-\cos\theta_{\max}(S_1,S_2))^2}\right)^{\frac{2}{d+2}}
\end{equation}
we can rewrite~\eqref{eq:require_eta1} as follows
\begin{equation}
\label{eq:conclude_prop8}
(1-(\dist_g(x_0, S_2)/r)^2)^{\frac{d+2}{2}}> C_2^{\frac{d+2}{2}} \eta.
\end{equation}
We immediately conclude~\eqref{eq:conclude_prop8} (and consequently the lemma) from~\eqref{eq:def_delta} and \eqref{eq:bound_by_delta}.
\end{proof}

We end this section with an immediate corollary of Proposition~\ref{lemma:Intersection}, which is crucial in order to follow the proof.
\begin{corollary}\label{cor:intersection}
The following relations are satisfied:
\[
X^*_1\subset \hat{X}_1 \quad \text{and} \quad X^*_2\subset \hat{X}_2.
\]
\end{corollary}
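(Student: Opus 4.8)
The plan is to obtain both inclusions from the fact that Lemma~\ref{lemma:SameSdim} and Proposition~\ref{lemma:Intersection} supply literally contradictory bounds on the eigenvalue ratio $\lambda_{d+1}(\mb{C}_x)/\lambda_1(\mb{C}_x)$ at any point where both would apply. By symmetry it suffices to establish $X_1^*\subset\hat{X}_1$, which I would prove by contradiction: assume some $x\in X_1^*$ satisfies $x\notin\hat{X}_1$, i.e.\ $\dist_g(x,S_2)<\delta$, so that $x\in\hat{X}^c\cap X_1$ in the sense of \eqref{eq:bound_by_delta}.

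First I would check that the standing hypotheses \eqref{eq:constant_all_requirements} of Theorem~\ref{theorem:all} place $x$ within the scope of both results. Since $C_1=\max(\{C_i\}_{i=8}^{10})\geq C_8$ and $C_1>1$, the constraint $r<\min(\eta,\sigma_d,\sigma_a)/C_1$ yields both $r<\eta/C_8$ and $r<\eta<1$; combined with $\eta<C_2^{-\frac{d+2}{2}}$, these are exactly the hypotheses needed to invoke Lemma~\ref{lemma:SameSdim} and Proposition~\ref{lemma:Intersection} at the point $x$.

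Then the contradiction is immediate. Applying Lemma~\ref{lemma:SameSdim} (specifically the estimate \eqref{equ:constraint1} established in its proof) to $x\in X_1^*$ gives $\lambda_{d+1}(\mb{C}_x)/\lambda_1(\mb{C}_x)<\eta$, so that only $d$ eigenvalues of $\mb{C}_x$ exceed the threshold $\eta\|\mb{C}_x\|=\eta\lambda_1(\mb{C}_x)$ and hence $\dim{T_x^E S}=d$. Applying Proposition~\ref{lemma:Intersection} to the same $x\in\hat{X}^c\cap X_1$ gives $\lambda_{d+1}(\mb{C}_x)/\lambda_1(\mb{C}_x)>\eta$; since $\lambda_1\geq\cdots\geq\lambda_{d+1}>\eta\lambda_1$, at least $d+1$ eigenvalues lie above the threshold and $\dim{T_x^E S}\geq d+1$. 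The two conclusions are incompatible, so no such $x$ can exist and $X_1^*\subset\hat{X}_1$; interchanging the roles $S_1\leftrightarrow S_2$ throughout gives $X_2^*\subset\hat{X}_2$.

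Because the spectral content lives entirely in the two cited results, I expect no genuine difficulty here. The only steps demanding care are bookkeeping ones: verifying that the single parameter window \eqref{eq:constant_all_requirements} simultaneously meets the slightly different hypotheses of the lemma and the proposition, and observing that the relevant ratio bounds are exact negations of one another at the common threshold $\eta\lambda_1(\mb{C}_x)$, so that the two induced dimension estimates cannot coexist.
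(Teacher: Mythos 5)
Your proof is correct and takes essentially the same approach as the paper: the paper's own two-line proof likewise combines Lemma~\ref{lemma:SameSdim} with Proposition~\ref{lemma:Intersection} to conclude $X^*_1\cap \hat{X}^c=\emptyset$, which is exactly the contrapositive form of your contradiction argument (incompatible bounds on $\lambda_{d+1}(\mb{C}_x)/\lambda_1(\mb{C}_x)$, hence incompatible estimated dimensions). Your explicit check that the parameter window \eqref{eq:constant_all_requirements} satisfies the hypotheses of both cited results is sound bookkeeping that the paper leaves implicit.
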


\begin{proof}
It follows from Lemma~\ref{lemma:SameSdim} and Proposition~\ref{lemma:Intersection} that $X^*_1\cap \hat{X}^c=\emptyset$. Therefore $X_1^*\subset \hat{X}_1$. Similarly, $X_2^*\subset \hat{X}_2$.
\end{proof}

\subsection{The Disconnectedness of $X_1^*$ and $X_2^*$}\label{sec:diffangle}
We show here that the graphs with nodes at $X^*_1$ and $X^*_2$ are disconnected. The idea is to show that the function $\mathbf{1}_{\dist_g(x_i,x_j)<\sigma_d}\mathbf{1}_{\theta_{ij}+\theta_{ji}<\sigma_a}$ (and thus the weight $\mb{W}$) is zero between two points in $\hat{X}_1\supset X^*_1$ and $\hat{X}_2\supset X^*_2$ for appropriate choice of constants. This and Proposition~\ref{lemma:Intersection} imply that the graphs associated with $X^*_1$ and $X^*_2$ are disconnected. We first establish a lower bound on the empirical geodesic angle in Lemma~\ref{lemma:DiffS} and then conclude that there is no direct connection between the sets $\hat{X}_1$ and $\hat{X}_2$ in Corollary~\ref{cor:disconnectedness}.

\begin{lemma}\label{lemma:DiffS}
There exist constants $C_3 >0$ and $C_4>0$ such that if
$x_1\in \hat{X}_1$, $x_2\in \hat{X}_2$,
\begin{equation}\label{equ:DiffS1}
\dist_g(x_1,x_2)<\sigma_d,
\end{equation}
\begin{equation}
\label{eq:sigma_d_low_bound}
\text{and } \ \sigma_d <{C_4}^{-\frac{1}{2}},
\end{equation}
then the angle between the estimated tangent subspace $T_{x_1}^{E}S_1$ and the line segment $l_{12}^{(1)}$, which connects the origin and $\mb{x}_2^{(1)}$ (the image of $x_2$ by $\log_{x_1}$) in $T_{x_1} M$ is bounded below as follows:
\begin{equation}\label{equ:DiffS2}
\angle(l_{12}^{(1)},T_{x_1}^{E}S_1)>\min(\sin^{-1}(\delta/2\sigma_d)-C_3\eta^{d/(d+2)}-C_3 r, \pi/6).
\end{equation}
\end{lemma}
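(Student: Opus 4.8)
The plan is to bound $\angle(l_{12}^{(1)},T_{x_1}^{E}S_1)$ from below by (A) first bounding the angle between $l_{12}^{(1)}$ and the \emph{true} tangent subspace $T_{x_1}S_1$, (B) then controlling the tilt $\theta_{\max}(T_{x_1}^{E}S_1,T_{x_1}S_1)$ between the estimated and true tangents (which forces us to re-use the machinery of Proposition~\ref{lemma:Intersection}), and (C) combining the two via the triangle inequality for the principal-angle metric. Throughout, I use that for $x_1\in\hat X_1$ the estimated tangent has dimension $d$ (this follows from the same covariance estimate used in Part B, namely that in the ``far'' regime $\dist_g(x_1,S_2)\ge\delta$ the ratio $\lambda_{d+1}(\mb C_{x_1})/\lambda_1(\mb C_{x_1})\le\eta$, the converse regime to Proposition~\ref{lemma:Intersection}), so $\theta_{\max}(T_{x_1}^{E}S_1,T_{x_1}S_1)$ is well defined.

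\textbf{Part A (angle to the true tangent).} Since normal coordinates are radial isometries, $\|\mb{x}_2^{(1)}\|=\dist_g(x_1,x_2)<\sigma_d$ by \eqref{equ:DiffS1}. Because $S_1$ is geodesic, $\Phi_{x_1}^{-1}(S_1)$ agrees with $T_{x_1}S_1$ inside the relevant ball, so the Euclidean nearest point $\mb{q}$ of $\mb{x}_2^{(1)}$ on $T_{x_1}S_1$ satisfies $q:=\Phi_{x_1}(\mb{q})\in S_1$ and $\|\mb{q}\|\le\|\mb{x}_2^{(1)}\|<\sigma_d$. The metric comparison \eqref{equ:chart} inside $B_{x_1}(\mb{0},\sigma_d)$ gives a multiplicative distortion $1-\OO(\sigma_d^2)$ between $\dist_E$ and $\dist_g$, whence, using $\dist_g(x_2,q)\ge\dist_g(x_2,S_1)\ge\delta$ (as $x_2\in\hat X_2$),
\[
\dist_E(\mb{x}_2^{(1)},T_{x_1}S_1)=\dist_E(\mb{x}_2^{(1)},\mb{q})\ge\dist_g(x_2,q)\bigl(1-\OO(\sigma_d^2)\bigr)\ge\delta\bigl(1-\OO(\sigma_d^2)\bigr).
\]
I choose $C_4$ so that $\sigma_d<C_4^{-1/2}$ forces $1-\OO(\sigma_d^2)\ge 1/2$; then $\dist_E(\mb{x}_2^{(1)},T_{x_1}S_1)\ge\delta/2$, and since $\sin\angle(l_{12}^{(1)},T_{x_1}S_1)=\dist_E(\mb{x}_2^{(1)},T_{x_1}S_1)/\|\mb{x}_2^{(1)}\|$ with $\|\mb{x}_2^{(1)}\|<\sigma_d$ we obtain $\angle(l_{12}^{(1)},T_{x_1}S_1)>\sin^{-1}(\delta/(2\sigma_d))$.

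\textbf{Part B (tilt of the estimated tangent).} I reuse the reduction of Proposition~\ref{lemma:Intersection}: if $\dist_g(x_1,S_2)\ge r$ the ball $B(x_1,r)$ contains no $S_2$ samples and the tilt is $\OO(r)$ exactly as in Lemma~\ref{lemma:SameSang}; otherwise $\dist_g(x_1,S_2)<r$, so by Lemma~\ref{lemma:intercoord} $x_1$ is within $\OO(r)$ of $S_1\cap S_2$ and I transfer to the chart $\Phi_z$ at the projection $z$ of $x_1$, where $S_1,S_2$ become subspaces $L_1,L_2$ and $\mb C_{x_1}$ is within $\OO(r^3)$ (after an orthogonal conjugation, cf.~\eqref{equ:lemma:Intersection2}) of $\E_{\mu_{ES}}\mb C_{H'}$. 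The crucial point is that $\dist_g(x_1,S_2)\ge\delta$ confines the $L_2$-portion of $H'$ to a $d$-ball of radius $\le\sqrt{r^2-\delta^2}=r\sqrt{C_2}\,\eta^{1/(d+2)}$ (by \eqref{eq:def_delta}), hence its measure, and thus its covariance contribution, is at most a fraction $\OO(\eta^{d/(d+2)})$ of the $L_1$-contribution. Treating this as a perturbation of the dominant term $\tfrac{r^2}{d+2}\mb P_{L_1}$ (spectral gap $\tfrac{r^2}{d+2}$) and applying the Davis--Kahan Theorem~\citep{1970SJNA....7....1D} together with the $\OO(r)$ errors from the $S_1$-part yields $\theta_{\max}(T_{x_1}^{E}S_1,T_{x_1}S_1)<C_3\eta^{d/(d+2)}+C_3 r$.

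\textbf{Part C (combination).} Using that the largest principal angle is a metric on $d$-subspaces and that $|\sin\angle(v,V)-\sin\angle(v,W)|\le\|\mb P_V-\mb P_W\|=\sin\theta_{\max}(V,W)$ (cf.~Lemma~15 of~\citet{LocalPCA}), the elevation angle obeys the triangle inequality, so
\[
\angle(l_{12}^{(1)},T_{x_1}^{E}S_1)\ge\angle(l_{12}^{(1)},T_{x_1}S_1)-\theta_{\max}(T_{x_1}^{E}S_1,T_{x_1}S_1)>\sin^{-1}\!\Bigl(\tfrac{\delta}{2\sigma_d}\Bigr)-C_3\eta^{\frac{d}{d+2}}-C_3 r.
\]
Taking the minimum with $\pi/6$ (consistent, since $\delta<r<\sigma_d$ gives $\delta/(2\sigma_d)<1/2$ and hence $\sin^{-1}(\delta/(2\sigma_d))<\pi/6$) yields \eqref{equ:DiffS2}. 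The main obstacle is Part~B: the Riemannian covariance perturbation estimate producing the $\eta^{d/(d+2)}$ term, since it demands transferring to a common tangent space $T_zM$, controlling the distortion of both $B(x_1,r)$ and $\dist_g$ versus $\dist_E$, and quantifying the $S_2$-cap contribution — precisely the difficulties already confronted in Proposition~\ref{lemma:Intersection}.
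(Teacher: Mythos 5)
Your Parts A and B are, in substance, the paper's own argument. Part A reproduces the first half of the paper's proof: radial isometry of normal coordinates, the comparison $|\dist_E-\dist_g|\leq C_4R^2\dist_E$ inside $B_{x_1}(\mb{0},\sigma_d)$ (the paper's \eqref{equ:proof:DiffS5:1}, which is exactly where $C_4$ comes from), and the lower bound $\dist_g(x_2,q)\geq \dist_g(x_2,S_1)\geq\delta$ for the image $q\in S_1$ of the Euclidean projection, yielding $\sin\angle(l_{12}^{(1)},T_{x_1}S_1)>\delta/(2\sigma_d)$ as in \eqref{equ:DiffS6} (you even shortcut the paper's detour through the Riemannian minimizer $\mb{x}_3$, harmlessly). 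Part B is the content of the paper's claim \eqref{equ:DiffS7}, proved in Appendix~\ref{proof:equ:DiffS7} by the very reduction you describe: transfer to the chart at the projection $z$ onto $S_1\cap S_2$ via \eqref{equ:lemma:Intersection2} and \eqref{equ:IntersectionAdapt}, control of the $L_2$-cap of radius $\sqrt{r^2-\delta^2}=r\sqrt{C_2}\,\eta^{1/(d+2)}$ (the paper's \eqref{eq:proof:equ:DiffS7:5}), and Davis--Kahan; your case split $\dist_g(x_1,S_2)\geq r$ versus $<r$ is an equivalent packaging of what the paper does uniformly. So far, same route, sketched at roughly the level of detail the paper itself delegates to \citet{LocalPCA}.

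The divergence, and the gap, is Part C. You deduce the angle inequality $\angle(l,T^E_{x_1}S_1)\geq\angle(l,T_{x_1}S_1)-\theta_{\max}(T^E_{x_1}S_1,T_{x_1}S_1)$ from the sine bound $|\sin\angle(l,V)-\sin\angle(l,W)|\leq\|\mb{P}_V-\mb{P}_W\|=\sin\theta_{\max}(V,W)$. That implication is false in general: from $\sin a\geq\sin b-\sin c$ one cannot conclude $a\geq b-c$ (take $b=\pi/2$, $c=\pi/4$: then $\sin b-\sin c\approx 0.293$, and $a=\sin^{-1}(0.293)\approx 0.30<\pi/4=b-c$). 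Converting sine differences into angle differences degenerates as angles approach $\pi/2$, and this is precisely why the paper's combination step \eqref{equ:DiffS8} takes the weaker form $\min\bigl(\angle(l,T_{x_1}S_1)-\tfrac{2\pi\sqrt{d}}{3}\sin\theta_{\max},\,\pi/6\bigr)$: Appendix~\ref{proof:equ:DiffS8} splits on whether the angles exceed $\pi/6$, uses concavity of $\sin$ and the Lipschitz constant of $\sin^{-1}$ on $[0,1/2]$, and invokes Lemma~3.2 of \citet{lerman.zhang.lp.recovery.14} --- which is where both the factor $\sqrt{d}$ and the $\pi/6$ truncation in the lemma's statement originate. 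That said, the inequality you assert is in fact true, just not by the argument you cite: $\angle(l,V)$ is the geodesic distance on the unit sphere of $T_{x_1}M$ from $l$ to the antipodally symmetric set $V\cap\Sb^{D-1}$, and for equidimensional subspaces $\theta_{\max}(V,W)=\max_{u\in V\cap\Sb^{D-1}}\min_{w\in W\cap\Sb^{D-1}}\angle(u,w)$, so the spherical triangle inequality gives $\angle(l,W)\leq\angle(l,V)+\theta_{\max}(V,W)$. If you replace your justification by this one-line argument (and convert the Part B bound on $\sin\theta_{\max}$ into a bound on $\theta_{\max}$ at the cost of a factor $\pi/2$), your proof closes, and is then actually cleaner than the paper's, avoiding the dimensional factor $\tfrac{2\pi\sqrt{d}}{3}$ altogether. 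As written, however, the justification is a non sequitur at exactly the point where the paper works hardest.
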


\begin{proof}
The proof develops various geometric estimates that eventually conclude~\eqref{equ:DiffS2}. Let
\[
\displaystyle \mb{x}_3=\argmin_{\mb{x}\in T_{x_1}S_1}\dist_g(\mb{x},\mb{x}_2^{(1)})\quad \text{and}\quad
\displaystyle \mb{x}_4=\argmin_{\mb{x}\in T_{x_1}S_1}\dist_E(\mb{x},\mb{x}_2^{(1)}),
\]
where $\dist_E$ is defined with respect to the normal coordinate chart in $T_x M$ (see Figure~\ref{fig:DiffS}).
\begin{figure*}[htb!]
\centering
\includegraphics[width=.50\textwidth]{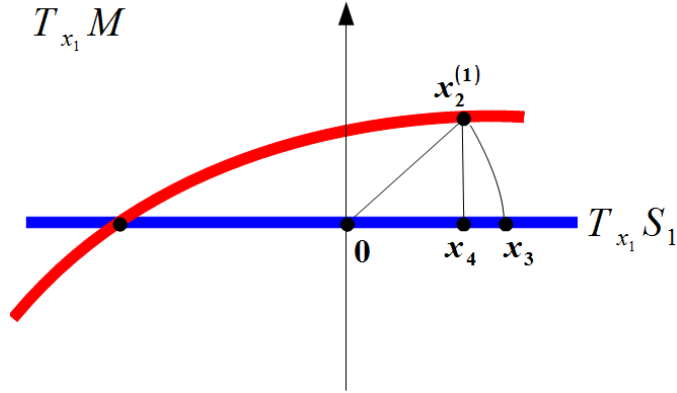}
\caption{The normal coordinate chart at $x_1$}
\label{fig:DiffS}
\end{figure*}

We note that by definition $\mb{x}_4$ is the projection of $\mb{x}_2^{(1)}$ onto $T_{x_1} S_1$ and thus
\begin{equation}\label{equ:pre_DiffS3}
\dist_E(\mb{x}_4,\mb{0})<\dist_g(\mb{x}_2^{(1)},\mb{0}).
\end{equation}
Combining~\eqref{equ:pre_DiffS3} with the fact that $\dist_E$ and $\dist_g$ are
the same on lines through the origin in $T_{x_1} M$ and then
applying~\eqref{equ:DiffS1}, we obtain that
\begin{equation}\label{equ:DiffS3}
\dist_g(\mb{x}_4,\mb{0})<\dist_g(\mb{x}_2^{(1)},\mb{0})<\sigma_d.
\end{equation}
Furthermore, combining the following two facts: $\mb{x}_3$ is a minimizer of
$\dist_g(\pmb{\cdot}, \mb{x}_2^{(1)})\in T_{x_1} S_1$ and
$x_2\in \hat{X}_2$, we obtain that
\begin{equation}\label{equ:DiffS4}
\delta\leq \dist_g(x_2,
\Phi_{x_1}(\mb{x}_3))=\dist_g(\mb{x}_2^{(1)},\mb{x}_3)<\dist_g(\mb{x}_2^{(1)},\mb{x}_4).
\end{equation}

We prove in Appendix~\ref{sec:DistInNeighbor} that there exists a constant $C_4>0$, which depends only on the Riemannian manifold $M$, such that
\begin{equation}\label{equ:proof:DiffS5:1}
\forall R>0, \ \mb{x},\mb{y}\in B_{x_1}(\mb{0},R), \quad |\dist_E(\mb{x},\mb{y})-\dist_g(\mb{x},\mb{y})|<C_4 R^2\dist_E(\mb{x},\mb{y}).
\end{equation}
Applying~\eqref{equ:proof:DiffS5:1} (with $R= \sigma_d$) first and~\eqref{equ:DiffS4} next we obtain that
\begin{align}
\label{equ:DiffS5}
\dist_E(\mb{x}_2^{(1)},\mb{x}_4) &>\dist_g(\mb{x}_2^{(1)},\mb{x}_4)-C_4\sigma_d^2\dist_E(\mb{x}_2^{(1)},\mb{x}_4) \nonumber\\
&>\delta-C_4\sigma_d^2\dist_E(\mb{x}_2^{(1)},\mb{x}_4)
\end{align}
and consequently
\begin{equation}
\label{eq:ima_show_tell}
\dist_E(\mb{x}_2^{(1)},\mb{x}_4)>\frac{\delta}{1+C_4\sigma_d^2}.
\end{equation}
It follows from~\eqref{eq:sigma_d_low_bound} and~\eqref{eq:ima_show_tell} that
\begin{equation}\label{equ:DiffS6}
\displaystyle\sin(\angle(l_{12}^{(1)},T_{x_1}S_1))=\frac{\dist_E(\mb{x}_2^{(1)},\mb{x}_4)}{\dist_E(\mb{x}_2^{(1)},\mb{0})}>\frac{\delta}{\sigma_d+C_4\sigma_d^3}>\delta/2\sigma_d.
\end{equation}

Our proof concludes from~\eqref{equ:DiffS6} and the following two claims:
\begin{equation}\label{equ:DiffS7}
\sin(\theta_{\max}(T_{x_1}^ES_1,T_{x_1}S_1))\leq C'_3\eta^{d/(d+2)}+C'_3r
\end{equation}
and
\begin{equation}\label{equ:DiffS8}
\angle(l_{12}^{(1)},T_{x_1}^ES_1) \geq \min(\angle(l_{12}^{(1)},T_{x_1}S_1) - \frac{2\pi\sqrt{d}}{3}\sin(\theta_{\max}(T_{x_1}^ES_1,T_{x_1}S_1)), \pi/6).
\end{equation}

Inequalities~\eqref{equ:DiffS7} and~\eqref{equ:DiffS8} are verified in Appendices~\ref{proof:equ:DiffS7} and~\ref{proof:equ:DiffS8} respectively, where we also carefully analyze how the constant $C'_3$ depends on the underlying Riemannian manifold (see~\eqref{eq:def_c3prime}). Combining~\eqref{equ:DiffS6}, \eqref{equ:DiffS7} and~\eqref{equ:DiffS8}, we conclude~\eqref{equ:DiffS2} by letting $\displaystyle C_3=\frac{2\pi\sqrt{d}}{3}C'_3$.

\end{proof}

The desired disconnectedness of $X^*_1$ and $X^*_2$ immediately follows from
Lemma~\ref{lemma:DiffS} in the following way:
\begin{corollary}\label{cor:disconnectedness}
The graphs with nodes at $X^*_1$ and $X^*_2$ respectively and weights in
$\mb{W}$ are disconnected if the angle threshold $\sigma_a$ is chosen such that
\begin{equation}\label{equ:constraint4}
\sigma_a<\min(\sin^{-1}(\delta/2\sigma_d)-C_3\eta^{d/(d+2)}-C_3r, \pi/6)
\end{equation}
and the distance threshold $\sigma_d$ satisfies~\eqref{eq:sigma_d_low_bound}.
\end{corollary}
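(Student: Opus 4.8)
The plan is to reduce the corollary to a pointwise statement about the affinity matrix: it suffices to prove that the weight in $\mb{W}$ between any $x_1\in X_1^*$ and any $x_2\in X_2^*$ is zero, for then no edge joins the two node sets and the associated graphs are disconnected.

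First I would record that the hypotheses of Lemma~\ref{lemma:DiffS} are available. By Corollary~\ref{cor:intersection} we have $X_1^*\subset\hat{X}_1$ and $X_2^*\subset\hat{X}_2$, so any $x_1\in X_1^*$ and $x_2\in X_2^*$ automatically satisfy $x_1\in\hat{X}_1$ and $x_2\in\hat{X}_2$. Moreover the standing assumption~\eqref{eq:sigma_d_low_bound} supplies the bound $\sigma_d<C_4^{-1/2}$ required by the lemma. The one remaining hypothesis, $\dist_g(x_1,x_2)<\sigma_d$, need not hold in general, so I would split into two cases.

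If $\dist_g(x_1,x_2)\ge\sigma_d$, the distance indicator $\mathbf{1}_{\dist_g(x_1,x_2)<\sigma_d}$ already vanishes, so the weight is zero. If instead $\dist_g(x_1,x_2)<\sigma_d$, then all hypotheses of Lemma~\ref{lemma:DiffS} hold. The key observation is that, by the definition of the empirical geodesic angle in Section~\ref{sec:directional}, $\theta_{x_1,x_2}$ is exactly the elevation angle $\angle(l_{12}^{(1)},T_{x_1}^E S_1)$ bounded below in~\eqref{equ:DiffS2}. Since geodesic angles are non-negative, $\theta_{x_1,x_2}+\theta_{x_2,x_1}\ge\theta_{x_1,x_2}$, so the lemma together with the hypothesis~\eqref{equ:constraint4} on $\sigma_a$ yields $\theta_{x_1,x_2}+\theta_{x_2,x_1}>\sigma_a$. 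Hence the angle indicator $\mathbf{1}_{(\theta_{x_1,x_2}+\theta_{x_2,x_1})<\sigma_a}$ vanishes, and again the weight is zero.

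Because the weight between $x_1$ and $x_2$ vanishes in both cases, there is no edge between $X_1^*$ and $X_2^*$, which is the asserted disconnectedness. I do not expect a genuine obstacle here: all the geometric content was already extracted in Lemma~\ref{lemma:DiffS}, and the corollary is essentially a bookkeeping step. The only points that need care are (i) identifying the directed empirical angle $\theta_{x_1,x_2}$ with the quantity $\angle(l_{12}^{(1)},T_{x_1}^E S_1)$ appearing in the lemma, (ii) using $X_i^*\subset\hat{X}_i$ from Corollary~\ref{cor:intersection} to license the lemma, and (iii) noting that a lower bound on the single directed angle $\theta_{x_1,x_2}$ already dominates the symmetric sum $\theta_{x_1,x_2}+\theta_{x_2,x_1}$ entering $\mb{W}$, so controlling only one of the two directed angles is enough.
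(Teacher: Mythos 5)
Your handling of direct edges is correct and coincides with what the paper does via Lemma~\ref{lemma:DiffS}: the appeal to Corollary~\ref{cor:intersection} to place $X_1^*,X_2^*$ inside $\hat{X}_1,\hat{X}_2$, the case split on the distance indicator, the identification of $\theta_{x_1,x_2}$ with $\angle(l_{12}^{(1)},T_{x_1}^E S_1)$, and the remark that a lower bound on one directed angle already dominates the sum $\theta_{x_1,x_2}+\theta_{x_2,x_1}$. The gap is in your opening reduction. The disconnectedness asserted here, and needed in Section~\ref{sec:MainNoiseFree} where spectral clustering is applied to the affinity matrix of the \emph{whole} dataset, is disconnectedness in the graph on all of $X$: a path from $X_1^*$ to $X_2^*$ may pass through data points belonging to neither set. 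The dangerous intermediaries are the points of $\hat{X}^c$, those lying within distance $\delta$ of the other submanifold, i.e., near the intersection $S_1\cap S_2$. For instance, if $z\in X_1\cap \hat{X}^c$ and $x_1\in X_1^*$ is nearby, the geodesic joining them runs along $S_1$, so both the distance test and the angle test can pass; without a further argument, the edge $x_1\sim z$ is genuinely present, and a chain of such edges through the intersection region can join $X_1^*$ to $X_2^*$. Showing $\mb{W}_{ij}=0$ only for $x_i\in X_1^*$, $x_j\in X_2^*$ does not preclude this.

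This is exactly why the TGCT affinity matrix carries the dimension indicator $\mathbf{1}_{\dim T_{x_i}^E S=\dim T_{x_j}^E S}$, which your argument never uses. The paper's proof combines Lemma~\ref{lemma:SameSdim} (points of $X^*$ have estimated local dimension exactly $d$) with Proposition~\ref{lemma:Intersection} (points of $\hat{X}^c$ satisfy $\lambda_{d+1}(\mb{C}_x)/\lambda_1(\mb{C}_x)>\eta$, hence have estimated dimension at least $d+1$) to conclude that no edge joins $X^*$ to $\hat{X}^c$; together with Lemma~\ref{lemma:DiffS} applied to all pairs in $\hat{X}_1\times\hat{X}_2$ (not only $X_1^*\times X_2^*$, which is what blocks paths wandering through $\hat{X}\setminus X^*$), this rules out indirect connections as well. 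So the geometric estimate you invoke is the right one, but the graph-theoretic step you dismiss as bookkeeping is precisely where the second structural ingredient of the proof, the comparison of estimated local dimensions, must enter; without it the corollary is too weak to support the conclusion of Theorem~\ref{theorem:all}.
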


\begin{proof}
  When $\sigma_a$ and $\sigma_d$ satisfy~\eqref{equ:constraint4}
  and~\eqref{eq:sigma_d_low_bound} respectively, Lemma~\ref{lemma:DiffS} implies
  that if $x_i\in \hat{X}_1$ and $x_j\in \hat{X}_2$, then
  $\mathbf{1}_{\dist_g(x_i,x_j)<\sigma_d}\mathbf{1}_{\theta_{ij}+\theta_{ji}<\sigma_a}=0$. In
  other words, there is no direct connection between $X^*_1$ and $X^*_2$ through
  $\hat{X}$. On the other hand, Lemma~\ref{lemma:SameSdim} and
  Proposition~\ref{lemma:Intersection} imply that $X^*_1$ and $X^*_2$ cannot be
  connected through points in $\hat{X}^c$ (since points in $X^*$ and $\hat{X}^c$
  have different local estimated dimensions).  We thus conclude that $X^*_1$ and
  $X^*_2$ are disconnected.
\end{proof}

\subsection{Conclusion of Theorem~\ref{theorem:all} for the Noiseless Multi-Geodesic Model}\label{sec:MainNoiseFree}

Due to Theorem~\ref{theorem:omega} we replace $X$ with $X \cap \Omega$ and obtain a statement for $X$ with probability at least $1-C_0\cdot Ne^{-Nr^{d+2}/C'_0}$.
Proposition~\ref{lemma:connect} and Corollary~\ref{cor:disconnectedness} imply that (with probability at least $1-C_0\cdot Ne^{-Nr^{d+2}/C'_0}$) $X^*$ has two connected components. They require that the parameters of TGCT satisfy~\eqref{equ:constraint123}, \eqref{eq:sigma_d_low_bound} and~\eqref{equ:constraint4}. Additional requirement is specified in~\eqref{eq:r_eta_cond} (in Proposition~\ref{lemma:Intersection} which implies Corollary~\ref{cor:disconnectedness}). We also note that the requirement $r<\eta<1$, which also appears in some of the auxiliary lemmata, follows from~\eqref{equ:constraint123}, \eqref{eq:r_eta_cond} and the fact that $C_1>1$ and $C_2>1$. These requirements, i.e., \eqref{equ:constraint123}, \eqref{eq:r_eta_cond}, \eqref{eq:sigma_d_low_bound} and~\eqref{equ:constraint4}, are sufficient and equivalent to~\eqref{eq:constant_all_requirements} when $\tau=0$.

Next, we explain why one can choose parameters that satisfy these requirements at the end of this section.
The only problem is to make sure that the last inequality of~\eqref{eq:constant_all_requirements} (equivalently, \eqref{equ:constraint4}) is satisfied.
Given a sufficiently small $r>0$ satisfying~\eqref{equ:constraint123}, we let $\sigma_d = \alpha r$ for some fixed $\alpha>0$.
The RHS of~\eqref{equ:constraint4} tends to $\min(\sin^{-1}(\frac{1}{2 \alpha}), \pi/6)$ as $r$ and $\eta$ approach zero. We note that the lower bound of $\sigma_a$ is $C_1 r$. Therefore, if $r$ and $\eta$ are sufficiently small so that $\min(\sin^{-1}({1}/{2 \alpha}), \pi/6)/2$ is lower than the RHS of~\eqref{equ:constraint4} and $C_1 r<\min(\sin^{-1}({1}/{2 \alpha}), \pi/6)/2$, then $\sigma_a$ can be chosen from the interval $[C_1r, \min(\sin^{-1}({1}/{2 \alpha}), \pi/6)/2]$.

In order to conclude the proof in this case we upper bound the expected portion of points $\# X^{*c} / \# X$, where $\# X^{*c}$ and $\# X$ denote he cardinality of $X^{*c}$ and $X$ respectively. For this purpose we use the set $X_{S_1\cap S_2} \supset X^{*c}$, which was defined in~\eqref{eq:s1caps2} in the following way:
\begin{align}\label{equ:measurebound1}
&\E\left(\frac{\# X^{*c}}{\# X}\right) \leq
\E\left(\frac{\# X_{S_1\cap S_2}}{\# X}\right)
= \frac{\mu_{gS}(\{x\in S_1 | \dist_g(x, S_2) <r\})}{\mu_{gS}(S)}+   \\
\nonumber
& \frac{\mu_{gS}(\{x\in S_2 | \dist_g(x, S_1) <r\})}{\mu_{gS}(S)}
\leq \frac{\mu_{gS}(\{x\in S_1\cup S_2| \dist_g(x, S_1\cap S_2) \leq C'r\})}{\mu_{gS}(S)}
\\
\nonumber
& \leq C_6 r^{d-\dim{S_1 \cap S_2}}.
\end{align}
The first equality of~\eqref{equ:measurebound1} follows from the fact that the dataset $X$ is i.i.d.~sampled from $\mu_{gS}$. The second inequality of~\eqref{equ:measurebound1} follows from Lemma~\ref{lemma:intercoord}. The last one follows from Theorem~1.3 in~\citet{Gray1982201},
where $C_6$ is a constant depending only on the geometry of the underlying generative model (e.g., the mean curvature and volume of $S_1\cap S_2$).

\subsection{Conclusion of Theorem~\ref{theorem:all} for the Noisy Multi-Geodesic Model}\label{sec:MainNoisy}

The above analysis also applies when the generative multi-geodesic model has noise level $\tau$ and $\tau$ is sufficiently smaller than $r$, that is,
\begin{equation}\label{equ:constraint5}
\tau<C_5 r,
\end{equation}
where $C_5\ll 1$. Indeed, in this case the estimates of tangent spaces and geodesics are sufficiently close to the estimates without noise. The only difference is that
the last bound in~\eqref{equ:measurebound1} has to be replaced with
$C_6 (r+\tau)^{d-\dim{S_1 \cap S_2}}$. This requires though a sufficiently small noise level (set by $C_5$).
Precise bound on $\tau$ is not trivial. Furthermore, the analysis employed here is not optimal. We can thus only claim in theory robustness to very small
levels of noise, whereas robustness to higher levels of noise is studied in the experiments.



\section{Conclusions}

Aiming at efficiently organizing data embedded in a non-Euclidean space
according to low-dimensional structures, the present paper studied multi-manifold modeling
in such spaces.
The paper solves this clustering (or modeling) problem by
proposing the novel GCT algorithm. GCT thoroughly exploits the geometry of the
data to build a similarity matrix that can effectively cluster the data (via
spectral clustering) even when the underlying submanifolds intersect or have
different dimensions. In particular, it introduces the novel idea in
non-Euclidean multi-manifold modeling of using directional information from
local tangent spaces to avoid neighboring points of clusters different than that
of the query point. Theoretical guarantees for successful clustering were
established for a variant of GCT, namely TGCT for the MGM setting,
which is a non-Euclidean
generalization of the widely-used framework of hybrid-linear modeling.
Unlike TGCT, GCT combined
directional information from local tangent spaces with sparse coding, which aims
to improve the clustering result by the use of more succinct representations of
the underlying low-dimensional structures and by increasing robustness to
corruption. Geodesic information is only used locally and thus in practice the algorithm can fit well in practice to MMM and not MGM.
Validated against state-of-the-art existing methods for the
non-Euclidean setting, GCT exhibited notable performance in clustering
accuracy. More specifically, the paper tested GCT on synthetic and real data of
deformed images clustering, action identification in video sequences, brain
fiber segmentation in medical imaging and dynamic texture clustering.

\section{Acknowledgments}

This work was supported by the Digital Technology Initiative, a seed grant
program of the Digital Technology Center, University of Minnesota, NSF awards
DMS-09-56072, DMS-14-18386 and Eager-13-43860, the University of Minnesota
Doctoral Dissertation Fellowship Program, and the Feinberg Foundation Visiting
Faculty Program Fellowship of the Weizmann Institute of Science.


\appendix

\section{Competing Clustering Algorithms and Their Implementation Details}\label{sec:clusterings}
Section~\ref{sec:clusterings_review} reviews the competing methods of GCT (in the Riemannian setting) and Section~\ref{sec:clusterings_param}
describes the implementation of both GCT and the competing algorithms, in particular, the choice of all parameters.
\subsection{Review of Competing Algorithms}
\label{sec:clusterings_review}
The first competing algorithm is sparse manifold clustering (SMC). This
algorithm was first suggested by~\citet{ElhamifarV_nips11} for clustering
submanifolds embedded in Euclidean spaces and later modified by~\citet{6619442} for clustering submanifolds of the sphere.
We adapt it to the current setting of clustering submanifolds of a Riemannian manifold and still refer to it as SMC.
Its basic idea is as
follows: For each data-point $x$, a local neighborhood is mapped to the tangent space $T_x M$
by the logarithm map and a sparse coding task
is solved in $T_x M$ to provide weights for the spectral-clustering similarity matrix.

The second competing algorithm is spectral clustering with Riemannian metric (SCR)
by~\citet{GOH_VIDAL08}. It applies spectral clustering with the weight matrix $W$ whose entries are $\mb{W}_{ij}=e^{-\dist_g^2(x_i,x_j)/(2\sigma^2)}$ (see page 4
of~\citet{GOH_VIDAL08}). That is, it replaces the usual Euclidean metric in standard spectral clustering with the Riemannian one.

The third competing scheme is the embedded K-means. It embeds the given dataset, which
lies on a Riemannian manifold, into a Euclidean spaces (as explained next) and then applies the classical K-means to the embedded dataset. In the experiments, Grassmannian manifolds are embedded by a well-known isometric
embedding into Euclidean space~\citep{Basri_nearest_subspace11}; the manifolds of symmetric $n \times n$ PD matrices are embedded by vectorizing their elements into elements of $\reals^{\binom{n+1}{2}}$; and data in the sphere $\mathbb{S}^D$ is already embedded in $\reals^{D+1}$.

\subsection{Implementation Details for All Algorithms}
\label{sec:clusterings_param}
 GCT follows the scheme of
Algorithm~\ref{alg:experiment}. For all algorithms, the number $K$ of clusters was known in all experiments
The input parameters of GCT
are set as follows: The
neighborhood radius $r$ at a point $x$ is chosen to be the average distance of
$x$ to its $n$th nearest point over all $x$, where $n\in \{15,16,\ldots,30\}$;
the distance and angle thresholds $\sigma_d$ and $\sigma_a$ are set to $1$ in
all experiments (we did not notice a big difference of the results when their
values are changed). The dimension of the local tangent space is determined by
the largest gap of eigenvalues of each local covariance matrix
(more precisely, it is the number of eigenvalues until this gap).

Since there are no online available codes for SMC, SCR and EKM, we wrote our own implementations
and will post them (as well as our implementation of GCT) on the supplemental webpage when the paper is accepted for publication.
The spectral clustering code in GCT, SMC and SCR, as well as the $K$-means code in EKM are taken from the implementations of~\citet{spectral_code}.
To make a faithful comparison, the input parameter $r$ of SMC is the same as GCT (in particular, we use the radius of neighborhood and not the number of neighbors).
SMC also implicitly sets $\sigma_d=1$. There are no other parameters for SMC.
We remark that~\citet{ElhamifarV_nips11} formed the weight matrix $\mb{W}$ as follows: $\mb{W}_{ij}=|\mb{S}_{ij}|+|\mb{S}_{ji}|$, where $|\mb{S}_{ij}|$ and $|\mb{S}_{ji}|$ are the sparse coefficients.
However, this weight was unstable in some experiments and above a certain level of noise SMC often collapsed in some of the random repetition of the experiments. In such cases, we used instead (for all repetitive experiments for the same data set) the weights $\mb{W}_{ij}=\exp{(|\mb{S}_{ij}|+|\mb{S}_{ji}|)}$
suggested in~\citet{6619442} (which are similar to the ones of GCT).
In the case of no collapse with the former weights, we tried both weights and noticed that the weights $\mb{W}_{ij}=|\mb{S}_{ij}|+|\mb{S}_{ji}|$ always yielded more accurate results for SMC; we thus used them then even though they can give an advantage over GCT, which uses exponential weights. 
Overall, the weight
$\mb{W}_{ij}=|\mb{S}_{ij}|+|\mb{S}_{ji}|$ was used in the synthetic datasets II-VI of Section~\ref{subsec:synthetic}.
The exponential weight was used in the rest of the experiments, that is, in synthetic dataset I and in the real or stylized applications.
It was also used for dataset VI in Figure~\ref{fig:robust_sph} under noise levels mostly higher than the $0.025$ noise level used in Section~\ref{subsec:synthetic}. The collapse
phenomenon is evident in Figure~\ref{fig:robust_sph} for noise levels above $0.05$.

The SCR algorithm has only one parameter $\sigma_d$ which is set to 1 (similarly to the analogous parameter of GCT). EKM has no input parameters.

\section{Computation of Logarithm Maps and Distances}\label{sec:logarithmmaps}

We discuss the complexity of computing logarithm maps for
Grassmannians, symmetric PD matrices and spheres. We remark though that it is possible to compute the logarithm maps
for data sampled from more general Riemannian manifolds and without knowledge of the manifold, but at a significantly slower rate~\citep{MemoliS05}.
We also show that once the logarithm map is computed, then in all these cases the computation of the geodesic distances is of lower or equal order.

A fast way to compute the logarithm map of the Grassmannian $\G(p,\ell)$ (whose dimension is $D=\ell(p-\ell)$) is
provided in \citet{Gallivan03efficientalgorithms}. It requires a $p\times \ell$
matrix $L$, with orthogonal columns, and a $p \times p$ orthonormal matrix $R$
for each subspace, where the subspace is spanned by the columns of $L$, with $L$
comprising the first $k$ columns of $R$. Given two pairs ($L_1, R_1$) and
($L_2$,$R_2$) for two subspaces, one needs to compute $\log_{L_1} (L_2)$.
This computation, which is clarified in~\citet{Gallivan03efficientalgorithms}, includes the singular value decomposition of $L_1^TL_2$ and
$R_1^TL_2$. In total, the complexity is $\OO(p^2 \ell)$, or equivalently, $\OO((D/\ell+\ell)^2 \ell)$ (since $D=\ell(p-\ell)$).

For the set of $p \times p$ symmetric PD matrices (whose dimension is $D=p(p+1)/2$), \citet{DBLP:conf/icml/HoXV13} computes the logarithm $\log_{M_1}(M_2)$ of any
such matrices $M_1$ and $M_2$ by first finding the Cholesky
decomposition $M_1=GG^T$ and then computing $\log_{M_1}(M_2)=G\log(GM_2G)G$,
where the latter $\log$ is the matrix logarithm. The complexities of all
major operations (i.e., Cholesky decomposition, the matrix logarithm and the matrix
multiplication) are $\OO(p^3)$. Therefore, the total complexity is also of order $\OO(p^3)$, or equivalently, $\OO(D^{1.5})$
(since the dimension of the set of symmetric PD matrices is $D=p(p+1)/2$).

The formula for finding the logarithm map on $\Sb^D$ is (see~\citet{6619442})
\[
\displaystyle \log_{x_i} (x_j) = \frac{x_j - (x_i^T x_j) x_i}{\sqrt{1-
    (x_i^Tx_j)^2}} \cos^{-1}(x_i^Tx_j),
\]
where $x_i^Tx_j$ is the (Euclidean) dot-vector product. Since it involves inner products and basic operations (also coordinatewise), it takes $\OO(D)$ operations to compute it.

For $x_1,x_2\in M$, $\dist_g(x_1,
x_2)=\|\log_{x_1}(x_2)\|_2$. Once we have the image $\log_{x_1}(x_2)$ (which is
a vector in the tangent space), the Riemannian distance is computed as the
Euclidean norm of the image vector, which involves a computation of order $\OO(D)$.
Since the algorithm already computes the logarithm maps, the additional cost for computing the geodesic distances are of lower order than the logarithm maps in all 3 cases.

\section{Computational complexity of GCT and TGCT}\label{sec:complexity.GCT}

The computational complexity of GCT is examined per data-point
$x_i$. It involves the computation of Riemannian distances and the logarithm map, which depends on the Riemannian manifold $M$ (see estimates in Section~\ref{sec:logarithmmaps}). The complexity of
computing the Riemannian distance between $x_i$ and $x_j$ and the logarithm map for $x_j$ w.r.t.~$x_i$ are denoted by CR and CL respectively (their computational complexity
for the cases of the sphere, Grassmannian and PD matrices were discussed in Appendix~\ref{sec:logarithmmaps}).
A major part of GCT occurs in the $r$-neighborhood of $x_i$ (WLOG), where $r$ was defined as the average distance to the $30$th nearest point from the associated data-point.
To facilitate the analysis of computational complexity, we use instead of $r$ the parameter $k$ of $k$-nearest-neighbors ($k$-NN) around $x_i$. Due to the choice of $r$, we assume that $k \sim 30$.

The complexity for computing the $k$-NN of $x_i$ is
$\OO(N\cdot\text{CR}+k\log(N))$, where $\OO(N\cdot\text{CR})$ refers to the
complexity of computing $N-1$ distances, and $\OO(k\log(N))$ refers to the effort of
identifying the $k$ smallest ones. The second step of Algorithm~\ref{alg:experiment} is to solve the sparse
optimization task in \eqref{equ:l1minimization}. Notice that due to
$\|\cdot\|_2$, only the inner products of data-points are necessary to form the
loss function in \eqref{equ:l1minimization}, which entails a complexity of order
$\OO(D)$. Given that only $k$-NN are involved in \eqref{equ:l1minimization} and
that their inner products are required to form the loss,
\eqref{equ:l1minimization} is a small scale convex optimization task that can be
solved efficiently by any off-the-shelf solver such as the popular alternating
direction method of multipliers \citep{glowinski.marrocco.75, gabay.mercier.76}
or the Douglas-Rachford algorithm \citep{Bauschke.Combettes.book}. The third
step of Algorithm~\ref{alg:experiment} is to find the top eigenvectors of the sample covariance matrix defined by
the $k$ neighbors of $x_i$. As shown in Section~\ref{sec:algebraic_trick} below the complexity of this step is
$\OO(D+k^3)$. Finally, to compute geodesic angles, $\OO(N\cdot\text{CL}+ND)$
operations are necessary. Considering all $N$ data-points, the total complexity
for the main loop of GCT is $\OO(N^2(\text{CR} +
\text{CL}+D)+kN\log(N)+ND+Nk^3)$.
After the main loop, spectral clustering is invoked on the $N\times N$ affinity
matrix $\mb{W}$. The main computational burden is to identify $K$ eigenvectors
of an $N\times N$ matrix, which entails complexity of order $\OO(KN^2)$ ($K$ is
the number of clusters). In summary, the complexity of GCT is $\OO(N^2(\text{CR}
+ \text{CL}+ D+K)+kN\log(N)+ND+Nk^3)$.

Note that in TGCT, the weights of
non-neighboring points are set equal to zero, and geodesic angles are computed
only for neighboring points, reducing thus the complexity of this step to
$\OO(N)$. Moreover, the affinity matrix is sparse in TGCT, effecting thus a
potential decrease in the complexity of spectral clustering to the order of6
$\OO(N\log N)$~\citep{Knyazev01, Kushnir+10}.
Therefore, TGCT's complexity becomes $\OO(N^2
\text{CR}+(k+1)N\log(N)+kN(\text{CL}+D)+Nk^3)$. The only step that contributes
to $N^2$ in TGCT comes from $k$-NN. This complexity can be reduced by approximate nearest search.
For example, for both the Sphere and the Grassmannian, \citet{lsh_ans_iccv13} established an
$\OO(N^{\rho})$ algorithm for approximate nearest neighbor search, where $\rho>0$ is a sufficiently small parameter.
Therefore the total complexity of TGCT for these special cases can be of order
$\OO(N^{1+\rho} \text{CR}+(k+1)N\log(N)+kN(\text{CL}+D)+Nk^3)$ (this includes also the preprocessing for the approximate nearest neighbors algorithm).

\subsection{An Algebraic Trick for Fast Computation of the Tangent Subspace}
\label{sec:algebraic_trick}
Consider the $D \times k$ data matrix
$\mb{X}$ at a specific neighborhood with $k$ points. We
need to identify a few principal eigenvectors of the $D \times D$ covariance
matrix $\mb{XX}^T$. One can avoid such a costly direct computation (when $D$ is large)
by leveraging the following elementary facts from linear algebra: (i) If $(\lambda,\mb{v})$ is
an eigenvalue-eigenvector pair of $\mb{X}^T\mb{X}$, then $(\lambda, \mb{Xv})$ is
an eigenvalue-eigenvector pair of $\mb{XX}^T$, and (ii)
$\text{rank}(\mb{X}^T\mb{X}) = \text{rank}(\mb{XX}^T)$. These facts suggest that
the spectra of $\mb{X}^T\mb{X}$ and $\mb{XX}^T$ coincide, and thus it is
sufficient to compute the eigendecomposition of the much smaller $k\times k$
matrix $\mb{X}^T\mb{X}$, with complexity $\OO(k^3)$, which renders the overall
cost of eigendecomposition equal to $\OO(D+k^3)$, including, for example, the
cost of computing $\mb{Xv}$.

\section{Supplementary Details for the Proof of Theorem~\ref{theorem:all}}

\subsection{Proof of Lemma~\ref{lemma:intercoord}}\label{sec:lemma:intercoord:proof}
Suppose on the contrary that such a constant does not exist. Then there is a sequence $\{x_n\}_{n=1}^{\infty}\subset S_1\cup S_2$ such that
\begin{equation}\label{equ:intercoord}
\dist_g(x_n,S_1\cap S_2)\geq n\max\{\dist_g(x,S_1), \dist_g(x,S_2)\}.
\end{equation}
By picking a subsequence if necessary, assume WLOG that $\{x_n\}_{n=1}^{\infty}\subset
S_1$. Since $S_1$ is compact, there is always a
convergent subsequence. Therefore, one may assume that $\{x_n\}_{n=1}^{\infty}\subset
S_1$ is also convergent. We show that it converges to a point $z\in S_1\cap
S_2$.

Since $S_1\cup S_2$ and $S_1\cap S_2$ are compact, $\dist_g(x_n,S_1\cap S_2)$ is
bounded. Equation~\eqref{equ:intercoord} implies that
$\dist_g(x_n,S_2)\rightarrow 0$ as $n$ approaches infinity. Suppose
$\{x_n\}_{n=1}^{\infty}$ converges to a point $y\notin S_1\cap S_2$. Then
$\dist_g(x_n,S_2)\rightarrow \dist_g(y,S_2)>0$ since $y\notin S_2$. This is a
contradiction.

Now that $\{x_n\}_{n=1}^{\infty}$ converges to $z\in S_1\cap S_2$, one may
assume $\{x_n\}_{n=1}^{\infty}$ is in the normal coordinate chart $\Phi_z$ of
$B(z,r)$ for some fixed $r>0$. Denote $\mb{y}_n=\Phi_z^{-1}(x_n)$,
$L_1=\Phi_z^{-1}(S_1)$ and $L_2=\Phi_z^{-1}(S_2)$. Since both $S_1$ and $S_2$
are geodesic submanifolds, $L_1$ and $L_2$ are two subspaces in $T_z M$. The
sequence $\{\mb{y}_n\}_{n=1}^{\infty}\subset L_1$ approaches the
origin. Lemma~17 of~\citet{LocalPCA} states that
\[
\displaystyle \dist_E(\mb{y}_n, L_1\cap L_2) \leq
\frac{\dist_E(\mb{y}_n,L_2)}{\sin \theta_{\min}(L_1,L_2)},
\]
where $\theta_{\min}(L_1,L_2)$ is the minimal nonzero principal angle between
$L_1$ and $L_2$. Let $H$ be a subset of $B_z(\mb{0},r)$ and arbitrarily fix a
point $\mb{u}\in H$. It follows from~\eqref{equ:proof:DiffS5:1} (applied with
$R=\OO(r)$) that
\[
\dist_E(\mb{y}_n,\mb{u})(1-\OO(r^2))< \dist_g(\mb{y}_n,\mb{u}) <\dist_E(\mb{y}_n,\mb{u})(1+\OO(r^2)).
\]
Since the term $\OO(r^2)$ depends only on the metric $g$, not on $\mb{y}_n$ or
$\mb{u}$, it is easy to see that
\begin{equation}\label{equ:intercoord:proof1}
\dist_E(\mb{y}_n,H)(1-\OO(r^2))< \dist_g(\mb{y}_n,H) <\dist_E(\mb{y}_n,H)(1+\OO(r^2)).
\end{equation}
If we let $H=L_1\cap L_2$ then~\eqref{equ:intercoord:proof1} implies that
\[
\displaystyle \dist_g(\mb{y}_n, L_1\cap L_2) \leq
\frac{(1+\OO(r^2))\dist_g(\mb{y}_n,L_2)}{(1-\OO(r^2))\sin
  \theta_{\min}(L_1,L_2)}.
\]
This is equivalent to
\begin{equation*}
\begin{aligned}
\displaystyle \dist_g(x_n, S_1\cap S_2) &\leq
\frac{(1+\OO(r^2))\dist_g(x_n,S_2)}{(1-\OO(r^2))\sin \theta_{\min}(L_1,L_2)} <
\frac{2}{\sin \theta_0}\dist_g(x_n,S_2)
\end{aligned}
\end{equation*}
for a fixed small $r$. This contradicts~\eqref{equ:intercoord}.

\subsection{Proof
  of~\eqref{equ:lemma:Intersection2}}\label{sec:lemma:Intersection2:proof}

The measures $\mu_{x_0}$ and $\mu_z$ are used to denote the induced measures on
$\Phi_{x_0}^{-1}(B(x_0,r)\cap (S_1\cup S_2))$ and $\Phi_z^{-1}(B(x_0,r)\cap
(S_1\cup S_2))$ by $\mu_{gS_1}+\mu_{gS_2}$. Let $H=\Phi_{x_0}^{-1}(B(x_0,r)\cap
(S_1\cup S_2))$ and $\phi_{x_0}=\Phi_z^{-1}\circ \Phi_{x_0}$ be the transition
map. Note that
\begin{equation}\label{equ:intercovar}
\begin{aligned}
\displaystyle &\E_{\mu_{gS}}\mb{C}_{x_0}^z =\E_{\mu_z}((\mb{y}-\E_{\mu_z}\mb{y})\cdot (\mb{y}-\E_{\mu_z}\mb{y})^T)\\
&=
\frac{1}{\mu_z(\phi_{x_0}(H))^3}\int_{\mb{y}\in \phi_{x_0}(H)}
\left( \int_{\mb{u}\in \phi_{x_0}(H)} (\mb{y}-\mb{u})\mu_z(d\mb{u})\cdot \int_{\mb{u}\in \phi_{x_0}(H)} (\mb{y}-\mb{u})^T\mu_z(d\mb{u}) \right) \mu_z(d\mb{y}). \\
\end{aligned}
\end{equation}
Let $\mb{y}=\phi_{x_0}(\mb{x})$ and $\mb{u}=\phi_{x_0}(\mb{v})$. We note that
$\mb{x},\mb{v}\in B(\mb{0},r)$ and $\mb{y},\mb{u}\in B(\mb{0},(C'+1)r)$.
It follows from the triangle inequality, double application
of~\eqref{equ:proof:DiffS5:1} (first with $R=(C'+1)r$ and next with $R=r$), the
elementary bound
$\dist_E(\mb{r},\mb{s})  \leq 2 \diam(M)$, where $\mb{r}, \mb{s}$ are images by the logarithm map of points in $M$ and $\diam(M)$ is the diameter of $M$ and the identity $l_g(\mb{y},\mb{u})=l_g(\mb{x},\mb{v})$ (which holds since $\phi_{x_0}$ preserves the Riemannian distance) that
\begin{align}\label{equ:intercovar1}
&|\|\mb{y}-\mb{u}\|_2 - \|\mb{x}-\mb{v}\|_2| = |\|\mb{y}-\mb{u}\|_2 - l_g(\mb{y},\mb{u}) + l_g(\mb{y},\mb{u}) - \|\mb{x}-\mb{v}\|_2| \\
\nonumber
&\leq |\|\mb{y}-\mb{u}\|_2 - l_g(\mb{y},\mb{u})\| + \|l_g(\mb{x},\mb{v}) - \|\mb{x}-\mb{v}\|_2| \leq 2 C_4\diam(M)[(C'+1)^2+1]r^2.
\end{align}
Applying Taylor's expansion to $\mb{y}=\phi_{x_0}(\mb{x})$, and using the fact that $\|\mb{x}\|_2 \leq r$, we note that
\begin{equation}
\label{eq:tyler_expansion}
\|\mb{y}- \mb{b}_{x_0}-\mb{A}_{x_0}\mb{x}\|_2 \leq C''_S r^2,
\end{equation}
where $\mb{b}_{x_0}$ and $\mb{A}_{x_0}$ depend only on $x_0$ and $C''_S$ is a constant depending on the Riemannian metric $g$.
Applying the triangle inequality, \eqref{equ:intercovar1} and \eqref{eq:tyler_expansion} (first with $\mb{y}=\phi_{x_0}(\mb{x})$ and next with $\mb{u}=\phi_{x_0}(\mb{v})$
instead of $\mb{y}$) we conclude that for all $\mb{x},\mb{v}\in B_{x_0}(\mb{0},r)$
\begin{align}\label{equ:intercovar2}
&| \|\mb{A}_{x_0}(\mb{x}-\mb{v})\|_2- \|\mb{x}-\mb{v}\|_2 |\\
\nonumber
&\leq |\|\mb{y}-\mb{u}\|_2 - \|\mb{x}-\mb{v}\|_2|+ \|\mb{y}- \mb{b}_{x_0}-\mb{A}_{x_0}\mb{x}\|_2 + \|\mb{u}- \mb{b}_{x_0}-\mb{A}_{x_0}\mb{v}\|_2 \\
\nonumber
&\leq [2 C_4\diam(M)((C'+1)^2+1)+2C''_S] r^2.
\end{align}
In particular, suppose $\|\mb{x}-\mb{v}\|_2=r$, then~\eqref{equ:intercovar2} implies that
for any unit-length vectors $\mb{w} \in \reals^D$ ($\reals^D$ is identified with $T_{x_0}$)
\begin{equation}\label{equ:intercovar4}
| \|\mb{A}_{x_0} \mb{w} \|_2 -1 | \leq [2 C_4\diam(M)((C'+1)^2+1)+2C''_S] r.
\end{equation}

We prove below in Appendix~\ref{lemma:orthogonal} that there exists an orthogonal matrix $\mb{R}_{x_0}$  such that
\begin{equation}\label{equ:intercovar3}
\mb{A}_{x_0} = \mb{R}_{x_0}+\OO(r).
\end{equation}
This leads to
\[
\mb{y}= \mb{b}_{x_0}+\mb{R}_{x_0}\mb{x}+ \OO(r^2) \ \text{ and } \ \mb{u}= \mb{b}_{x_0}+\mb{R}_{x_0}\mb{v}+ \OO(r^2).
\]
Consequently,
\begin{equation}
\label{eq:change_variables}
\mb{y} -\mb{u}= \mb{R}_{x_0}(\mb{x}-\mb{v}) + \OO(r^2).
\end{equation}
We also note that since $\mu_z$ and $\mu_{x_0}$ are induced from $\mu$, then
\begin{equation}
\label{eq:measure_preserve}
\mu_z(\phi_{x_0}(H))=\mu_{x_0}(H)
\end{equation}
At last, \eqref{equ:lemma:Intersection2} is concluded by applying~\eqref{equ:intercovar} (first with $\mb{y}$ and $\mb{u}$ and next with $\mb{x}$ and $\mb{v}$ while using appropriate change of variables), \eqref{eq:change_variables} and~\eqref{eq:measure_preserve}.

\subsubsection{Proof of~\eqref{equ:intercovar3}}\label{lemma:orthogonal}

We show that if $\mb{A}$ is an $D \times D$ matrix such that $| \|\mb{A} \mb{w}
\|_2 -1 | \leq C r$ for all unit-length vectors $\mb{w}\in \reals^D$ and a fixed
constant $C>0$, then there exists an orthogonal matrix $\mb{R}$ such that
$\mb{A} = \mb{R}+ \OO(r)$. In other words, the $ij$th entries of $\mb{A}$ and
$\mb{R}$ satisfy
\begin{equation}
\label{eq:A_ij_R}
| \mb{A}_{ij} -\mb{R}_{ij} | \leq f(C,D) r
\end{equation}
for a bounded function $f$ (we only show below that the RHS of~\eqref{eq:A_ij_R} is bounded by a constant times $r$, but it is not hard to see that this constant depends on $C$ and $D$; this dependence is used later in~\eqref{eq:use_f} in order to provide a clearer idea of the constant $C'''_S$).

By performing Gram-Schmidt orthogonalization on rows, the matrix $\mb{A}$ can be written as a product of an upper triangular matrix $\mb{U}$ and an orthogonal matrix $\mb{R}$ (this is the $\mb{RQ}$ decomposition of $\mb{A}$, but with $\mb{U}$ and $\mb{R}$ used instead of $\mb{R}$ and $\mb{Q}$ respectively). Since $\mb{R}$ preserves the length of vectors, the condition on $\mb{A}$ becomes
\begin{equation}\label{equ:u}
| \|\mb{U} \mb{w} \|_2 -1 | \leq C r,
\end{equation}
for all unit-length vectors $\mb{w}$. It is enough to show that up to a change of sign of the rows of $\mb{R}$: $\mb{U}=\mb{I}+\OO(r)$. This is proved by induction on $D$.

If $D=1$, then $\mb{U}$ is a $1 \times 1$ matrix. Let $\mb{w}= 1$. In this case~\eqref{equ:u} implies that $\mb{U}=\pm1+\OO( r)$.
By possible change of sign of $\mb{R}$ we conclude that $\mb{U}= 1+\OO( r)$.

We assume that the claim is true for $D=k-1$. Let $\mb{U}$ be a $k \times k$ upper rectangular matrix and express it as follows:
\[
\mb{U} = \left( \begin{array}{cc}
\mb{V}_{k-1\times k-1} & \mb{x}_{k-1\times 1} \\
\mb{0}_{1\times k-1} & \mb{U}_{kk}  \end{array} \right),
\]
where $\mb{V}$ is $(k-1) \times (k-1)$ upper triangular matrix, $\mb{0}_{1\times k-1}$ is a row vector of $k-1$ zeros, $\mb{x}_{k-1\times 1}$ is a column vector in $\reals^{k-1}$  and
$\mb{U}_{kk} \in \reals$. We assume that $\mb{U}$ satisfies~\eqref{equ:u} and show that $\mb{U}=\mb{I}+\OO(r)$ by basic estimates with different choices of $\mb{w} \in \reals^k$ used in~\eqref{equ:u}.

Assume first that $\mb{w}=[\mb{v}^T, 0]^T$, where $\mb{v} \in \reals^{k-1}$ is of unit-length. Then \eqref{equ:u} implies that
\begin{equation}
\label{eq:prop_V}
| \|\mb{V} \mb{v} \|_2 -1 | \leq C r.
\end{equation}
The induction hypothesis and~\eqref{eq:prop_V} results in the estimate
\begin{equation}
\label{eq:V_I}
\mb{V} =\mb{I} +\OO(r)
\end{equation}
up to a change of sign in the first $k-1$ rows of $\mb{R}$ (the rotation associated with $\mb{U}$).

Next, we show that  $\mb{U}_{kk} = 1+\OO(r)$.  We first let $\mb{w}=[\mb{0}_{1\times k-1}, 1]^T$; in this case \eqref{equ:u} implies that
\begin{equation}\label{equ:u0}
 \sqrt{\|\mb{x}\|_2^2+\mb{U}_{kk}^2}-1 = \OO(r),
\end{equation}
which leads to
\begin{equation}\label{equ:u1}
\|\mb{x}\|_2^2,\, |\mb{U}_{kk}|^2 \leq 1+\OO(r).
\end{equation}
We next let $\mb{w}=[-\mb{x}^T, 1]^T/\| [-\mb{x}^T, 1]^T \|_2$. Then~\eqref{equ:u}, with $\|\mb{x}\|_2^2$ being bounded by $1+\OO(r)$, implies that
\begin{equation}\label{equ:u2}
\sqrt{\mb{U}_{kk}^2+\OO(r^2)} - \| [-\mb{x}^T, 1]^T \|_2   = \OO(r).
\end{equation}
Moving the second term of the LHS of~\eqref{equ:u2} to the RHS of~\eqref{equ:u2} and squaring both sides result in
\begin{equation}\label{equ:u3}
\mb{U}_{kk}^2 \geq \| [-\mb{x}^T, 1]^T \|_2^2 - \OO(r) \geq 1-O(r).
\end{equation}
The combination of~\eqref{equ:u1} and~\eqref{equ:u3} implies that
\begin{equation}\label{equ:u4}
|\mb{U}_{kk}^2-1| \leq \OO(r).
\end{equation}
Since $\mb{U}_{kk} \geq 0$ WLOG (otherwise one can change the sign of the $k$th row of $\mb{R}$)
and since $|\mb{U}_{kk}^2-1|$ is a Lipschitz function on $\mb{U}_{kk}$, \eqref{equ:u4} implies that
\begin{equation}\label{equ:u5}
|\mb{U}_{kk}-1| \leq \OO(r).
\end{equation}
In other words, $\mb{U}_{kk}=1+\OO(r)$.

At last, we show that $\mb{x}_i = \OO(r)$. Moving the second term of the LHS of~\eqref{equ:u0} to the RHS of~\eqref{equ:u0} and squaring both sides
result in
\begin{equation}
\label{eq:x_and U}
\|{x}\|_2^2+\mb{U}_{kk}^2 =1+ \OO(r).
\end{equation}
It follows from~\eqref{equ:u5} and~\eqref{eq:x_and U} that $ \|\mb{x}\|_2^2 = \OO(r)$, which implies that
\begin{equation}
\label{eq:x_sqrt_r}
\mb{x}_i = \OO(\sqrt{r}).
\end{equation}
Denote the standard basis of $\R^k$ by $\{\mb{e}_i\}_{i=1}^k$, that is,
$\mb{e}_1=[1,0,\ldots, 0]^T,\ldots, \mb{e}_k=[0,\ldots,0,1]^T$. Let $\mb{w}_i=\frac{\sqrt{2}}{2}\mb{e}_{i}+\frac{\sqrt{2}}{2}\mb{e}_k$. Plugging $\mb{w}_i$ into \eqref{equ:u} and further simplification result in
\begin{equation}\label{equ:u6}
\left[ \frac{1}{2}(\mb{x}_1^2+\ldots+\mb{x}_{k-1}^2)+1+\frac{1}{2}\mb{x}_i+\OO(r) \right]^{1/2}=1+\OO(r).
\end{equation}
Further application of~\eqref{eq:x_sqrt_r} into~\eqref{equ:u6} yields the equality
\begin{equation}\label{equ:u7}
\left[ 1+\frac{1}{2}\mb{x}_i+\OO(r) \right]^{1/2}=1+\OO(r).
\end{equation}
Finally, squaring both sides of~\eqref{equ:u7} and simplifying concludes the desired estimate
\begin{equation}
\label{eq:x_r}
\mb{x}_i =\OO(r).
\end{equation}
Equations~\eqref{eq:V_I}, \eqref{equ:u5} and~\eqref{eq:x_r} imply that $\mb{U}=\mb{I}+\OO(r)$ (up to a change of signs of the rows of $\mb{R}$) and thus conclude the induction and consequently~\eqref{equ:intercovar3}.

\subsection{Proof of~\eqref{equ:interarea}}\label{sec:interarea}

Let $H_1=B_I(\Phi_z^{-1}(x_0),r-\OO(r^2))\cap \Phi_z^{-1}(S_1 \cup S_2)$ and $H_2=B_I(\Phi_z^{-1}(x_0),r+\OO(r^2))\cap \Phi_z^{-1}(S_1 \cup S_2)$. It follows from~\eqref{equ:proof:DiffS5:1} (applied with $R= \OO(r)$) that
\begin{equation}
\label{eq:triple_intersection1}
B_I(\Phi_z^{-1}(x_0),r-\OO(r^2)) \subset \Phi_z^{-1}(B(x_0,r))\subset B_I(\Phi_z^{-1}(x_0),r+\OO(r^2)).
\end{equation}
The intersection of all sets in~\eqref{eq:triple_intersection1} with $L_1 \cup L_2 = \Phi_z^{-1}(S_1 \cup S_2)$ and the definitions of $H_1$, $H_2$ and $H'$ result in the set inequality
\begin{equation}
\label{eq:triple_intersection2}
H_1 \subset H' \subset H_2.
\end{equation}
Thus,
\begin{equation}\label{sec:interarea:equ1}
H\setminus H'\subset H_2\setminus H',\quad H'\setminus H\subset H'\setminus H_1.
\end{equation}
By first applying~\eqref{sec:interarea:equ1} (or its consequence $(H_2\setminus H')\cup (H'\setminus H_1)=H_2 \setminus H_1$) and then direct estimates (whose details are excluded) we obtain that
\begin{equation}\label{sec:interarea:equ2}
\mu_{ES}((H_2\setminus H')\cup (H'\setminus H_1)) = \mu_{ES}(H_2\setminus H_1) = \OO(r)\mu_{ES}(H_1).
\end{equation}
Finally, \eqref{equ:interarea} follows from~\eqref{sec:interarea:equ1} and~\eqref{sec:interarea:equ2}.

\subsection{Proof of~\eqref{equ:proof:DiffS5:1}}
\label{sec:DistInNeighbor}

Denote by $l(t)$ the parameterized line segment in $T_{x_1}M$ connecting $l(0)=\mb{x}$ and $l(1)=\mb{y}$, where $\mb{x}$ and $\mb{y}$ are specified in~\eqref{equ:proof:DiffS5:1}.
We note that
\begin{align}\label{equ:proof:DiffS5:2}
\dist_g(\mb{x},\mb{y}) &=\int_0^1 \sqrt{l'(t)^T g(l(t))l'(t)}dt=\int_0^1 \sqrt{l'(t)^T (I+\OO(R^2))l'(t)}dt \nonumber \\
&=\dist_E(\mb{x},\mb{y})+\OO(R^2)\dist_E(\mb{x},\mb{y}).
\end{align}
Equation~\eqref{equ:proof:DiffS5:2} clearly implies~\eqref{equ:proof:DiffS5:1}, where $C_4>0$ depends only on the Riemannian manifold $M$.

\subsection{Proof of \eqref{equ:DiffS7}}\label{proof:equ:DiffS7}

We first claim that for any $\alpha>0$
\begin{equation}\label{eq:proof:equ:DiffS7:1}
\sin(\theta_{\max}(T_{x_1}^ES_1,T_{x_1}S_1)) \leq
\|\mb{P}_{T_{x_1}^ES_1}-\mb{P}_{T_{x_1}
  S_1}\|<\frac{\sqrt{2}\|\mb{C}_{x_1}-\frac{\alpha r^2}{d+2}\mb{P}_{T_{x_1}
    S_1}\|}{\frac{\alpha r^2}{d+2}}.
\end{equation}
The first inequality of~\eqref{eq:proof:equ:DiffS7:1} follows from Lemma~15 in~\citet{LocalPCA}.
Whereas the second inequality follows from the
Davis-Kahan Theorem~\citep{1970SJNA....7....1D}.

For the rest of the proof we upper bound the RHS of~\eqref{eq:proof:equ:DiffS7:1}.
We work
in the tangent space $T_z M$, where $z$ is defined as
\[
\displaystyle z=\argmin_{y\in S_1\cap S_2} \dist_g(x_1,y).
\]
Similarly as in the proof of Proposition~\ref{lemma:Intersection}, if argmin is not uniquely
defined, $z$ is arbitrarily chosen among all minimizers.
Let the composition map $\phi_{x_1}=\Phi_z^{-1}\circ \Phi_{x_1}$ be the
transition map from $T_{x_1} M$ to $T_z M$.
Note that $\phi_{x_1}$ maps the subspace $T_{x_1}S_1$ to another subspace $T_z
S_1$.
Let $\mb{R}_{x_1}(L_1)$ denote the image of $L_1$ in $T_z M$ under the
rotation matrix $\mb{R}_{x_1}$ (here we identify both $T_{x_1}M$ and $T_z M$
with $\R^D$ via their normal coordinate charts).
Using the new terminology the main term in the RHS of~\eqref{eq:proof:equ:DiffS7:1} can be expressed as follows
\begin{equation}\label{eq:proof:equ:DiffS7:7}
\|\mb{C}_{x_1}-\frac{\alpha r^2}{d+2}\mb{P}_{T_{x_1} S_1}\|
=
\|\mb{R}_{x_1}\mb{C}_{x_1}\mb{R}_{x_1}^T-\frac{\alpha
    r^2}{d+2}\mb{P}_{\mb{R}_{x_1}(T_{x_1} S_1)}\|.
\end{equation}

The RHS of~\eqref{eq:proof:equ:DiffS7:7} can be bounded by the triangle inequality and~\eqref{equ:IntersectionAdapt} as follows
\begin{align}\label{eq:proof:equ:DiffS7:4}
&\|\mb{R}_{x_1}\mb{C}_{x_1}\mb{R}_{x_1}^T-\frac{\alpha r^2}{d+2}\mb{P}_{\mb{R}_{x_1}(T_{x_1} S_1)}\|
\leq  \|\mb{R}_{x_1}\mb{C}_{x_1}\mb{R}_{x_1}^T -
\E_{\mu_{ES}}\mb{C}_{H'}\|
\nonumber \\
&+ \|\E_{\mu_{ES}}\mb{C}_{H'} - \frac{\alpha r^2}{d+2}\mb{P}_{T_z S_1}\|
+\|\frac{\alpha r^2}{d+2}\mb{P}_{T_z S_1} - \frac{\alpha r^2}{d+2}\mb{P}_{\mb{R}_{x_1}(T_{x_1} S_1)}\| \nonumber \\
& \leq  C'_S r^3 + \|\E_{\mu_{ES}}\mb{C}_{H'} - \frac{\alpha r^2}{d+2}\mb{P}_{T_z S_1}\| + \|\frac{\alpha r^2}{d+2}\mb{P}_{T_z S_1} - \frac{\alpha r^2}{d+2}\mb{P}_{\mb{R}_{x_1}(T_{x_1} S_1)}\| .
\end{align}

Next, we bound the last term in the RHS of~\eqref{eq:proof:equ:DiffS7:4}.
It follows from~\eqref{equ:intercovar4}, \eqref{eq:tyler_expansion}, \eqref{eq:A_ij_R}
(which implies \eqref{equ:intercovar3}) that for $\mb{y}=\phi_{x_1}(\mb{x})$
\begin{equation}\label{eq:proof:equ:DiffS7:2}
\|\mb{y}- \mb{b}_{x_1}-\mb{R}_{x_1}\mb{x}\|_2 \leq C'''_S r^2 \quad \forall \|\mb{x}\|_2 \leq r,
\end{equation}
where
\begin{equation}
\label{eq:use_f}
C'''_S=D \cdot f( 2 C_4\diam(M)((C'+1)^2+1)+2C''_S , D)+C''_S.
\end{equation}
It is immediate to see that $\mb{b}\in T_z S_1$ by letting $\mb{x}=\mb{0}$
in the Taylor's expansion.
If $\mb{v}\in
\mb{R}_{x_1}(T_{x_1} S_1)$ is a vector such that $\|\mb{v}\|_2=r$ and $\theta(\mb{v},
T_z S_1)=\theta_{\max}(\mb{R}_{x_1}(T_{x_1} S_1), T_z S_1)$,
then~\eqref{eq:proof:equ:DiffS7:2} and the fact that
$\phi_{x_1}(\mb{R}_{x_1}^{-1}\mb{v})-\mb{b}\in T_z S_1$ imply that $\dist(\mb{v}, T_z S_1) \leq
C'''_S r^2$. Consequently,
\begin{equation}
\label{eq:last_maybe}
\| \mb{P}_{\mb{R}_{x_1}(T_{x_1} S_1)}-\mb{P}_{T_z S_1} \|=\displaystyle
\sin(\theta_{\max}(\mb{R}_{x_1}(T_{x_1} S_1), T_z S_1)) = \frac{\dist(\mb{v},
  T_z S_1)}{\|\mb{v}\|_2} \leq C'''_S r.
\end{equation}

If $\alpha_0 = (1+(1-\delta^2(x_1))_+^{d/2})^{-1}$ (the same as in Lemma~21 of~\citet{LocalPCA}), then the argument in~\citet[page 41]{LocalPCA} shows that
\begin{equation}\label{eq:proof:equ:DiffS7:5}
\|\E_{\mu_{ES}}\mb{C}_{H'} - \frac{\alpha_0 r^2}{d+2}\mb{P}_{T_z S_1}\| \leq 2 C_2^{\frac{d}{2}} \eta^{\frac{d}{d+2}} r^2.
\end{equation}
Inequalities~\eqref{eq:proof:equ:DiffS7:4} (with $\alpha=\alpha_0$), \eqref{eq:last_maybe} (with $\alpha=\alpha_0$) and~\eqref{eq:proof:equ:DiffS7:5} imply that
\begin{equation}\label{eq:proof:equ:DiffS7:6}
\|\mb{R}_{x_1}\mb{C}_{x_1}\mb{R}_{x_1}^T-\frac{\alpha_0 r^2}{d+2}\mb{P}_{\mb{R}_{x_1}(T_{x_1} S_1)}\| \leq C'_S r^3+2 C_2^{\frac{d}{2}} \eta^{\frac{d}{d+2}} r^2 + \frac{C'''_S \alpha_0}{d+2} r^3.
\end{equation}
Plugging~\eqref{eq:proof:equ:DiffS7:7} (with $\alpha=\alpha_0$) and~\eqref{eq:proof:equ:DiffS7:6} in~\eqref{eq:proof:equ:DiffS7:1}  (with $\alpha=\alpha_0$) and applying the fact that $\frac{1}{2}\leq \alpha_0 \leq 1$ yield
\begin{equation}\label{eq:proof:equ:DiffS7:8}
\sin(\theta_{\max}(T_{x_1}^ES_1,T_{x_1}S_1)) < 2\sqrt{2}(d+2)(C'_S r+2 C_2^{\frac{d}{2}} \eta^{\frac{d}{d+2}} + \frac{C'''_S}{d+2} r).
\end{equation}
Let
\begin{equation}
\label{eq:def_c3prime}
C'_3 = 2\sqrt{2}(d+2)\max(2 C_2^{\frac{d}{2}}, C'_S+\frac{C'''_S}{d+2}),
\end{equation}
then~\eqref{equ:DiffS7} clearly follows from~\eqref{eq:proof:equ:DiffS7:8} and~\eqref{eq:def_c3prime}.

\subsection{Proof of~\eqref{equ:DiffS8}}\label{proof:equ:DiffS8}
We prove\eqref{equ:DiffS8}, while generalizing the setting to work with two subspaces $L_1$, $L_2$ and a line $l$. Let
$\angle(l, L_1)=\theta_1$ and $\angle(l, L_2)=\theta_2$. Assume that
\begin{equation}
\label{eq:theta_alpha}
\theta_1\leq \alpha
\end{equation}
for an arbitrarily fixed $0 <\alpha< \pi/2$. We use the fact that $\sin(\theta)$ is
a concave function. If $\theta_2 >
\alpha$, then
\begin{equation}\label{equ:proof:DiffS8:equ2}
\displaystyle \frac{1-\sin(\alpha)}{\pi/2-\alpha} \leq
\frac{\sin(\theta_2)-\sin(\alpha)}{\theta_2-\alpha} <
\frac{\sin(\theta_2)-\sin(\theta_1)}{\theta_2-\theta_1}.
\end{equation}
On the other hand, the fact that $\sin^{-1}(x)$ is a Lipschitz function over the
interval $[0, \sin(\alpha)]$ implies that if $\theta_2 \leq \alpha$,
\begin{equation}\label{equ:proof:DiffS8:equ3}
\displaystyle |\theta_2-\theta_1| \leq \frac{1}{\cos(\alpha)}
|\sin(\theta_2)-\sin(\theta_1)| \quad \text{for $\theta_1,\theta_2\in
  [0,\alpha]$.}
\end{equation}
Equation~\eqref{equ:proof:DiffS8:equ2} and~\eqref{equ:proof:DiffS8:equ3} imply that
\begin{equation}\label{equ:proof:DiffS8:equ4}
\displaystyle |\theta_2-\theta_1| \leq \max\left(\frac{\pi/2-\alpha}{1-\sin(\alpha)}, \frac{1}{\cos(\alpha)}\right) |\sin(\theta_2)-\sin(\theta_1)|.
\end{equation}

If $\alpha=\pi/6$, then~\eqref{equ:proof:DiffS8:equ4} and Lemma~3.2
of~\citet{lerman.zhang.lp.recovery.14} lead to the inequality
\begin{equation}
\displaystyle |\theta_2-\theta_1| \leq \frac{2\pi\sqrt{d}}{3}\sin(\theta_{\max}(L_1, L_2)).
\end{equation}
Thus,
\begin{equation}\label{equ:proof:DiffS8.equ5}
\theta_1 \geq \theta_2-\frac{2\pi\sqrt{d}}{3}\sin(\theta_{\max}(L_1, L_2))
\end{equation}
as long as \eqref{eq:theta_alpha} holds.
If \eqref{eq:theta_alpha} is not assume, \eqref{equ:proof:DiffS8.equ5} can be replaced with
\begin{equation}
\theta_1 \geq \min(\theta_2-\frac{2\pi\sqrt{d}}{3}\sin(\theta_{\max}(L_1, L_2)), \pi/6) \quad \forall \theta_1\in[0,\pi/2],
\end{equation}
which translates to \eqref{equ:DiffS8}.

\newpage

\end{document}